\newcommand{\Rad}{\mathcal{R}}
\newcommand{\Dcal}{\mathcal{D}}
\newcommand{\Hcal}{\mathcal{H}}
\newcommand{\Mcal}{\mathcal{M}}
\newcommand{\Regret}{\mathrm{Regret}}
\DeclareMathOperator{\Tr}{\mathrm{Tr}}
\newenvironment{customthm}[1]
  {\innercustomthm}
  {\endinnercustomthm}
\def\norm#1{\mathopen\| #1 \mathclose\|}
\newcommand{\ignore}[1]{}
\def\bold0{\mathbf{0}}
\def\eps{\varepsilon}
\def\epsilon{\varepsilon}
\newcommand{\defeq}{\stackrel{\text{def}}{=}}
\newcommand{\pa}[1]{\left(#1\right)}
\newcommand{\ang}[1]{\left<#1\right>}
\newcommand{\bra}[1]{\left[#1\right]}
\newcommand{\abs}[1]{\left|#1\right|}
\newcommand{\mat}[1]{\begin{matrix}#1\end{matrix}}
\newcommand{\bmat}[1]{\bra{\mat{#1}}}
\DeclareMathOperator{\argmin}{argmin}
\DeclareMathOperator*{\E}{\mathbb{E}}
\newcommand{\R}{\mathbb{R}}
\newtheorem{theorem}{Theorem}[section]
\newtheorem{corollary}[theorem]{Corollary}
\newtheorem{definition}[theorem]{Definition}
\newtheorem{lemma}[theorem]{Lemma}
\newtheorem{proposition}[theorem]{Proposition}
\newcommand{\punt}[1]{}
\title{Learning Linear Dynamical Systems via Spectral Filtering}
\author{Elad Hazan \qquad Karan Singh \qquad Cyril Zhang \\
\\
Department of Computer Science\\
Princeton University\\
Princeton, NJ 08540 \\
\texttt{\{ehazan, karans, cyril.zhang\}@cs.princeton.edu} \\
}
\begin{document} 

\maketitle


\begin{abstract}
We present an efficient and practical algorithm for the online prediction of discrete-time linear dynamical systems with a symmetric transition matrix. We circumvent the non-convex optimization problem using improper learning: carefully overparameterize the class of LDSs by a polylogarithmic factor, in exchange for convexity of the loss functions.
From this arises a polynomial-time algorithm with a near-optimal regret guarantee, with an analogous sample complexity bound for agnostic learning.
Our algorithm is based on a novel filtering technique, which may be of independent interest: we convolve the time series with the eigenvectors of a certain Hankel matrix. 

\end{abstract}


\section{Introduction}

Linear dynamical systems (LDSs) are a class of state space models which accurately model many phenomena in nature and engineering, and are applied ubiquitously in time-series analysis, robotics, econometrics, medicine, and meteorology. In this model, the time evolution of a system is explained by a linear map on a finite-dimensional hidden state, subject to disturbances from input and noise. Recent interest has focused on the effectiveness of recurrent neural networks (RNNs), a nonlinear variant of this idea, for modeling sequences such as audio signals and natural language.

Central to this field of study is the problem of \emph{system identification}: given some sample trajectories, output the parameters for an LDS which generalize to predict unseen future data. Viewed directly, this is a non-convex optimization problem, for which efficient algorithms with theoretical guarantees are very difficult to obtain. A standard heuristic for this problem is expectation-maximization (EM), which can find poor local optima in theory and practice.  

We consider a different approach: we formulate system identification as an online learning problem, in which neither the data nor predictions are assumed to arise from an LDS. Furthermore, we slightly overparameterize the class of predictors, yielding an online convex program amenable to efficient regret minimization. This carefully chosen relaxation, which is our main theoretical contribution, expands the dimension of the hypothesis class by only a polylogarithmic factor. This construction relies upon recent work on the spectral theory of Hankel matrices.

The result is a simple and practical algorithm for time-series prediction, which deviates significantly from existing methods.
We coin the term {\it wave-filtering} for our method, in reference to our relaxation's use of convolution by wave-shaped eigenvectors.
We present experimental evidence on both toy data and a physical simulation, showing our method to be competitive in terms of predictive performance, more stable, and significantly faster than existing algorithms.

\subsection{Our contributions}

Consider a discrete-time linear dynamical system with inputs $\{ x_t \}$, outputs $\{ y_t \}$, and a latent state $\{ h_t \}$, which can all be multi-dimensional.
With noise vectors $\{\eta_t\}, \{\xi_t\}$, the system's time evolution is governed by the following equations:
\begin{align*}
h_{t+1} &= A h_t + B x_t + \eta_t \\
y_t &= C h_t + D x_t + \xi_t.
\end{align*}
If the dynamics $A,B,C,D$ are known, then the Kalman filter \cite{kalman1960new} is known to estimate the hidden state optimally under Gaussian noise, thereby producing optimal predictions of the system's response
to any given input. However, this is rarely the case -- indeed, real-world systems are seldom purely linear, and rarely are their evolution matrices known. 

We henceforth give a provable, efficient algorithm for the prediction of sequences arising from an unknown dynamical system as above, in which the matrix $A$ is symmetric. Our main theoretical contribution is a regret bound for this algorithm,
giving nearly-optimal convergence to the lowest mean squared prediction error (MSE)
realizable by a symmetric LDS model:

\begin{customthm}{1}[Main regret bound; informal]
\label{thm:main-online}
On an arbitrary sequence $\{ (x_t, y_t) \}_{t=1}^T$, Algorithm~\ref{alg:wave-ogd} makes predictions $\{ \hat y_t \}_{t=1}^T$ which satisfy
\[ \mathrm{MSE}(\hat y_1, \ldots, \hat y_T) - \mathrm{MSE}(\hat y_1^*, \ldots, \hat y_T^*) \;\leq\; \tilde O\pa{ \frac{\mathrm{poly}(n,m,d,\log T)}{\sqrt{T}} },\]
compared to the best predictions $\{y_t^*\}_{t=1}^T$ by a symmetric LDS, while running in polynomial time.
\end{customthm}

Note that the signal need not be generated by an LDS, and can even be \emph{adversarially} chosen. In the less general batch (statistical) setting, we use the same techniques to obtain an analogous sample complexity bound for agnostic learning:
\begin{customthm}{2}[Batch version; informal]
\label{thm:main-batch}
For any choice of $\eps > 0$,
given access to an arbitrary distribution $\Dcal$
over training sequences $\{ (x_t, y_t) \}_{t=1}^T$,
Algorithm~\ref{alg:wave-batch}, run on $N$ i.i.d. sample trajectories from $\Dcal$,
outputs a predictor $\hat \Theta$ such that
\[ \E_\Dcal \bigg[ \mathrm{MSE}( \hat \Theta ) - \mathrm{MSE}( \Theta^* ) \bigg] \;\leq\; \eps + \frac{ \tilde O\pa{ \mathrm{poly}(n,m,d, \log T, \log 1/\eps) } }{\sqrt{N}},
\]
compared to the best symmetric LDS predictor $\Theta^*$, while running in polynomial time.
\end{customthm}

Typical regression-based methods require the LDS to be \emph{strictly} stable, and degrade on ill-conditioned systems; they depend on a spectral radius parameter $\frac{1}{1-\norm{A}}$. Our proposed method of \emph{wave-filtering} provably and empirically works even for the hardest case of $\norm{A} = 1$. Our algorithm attains the first condition number-independent polynomial guarantees in terms of regret (equivalently, sample complexity) and running time for the MIMO setting. Interestingly, our algorithms never need to learn the hidden state, and our guarantees can be sharpened to handle the case when the dimensionality of $h_t$ is infinite.



\subsection{Related work}
The modern setting for LDS arose in the seminal work of Kalman \cite{kalman1960new}, who introduced the Kalman filter as a recursive least-squares solution for maximum likelihood estimation (MLE) of Gaussian perturbations to the system. The framework and filtering algorithm have proven to be a mainstay in control theory and time-series analysis; indeed, the term \emph{Kalman filter model} is often used interchangeably with LDS. We refer the reader to the classic survey \cite{ljung1998system}, and the extensive overview of recent literature in \cite{hardt2016gradient}. 

Ghahramani and Roweis \cite{roweis1999unifying} suggest using the EM algorithm to learn the parameters of an LDS. This approach, which directly tackles the non-convex problem, is widely used in practice \cite{icml2010_Martens10a}. However, it remains a long-standing challenge to characterize the theoretical guarantees afforded by EM. We find that it is easy to produce cases where EM fails to identify the correct system.

In a recent result of \cite{hardt2016gradient}, it is shown for the first time that for a restricted class of systems, gradient descent (also widely used in practice, perhaps better known in this setting as backpropagation) guarantees polynomial convergence rates and sample complexity in the batch setting. Their result applies essentially only to the SISO case (vs. multi-dimensional for us), depends polynomially on the spectral gap (as opposed to no dependence for us), and requires the signal to be created by an LDS (vs. arbitrary for us). 



\section{Preliminaries}

\subsection{Linear dynamical systems}
Many different settings have been considered, in which the definition of an LDS takes on many variants. We are interested in discrete time-invariant MIMO (multiple input, multiple output) systems with a finite-dimensional hidden state.\footnote{We assume finite dimension for simplicity of presentation. However, it will be evident that hidden-state dimension has no role in our algorithm, and shows up as $\norm{B}_F$ and $\norm{C}_F$ in the regret bound.} Formally, our model is given as follows:
\begin{definition}
A linear dynamical system (LDS) is a map from a sequence of input vectors $x_1, \ldots, x_T \in \R^n$ to output (response) vectors $y_1, \ldots, y_T \in \R^m$ of the form
\begin{align}
h_{t+1} &= A h_t + B x_t + \eta_t \\
y_t &= C h_t + D x_t + \xi_t,
\end{align}
where $h_0, \ldots, h_T \in \R^d$ is a sequence of hidden states, $A,B,C,D$ are matrices of appropriate dimension, and $\eta_t \in \R^d, \xi_t \in \R^m$ are (possibly stochastic) noise vectors.
\end{definition}

Unrolling this recursive definition gives the \emph{impulse response function}, which uniquely determines the LDS. For notational convenience, for invalid indices $t \leq 0$, we define $x_t$, $\eta_t$, and $\xi_t$ to be the zero vector of appropriate dimension. Then, we have:
\begin{equation}
\label{eqn:lds-impulse-response}
y_t = \sum_{i=1}^{T-1} C A^i \pa{ B x_{t-i} + \eta_{t-i} } + CA^t h_0 + D x_t + \xi_t.
\end{equation}
We will consider the (discrete) time derivative of the impulse response function, given by expanding $y_{t-1} - y_t$ by Equation~(\ref{eqn:lds-impulse-response}).
For the rest of this paper, we focus our attention on systems subject to the following restrictions:
\begin{enumerate}
\item[(i)] The LDS is \emph{Lyapunov stable}: $\norm{A}_2 \leq 1$, where $\norm{\cdot}_2$ denotes the operator (a.k.a. spectral) norm.
\item[(ii)] The transition matrix $A$ is symmetric and positive semidefinite.\footnote{The psd constraint on $A$ can be removed by augmenting the inputs $x_t$ with extra coordinates $(-1)^t (x_t)$. We omit this for simplicity of presentation. }
\end{enumerate}

The first assumption is standard: when the hidden state is allowed to blow up exponentially, fine-grained prediction is futile. In fact, many algorithms only work when $\norm{A}$ is \emph{bounded away} from $1$, so that the effect of any particular $x_t$ on the hidden state (and thus the output) dissipates exponentially. We do not require this stronger assumption.

We take a moment to justify assumption (ii), and why this class of systems is still expressive and useful. First, symmetric LDSs constitute a natural class of linearly-observable, linearly-controllable systems with dissipating hidden states (for example, physical systems with friction or heat diffusion). Second, this constraint has been used successfully for video classification and tactile recognition tasks \cite{huang2016sparse}. Interestingly, though our theorems require symmetric $A$, our algorithms appear to tolerate some non-symmetric (and even nonlinear) transitions in practice.

\subsection{Sequence prediction as online regret minimization}
\label{subsection:prelim-oco}
A natural formulation of system identification is that of \emph{online sequence prediction}. At each time step $t$, an online learner is given an input $x_t$, and must return a predicted output $\hat y_t$. Then, the true response $y_t$ is observed, and the predictor suffers a squared-norm loss of $\norm{ y_t - \hat y_t }^2$. Over $T$ rounds, the goal is to predict as accurately as the best LDS in hindsight.

Note that the learner is permitted to access the history of observed responses $\{y_1, \ldots, y_{t-1}\}$. Even in the presence of statistical (non-adversarial) noise, the fixed maximum-likelihood sequence produced by $\Theta = (A,B,C,D,h_0)$ will accumulate error linearly as $T$. Thus, we measure performance against a more powerful comparator, which fixes LDS parameters $\Theta$, and predicts $y_t$ by the previous response $y_{t-1}$ plus the derivative of the impulse response function of $\Theta$ at time $t$.

We will exhibit an online algorithm that can compete against the best $\Theta$ in this setting.
Let $\hat y_1, \ldots, \hat y_T$ be the predictions made by an online learner, and let $y_1^*, \ldots, y_T^*$ be the sequence of predictions, realized by a chosen setting of LDS parameters $\Theta$, which minimize total squared error. Then, we define regret by the difference of total squared-error losses:
\[ \Regret(T) \defeq \sum_{t=1}^T \norm{ y_t - \hat y_t }^2 - \sum_{t=1}^T \norm{ y_t - y^*_t }^2. \]
This setup fits into the standard setting of online convex optimization (in which a sublinear regret bound implies convergence towards optimal predictions), save for the fact that the loss functions are non-convex in the system parameters. Also, note that a randomized construction (set all $x_t = 0$, and let $y_t$ be i.i.d. Bernoulli random variables) yields a lower bound\footnote{This is a standard construction; see, e.g. Theorem~3.2 in \cite{OCObook}.} for any online algorithm: $\E\bra{ \Regret(T) } \geq \Omega ( \sqrt{T} )$.

To quantify regret bounds, we must state our scaling assumptions on the (otherwise adversarial) input and output sequences. We assume that the inputs are bounded: $\norm{x_t}_2 \leq R_x$. Also, we assume that the output signal is Lipschitz in time: $\norm{y_t - y_{t-1}}_2 \leq L_y$. The latter assumption exists to preclude pathological inputs where an online learner is forced to incur arbitrarily large regret. For a true noiseless LDS, $L_y$ is not too large; see Lemma~\ref{lem:true-lds-lipschitz} in the appendix.

We note that an optimal $\tilde O(\sqrt{T})$ regret bound can be trivially achieved in this setting by algorithms such as Hedge \cite{LitWar94}, using an exponential-sized discretization of all possible LDS parameters; this is the online equivalent of brute-force grid search. Strikingly, our algorithms achieve essentially the same regret bound, but run in polynomial time.

\subsection{The power of convex relaxations}
\label{subsection:convex-relaxations}
Much work in system identification, including the EM method, is concerned with explicitly finding the LDS parameters $\Theta = (A,B,C,D,h_0)$ which best explain the data. However, it is evident from Equation~\ref{eqn:lds-impulse-response} that the $C A^i B$ terms cause the least-squares (or any other) loss to be non-convex in $\Theta$. Many methods used in practice, including EM and subspace identification, heuristically estimate each hidden state $h_t$, after which estimating the parameters becomes a convex linear regression problem. However, this first step is far from guaranteed to work in theory or practice.

Instead, we follow the paradigm of improper learning: in order to predict sequences as accurately as the best possible LDS $\Theta^* \in \Hcal$, one need not predict strictly from an LDS. The central driver of our algorithms is the construction of a slightly larger hypothesis class $\hat\Hcal$, for which the best predictor $\hat \Theta^*$ is nearly as good as $\Theta^*$. Furthermore, we construct $\hat\Hcal$ so that the loss functions \emph{are} convex under this new parameterization. From this will follow our efficient online algorithm.

As a warmup example, consider the following overparameterization: pick some time window $\tau \ll T$, and let the predictions $\hat y_t$ be linear in the concatenation $[x_t, \ldots, x_{t-\tau}] \in \R^{\tau d}$. When $\norm{A}$ is bounded away from 1, this is a sound assumption.\footnote{This assumption is used in \emph{autoregressive models}; see Section~6 of \cite{hardt2016gradient} for a theoretical treatment.} However, in general, this approximation is doomed to either truncate longer-term input-output dependences (short $\tau$), or suffer from overfitting (long $\tau$). Our main theorem uses an overparameterization whose approximation factor $\eps$ is independent of $\norm{A}$, and whose sample complexity scales only as $\tilde O( \mathrm{polylog}(T, 1 / \eps) )$. 

\subsection{Low approximate rank of Hankel matrices}
\label{subsection:hankel-spectrum}
Our analysis relies crucially on the spectrum of a certain \emph{Hankel matrix}, a square matrix whose anti-diagonal stripes have equal entries (i.e. $H_{ij}$ is a function of $i+j$). An important example is the Hilbert matrix $H_{n,\theta}$, the $n$-by-$n$ matrix whose $(i,j)$-th entry is $\frac{1}{i+j+\theta}$. For example,
\[H_{3,-1} = \bmat{1&\sfrac{1}{2}&\sfrac{1}{3}\\\sfrac{1}{2}&\sfrac{1}{3}&\sfrac{1}{4}\\\sfrac{1}{3}&\sfrac{1}{4}&\sfrac{1}{5}}. \]

This and related matrices have been studied under various lenses for more than a century: see, e.g., \cite{hilbert1894beitrag,choi1983tricks}. A basic fact is that $H_{n,\theta}$ is a positive definite matrix for every $n \geq 1, \theta > -2$. The property we are most interested in is that the spectrum of a positive semidefinite Hankel matrix decays exponentially, a difficult result derived in \cite{beckermann2016singular} via Zolotarev rational approximations. We state these technical bounds in Appendix~\ref{appendix-section:hankel-properties}.


\section{The wave-filtering algorithm}

\label{section:alg}
Our online algorithm (Algorithm~\ref{alg:wave-ogd}) runs online projected gradient descent \cite{Zinkevich03} on the squared loss $f_t(M_t) \defeq \norm{y_t - \hat y_t(M_t)}^2$. Here, each $M_t$ is a matrix specifying a linear map from featurized inputs $\tilde X_t$ to predictions $\hat y_t$. Specifically, after choosing a certain bank of $k$ \emph{filters} $\{\phi_j\}$, $\tilde X_t \in \R^{nk + 2n + m}$ consists of convolutions of the input time series with each $\phi_j$ (scaled by certain constants), along with $x_{t-1}$, $x_t$, and $y_{t-1}$. The number of filters $k$ will turn out to be polylogarithmic in $T$.

The filters $\{\phi_j\}$ and scaling factors $\{\sigma_j^{1/4}\}$ are given by the top eigenvectors and eigenvalues of the Hankel matrix $Z_T \in \R^{T \times T}$, whose entries are given by
\[Z_{ij} := \frac{2}{(i+j)^3 - (i+j)}. \]

In the language of Section~\ref{subsection:convex-relaxations}, one should think of each $M_t$ as arising from an $\tilde O( \mathrm{poly}(m,n,d,\log T) )$-dimensional hypothesis class $\hat\Hcal$, which replaces the original $O((m+n+d)^2)$-dimensional class $\Hcal$ of LDS parameters $(A,B,C,D,h_0)$. Theorem~\ref{thm:main-appx-relaxation} gives the key fact that $\hat \Hcal$ approximately contains $\Hcal$.

\begin{algorithm}
\caption{Online wave-filtering algorithm for LDS sequence prediction}
\label{alg:wave-ogd}
\begin{algorithmic}[1]
\STATE Input: time horizon $T$, filter parameter $k$, learning rate $\eta$, radius parameter $R_M$.
\STATE Compute $\{(\sigma_j, \phi_j)\}_{j=1}^k$, the top $k$ eigenpairs of $Z_T$.
\STATE Initialize $M_1 \in \R^{m \times k'}$, where $k' \defeq nk + 2n + m$.
\FOR{$t = 1, \ldots, T$}
\STATE Compute $\tilde X \in \R^{k'}$, with first $nk$ entries $\tilde X_{(i,j)} := \sigma_j^{1/4} \sum_{u=1}^{T-1} \phi_j(u) x_{t-u}(i)$, followed by the $2n+m$ entries of $x_{t-1}$, $x_t$, and $y_{t-1}$.
\STATE Predict $\hat y_t := M_t \tilde X$.
\STATE Observe $y_t$. Suffer loss $\norm{y_t - \hat y_t}^2$.
\STATE Gradient update: $M_{t+1} \leftarrow M_t - 2 \eta (y_t - \hat y_t) \otimes \tilde X$.
\IF{$\norm{ M_{t+1} }_F \geq R_M$}
\STATE Perform Frobenius norm projection: $M_{t+1} \leftarrow \frac{R_M}{\norm{ M_{t+1} }_F} M_{t+1}$.
\ENDIF
\ENDFOR
\end{algorithmic}
\end{algorithm}

In Section~\ref{section:analysis}, we provide the precise statement and proof of Theorem~\ref{thm:main-online}, the main regret bound for Algorithm~\ref{alg:wave-ogd}, with some technical details deferred to the appendix.
We also obtain analogous sample complexity results for batch learning;
however, on account of some definitional subtleties,
we defer all discussion of the offline case, including the statement and proof of Theorem~\ref{thm:main-batch}, to Appendix~\ref{appendix-section:batch}.

We make one final interesting note here, from which the name \emph{wave-filtering} arises: when plotted coordinate-wise, our filters $\{\phi_j\}$ look like the vibrational modes of an inhomogeneous spring (see Figure~\ref{fig:filters-demo}). We provide some insight on this phenomenon (along with some other implementation concerns) in Appendix~\ref{appendix-section:extensions}. Succinctly: in the scaling limit, $(Z_T / \norm{Z_T}_2)_{T \rightarrow \infty}$ commutes with a certain second-order Sturm-Liouville differential operator $\Dcal$. This allows us to approximate filters with eigenfunctions of $\Dcal$, using efficient numerical ODE solvers.

\begin{figure}
  \centering
  \begin{subfigure}[b]{0.33\linewidth}
  \includegraphics[width=\linewidth,trim={10mm 5mm 5mm 5mm},clip]{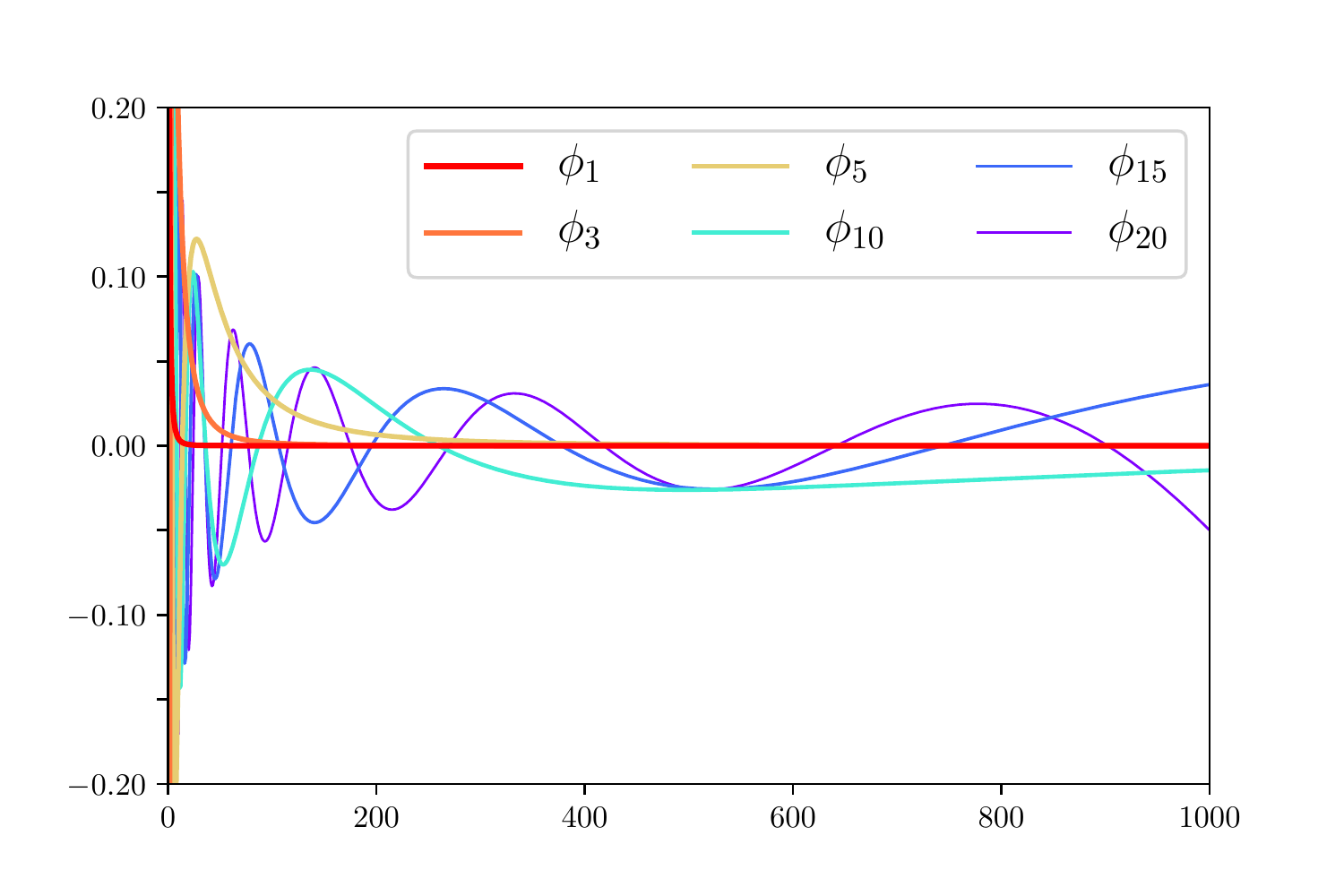}
  \caption{}
  \end{subfigure}%
  \begin{subfigure}[b]{0.33\linewidth}
  \includegraphics[width=\linewidth,trim={10mm 5mm 5mm 5mm},clip]{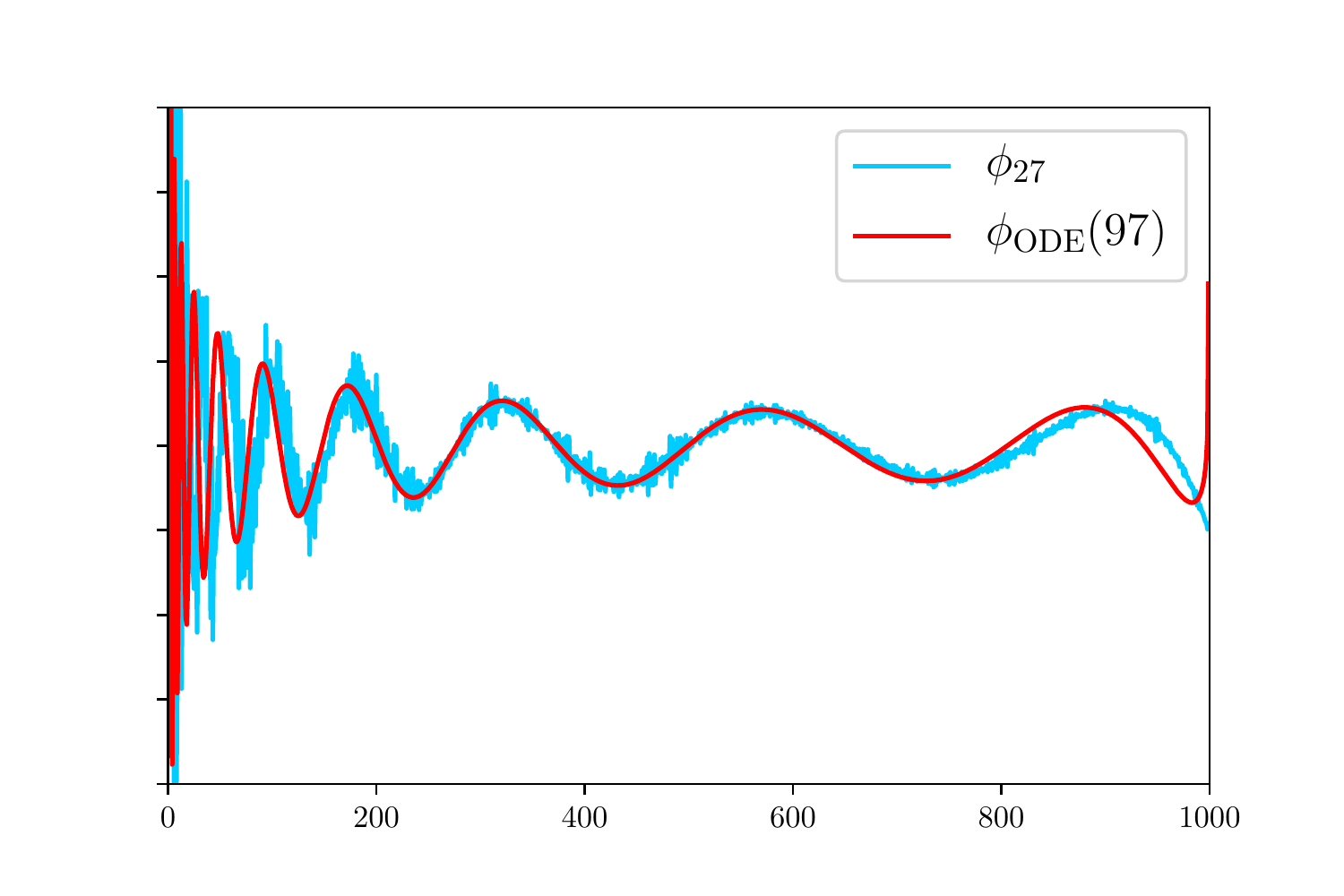}
  \caption{}
  \end{subfigure}%
  \begin{subfigure}[b]{0.33\linewidth}
  \includegraphics[width=\linewidth,trim={10mm 5mm 5mm 5mm},clip]{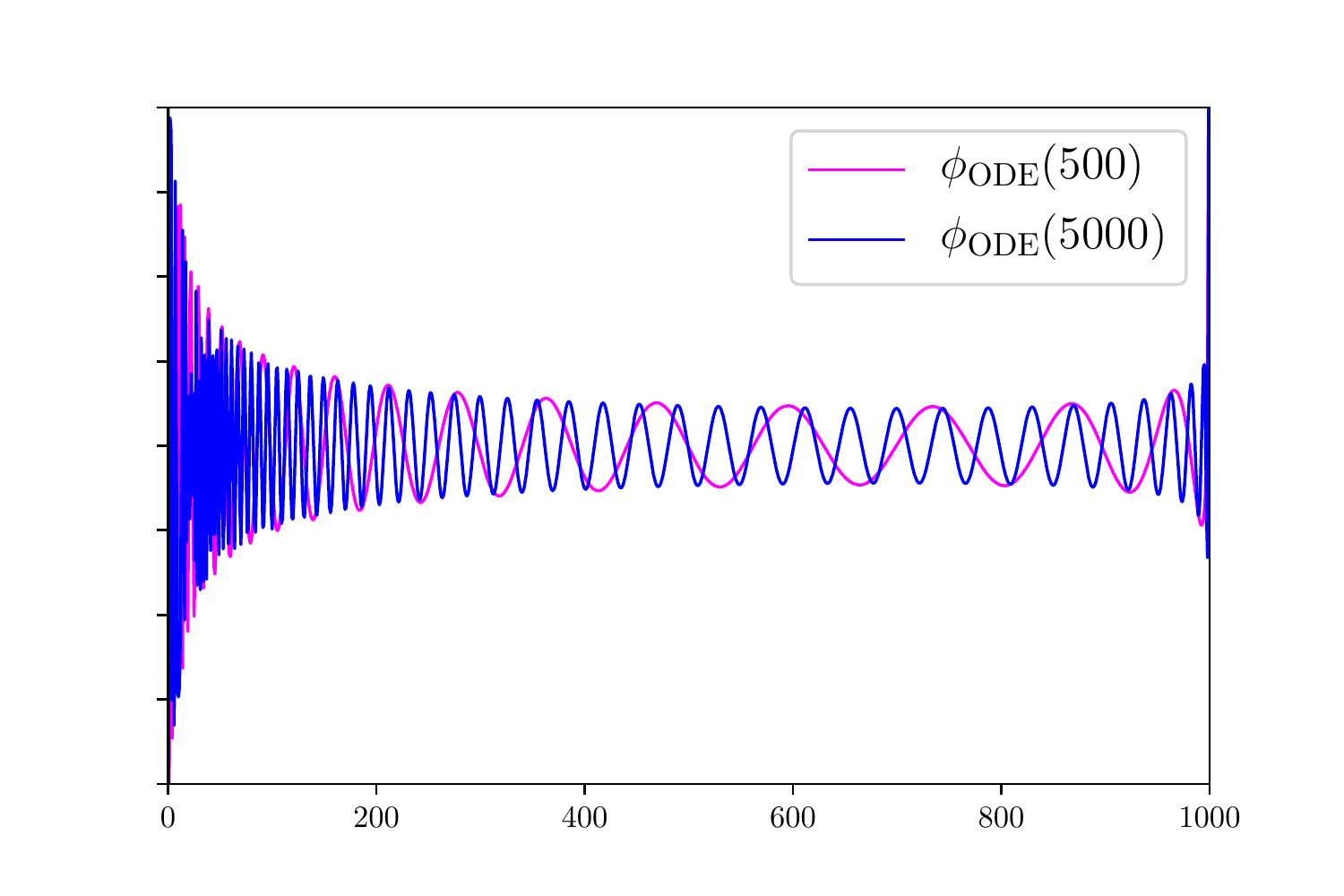}
  \caption{}
  \end{subfigure}

  \caption{(a) The entries of some typical eigenvectors of $Z_{1000}$, plotted coordinate-wise. (b) $\phi_{27}$ of $Z_{1000}$ ($\sigma_{27} \approx 10^{-16}$) computed with finite-precision arithmetic, along with a numerical solution to the ODE in Appendix~\ref{appendix-section:extensions}.1 with $\lambda = 97$. (c) Some very high-order filters, computed using the ODE, would be difficult to obtain by eigenvector computations. }
  \label{fig:filters-demo}
\end{figure}


\section{Analysis}

\label{section:analysis}

We first state the full form of the regret bound achieved by Algorithm~\ref{alg:wave-ogd}:\footnote{Actually, for a slightly tighter proof, we analyze a restriction of the algorithm which does not learn the portion $M^{(y)}$, instead always choosing the identity matrix for that block.}

\begin{customthm}{\ref{thm:main-online}}[Main]
On any sequence $\{ (x_t, y_t) \}_{t=1}^T$, Algorithm~\ref{alg:wave-ogd}, with a choice of $k = \Theta\pa{ \log^2 T \log (R_\Theta R_x L_y n) }$, $R_M = \Theta(R_\Theta^2 \sqrt{k})$,
and $\eta = \Theta( (R_x^2 L_y \, \log (R_\Theta R_x L_y n) \, n \sqrt{T} \log^4 T)^{-1} )$, achieves regret
\[ \Regret(T) \leq O\pa{ R_\Theta^4 \, R_x^2 \, L_y \, \log^2 (R_\Theta R_x L_y n) \cdot n \sqrt{T} \log^6 T }, \]
competing with LDS predictors $(A,B,C,D,h_0)$ with $0 \preccurlyeq A \preccurlyeq I$ and $\norm{B}_F, \norm{C}_F, \norm{D}_F, \norm{h_0} \leq R_\Theta$.
\end{customthm}

Note that the dimensions $m,d$ do not appear explicitly in this bound, though they typically factor into $R_\Theta$.
In Section~\ref{subsection:analysis-convex-relaxation}, we state and prove Theorem~\ref{thm:main-appx-relaxation}, the convex relaxation guarantee for the filters, which may be of independent interest. This allows us to approximate the optimal LDS in hindsight (the regret comparator) by the loss-minimizing matrix $M_t: \tilde X \mapsto \hat y_t$. In Section~\ref{subsection:analysis-low-regret}, we complete the regret analysis using Theorem~\ref{thm:main-appx-relaxation}, along with bounds on the diameter and gradient, to conclude Theorem~\ref{thm:main-online}.

Since the batch analogue is less general (and uses the same ideas), we defer discussion of Algorithm~\ref{alg:wave-batch} and Theorem~\ref{thm:main-batch} to Appendix~\ref{appendix-section:batch}.

\subsection{Approximate convex relaxation via wave filters}
\label{subsection:analysis-convex-relaxation}
Assume for now that $h_0 = 0$; we will remove this at the end, and see that the
regret bound is asymptotically the same.
Recall (from Section~\ref{subsection:prelim-oco})
that we measure regret compared to predictions obtained by adding the derivative of the impulse response function of an LDS $\Theta$ to $y_{t-1}$.
Our approximation theorem states that for any $\Theta$, there is some $M_{\Theta} \in \hat\Hcal$ which produces approximately the same predictions.
Formally:
\begin{customthm}{3}[Spectral convex relaxation for symmetric LDSs]
\label{thm:main-appx-relaxation}
Let $\{\hat y_t\}_{t=1}^T$ be the online predictions made by an LDS $\Theta = (A,B,C,D,h_0 = 0)$. Let $R_\Theta = \max \{ \norm{B}_F, \norm{C}_F, \norm{D}_F  \}$.
Then, for any $\eps > 0$, with a choice of $k = \Omega \pa{\log T \log (R_\Theta R_x L_y n T / \eps)}$, there exists an $M_\Theta \in \R^{m \times k'}$ such that
\[ \sum_{t=1}^T \norm{M_{\Theta} \tilde X_t - y_t}^2 \leq \sum_{t=1}^T \norm{\hat y_t - y_t}^2 + \eps. \]
Here, $k'$ and $\tilde X_t$ are defined as in Algorithm~\ref{alg:wave-ogd} (noting that $\tilde X_t$ includes the previous ground truth $y_{t-1}$).
\end{customthm}
\begin{proof}
We construct this mapping $\Theta \mapsto M_\Theta$ explicitly. Write $M_\Theta$ as the block matrix
\[\bmat{ M^{(1)} & M^{(2)} & \cdots & M^{(k)} & M^{(x')} & M^{(x)} & M^{(y)} },\]
where the blocks' dimensions are chosen to align with $\tilde X_t$, the concatenated vector
\[\bmat{ \sigma_1^{1/4}(X * \phi_1)_t & \sigma_2^{1/4}(X * \phi_2)_t & \cdots & \sigma_k^{1/4}(X * \phi_k)_t & \; x_{t-1} \; & \; x_t \; & \; y_{t-1} },\]
so that the prediction is the block matrix-vector product
\[ M_{\Theta} \tilde X_t = \sum_{j=1}^k \sigma_j^{1/4} M^{(j)} (X * \phi_j)_t + M^{(x')} x_{t-1} + M^{(x)} x_t + M^{(y)} y_{t-1}. \]
Without loss of generality, assume that $A$ is diagonal, with entries $\{\alpha_l\}_{l=1}^d$.\footnote{Write the eigendecomposition $A = U \Lambda U^T$. Then, the LDS with parameters $(\hat A, \hat B, \hat C, D, h_0) := (\Lambda, BU, U^T C, D, h_0)$ makes the same predictions as the original, with $\hat A$ diagonal.} Let $b_l$ be the $l$-th row of $B$, and $c_l$ the $l$-th column of $C$. Also, we define a continuous family of vectors $\mu: [0,1] \rightarrow \R^T$, with entries $\mu(\alpha)(i) = (\alpha_l - 1) \alpha_l^{i-1}$.
Then, our construction is as follows:
\begin{itemize}
\item $M^{(j)} = \sum_{l=1}^d \sigma_j^{-1/4} \ang{\phi_j,\mu(\alpha_l)} (c_l \otimes b_l)$, for each $1 \leq j \leq k$.
\item $M^{(x')} = -D,\quad M^{(x)} = CB + D,\quad M^{(y)} = I_{m\times m}$.
\end{itemize}
Below, we give the main ideas for why this $M_\Theta$ works, leaving the full proof to Appendix~\ref{appendix-section:approximate-relaxation}.

Since $M^{(y)}$ is the identity, the online learner's task is to predict the differences $y_t - y_{t-1}$ as well as the derivative $\Theta$, which we write here:
\begin{align}
\hat y_t - y_{t-1} &= (CB + D)x_t - Dx_{t-1} + \sum_{i=1}^{T-1} C(A^i - A^{i-1})Bx_{t-i} \nonumber \\
&= (CB + D)x_t - Dx_{t-1} + \sum_{i=1}^{T-1} C \pa{\sum_{l=1}^d \pa{ \alpha_l^i - \alpha_l^{i-1} } e_l \otimes e_l} Bx_{t-i} \nonumber \\
&= (CB + D)x_t - Dx_{t-1} + \sum_{l=1}^d (c_l \otimes b_l) \sum_{i=1}^{T-1} \mu(\alpha_l)(i) \, x_{t-i}. \label{eq:exact-m}
\end{align}
Notice that the inner sum is an inner product between each coordinate of the past inputs $(x_t, x_{t-1}, \ldots, x_{t-T})$ with $\mu(\alpha_l)$ (or a convolution, viewed across the entire time horizon). The crux of our proof is that one can approximate $\mu(\alpha)$ using a linear combination of the filters $\{\phi_j\}_{j=1}^k$. Writing $Z := Z_T$ for short, notice that
\[Z = \int_0^1 \mu(\alpha) \otimes \mu(\alpha) \, d\alpha, \]
since the $(i,j)$ entry of the RHS is
\[\int_0^1 (\alpha - 1)^2 \alpha^{i+j-2} \,d\alpha = \frac{1}{i+j-1} - \frac{2}{i+j} + \frac{1}{i+j+1} = Z_{ij}.\]
What follows is a spectral bound for reconstruction error, relying on the low approximate rank of $Z$:
\begin{lemma}
\label{lem:pca-lemma} Choose any $\alpha \in [0, 1]$. Let $\tilde\mu(\alpha)$ be the projection of $\mu(\alpha)$ onto the $k$-dimensional subspace of $\R^T$ spanned by $\{\phi_j\}_{j=1}^k$. Then,
\[\norm{\mu(\alpha) - \tilde\mu(\alpha)}^2 \leq \sqrt{ 6 \sum_{j=k+1}^T \sigma_{j} } \leq O\pa{ c_0^{-k/\log T} \sqrt{ \log T } },\]
for an absolute constant $c_0 > 3.4$.
\end{lemma}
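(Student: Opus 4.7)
The proof breaks into three independent pieces: an identity expressing the averaged squared error in terms of the tail of the spectrum of $Z$, a deterministic regularity argument upgrading this average bound to a pointwise one, and an invocation of the Hankel spectral-decay estimate of Beckermann--Townsend (cited from the appendix).

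\textbf{Step 1 (Expand in the eigenbasis and average).} I would first diagonalize: writing $\mu(\alpha) = \sum_{j=1}^T h_j(\alpha)\,\phi_j$ with $h_j(\alpha) \defeq \langle \phi_j, \mu(\alpha)\rangle$, the projection error becomes $f(\alpha) \defeq \|\mu(\alpha)-\tilde\mu(\alpha)\|^2 = \sum_{j>k} h_j(\alpha)^2$. Using the integral representation $Z = \int_0^1 \mu(\alpha)\otimes\mu(\alpha)\,d\alpha$ already established in the excerpt, the eigenvector identity $\phi_j^\tr Z \phi_j = \sigma_j$ yields $\int_0^1 h_j(\alpha)^2\,d\alpha = \sigma_j$ and hence
\[
\int_0^1 f(\alpha)\,d\alpha \;=\; \sum_{j>k}\sigma_j.
\]

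\textbf{Step 2 (From average to pointwise via a Lipschitz bound).} The core of the argument is the elementary fact that a nonnegative $L$-Lipschitz function on $[0,1]$ with integral $\leq S$ satisfies $\sup f \leq \sqrt{2LS}$ (a peak of height $M$ must enclose a triangular region of area $\geq M^2/(2L)$ under the graph, even after truncation at an endpoint). I would apply this with $S = \sum_{j>k}\sigma_j$ and $L$ an absolute constant, aiming for $L \leq 3$ to recover the stated $\sqrt{6\cdot}$. Differentiating,
\[
f'(\alpha) \;=\; 2\,\langle \mu(\alpha)-\tilde\mu(\alpha),\, \mu'(\alpha)\rangle,
\]
since $\tilde\mu'(\alpha)$ lies in $\myspan\{\phi_1,\dots,\phi_k\}$ while $\mu-\tilde\mu$ is orthogonal to it. Cauchy--Schwarz and $\|\mu-\tilde\mu\|\leq\|\mu\|$ give $|f'(\alpha)| \leq 2\,\|\mu(\alpha)\|\,\|\mu'(\alpha)\|$, and both factors admit explicit closed-form expressions. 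A direct calculation yields $\|\mu(\alpha)\|^2 \leq 1-\alpha$ and $\|\mu'(\alpha)\|^2 = O(1/(1-\alpha))$ uniformly in $T$, so the product is bounded by an absolute constant.

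\textbf{Step 3 (Plug in Hankel spectral decay).} For the second inequality I would invoke the Beckermann--Townsend bound stated in Appendix~\ref{appendix-section:hankel-properties} for the positive semidefinite Hankel matrix $Z$: the eigenvalues $\sigma_j$ decay as $\sigma_j \leq c\,\sigma_1\cdot\rho^{-j/\log T}$ for some $\rho>1$, with $\sigma_1 = O(\log T)$ coming from $\|Z\|_{op} \leq \trace(Z) = O(\log T)$. Summing a geometric tail and taking the square root yields the claimed $O(c_0^{-k/\log T}\sqrt{\log T})$ bound, for any $c_0$ slightly smaller than $\rho^{1/2}$; the constant $3.4$ is what one obtains by tracking the Zolotarev constant in the reference.

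\textbf{Anticipated obstacle.} Steps 1 and 3 are essentially bookkeeping on top of results already in hand. The substantive part is the Lipschitz estimate in Step 2, specifically pinning down the bound on $\|\mu(\alpha)\|\,\|\mu'(\alpha)\|$ tightly enough to land the constant $\sqrt{6}$ (the near-boundary case $\alpha\to 0$ is the tightest, where $\|\mu\|=1$ and $\|\mu'\|=\sqrt{2}$ balance against each other). If the clean constant $3$ proves delicate to extract, the proof could instead pay a slightly worse constant and absorb it into the $O(\cdot)$ in the second bound, since only the exponential rate matters for the downstream regret analysis.
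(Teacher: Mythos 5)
Your proposal follows the same three-part skeleton as the paper's proof in Appendix~C.1: the exact identity $\int_0^1 \norm{\mu(\alpha)-\tilde\mu(\alpha)}^2\,d\alpha = \sum_{j>k}\sigma_j$ obtained from $Z=\int_0^1\mu(\alpha)\otimes\mu(\alpha)\,d\alpha$ (Lemma~\ref{lem:vanilla-pca}), a Lipschitz argument upgrading the mean bound to a pointwise one via the triangle-area inequality $\sup f\le\sqrt{2LS}$ with $L=3$ (Lemma~\ref{lem:super-pca}), and the Hankel spectral tail bounds (Lemmas~\ref{lem:Z-decay} and~\ref{lem:Z-sum-decay}). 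Where you genuinely diverge is in how the Lipschitz constant of the residual is justified, and your version is the more careful one: the paper asserts that because $\norm{\mu(\alpha)}^2$ is $3$-Lipschitz and $\Pi_r$ is contractive, $\norm{r(\alpha)}^2$ is also $3$-Lipschitz, which is not a valid implication for squared norms in general (a curve of constant norm can have projections of rapidly varying norm). Your direct computation $f'(\alpha)=2\ang{\mu-\tilde\mu,\,\mu'}$, combined with $\norm{\mu(\alpha)}^2\le 1-\alpha$ and $\norm{\mu'(\alpha)}^2=O(1/(1-\alpha))$, is sound and in fact gives $|f'|\le 2\sqrt{2}<3$ uniformly in $T$, so the constant $\sqrt{6}$ survives. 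One correction to your Step~3: $\Tr(Z_T)=\sum_{i=1}^T \tfrac{2}{(2i)^3-2i}$ is bounded by an absolute constant (less than $3/4$), not $O(\log T)$; the $\log T$ in the tail sum comes instead from the geometric-series denominator $1-\rho^{-1/\log T}=\Theta(1/\log T)$, which is exactly the content of Lemma~\ref{lem:Z-sum-decay}. As written, your $\sigma_1=O(\log T)$ overestimate would yield $O(c_0^{-k/\log T}\log T)$ rather than the stated $O(c_0^{-k/\log T}\sqrt{\log T})$; with the corrected trace the bound comes out exactly as claimed.
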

By construction of $M^{(j)}$, $M_\Theta \tilde X_t$ replaces each $\mu(\alpha_l)$ in Equation~(\ref{eq:exact-m}) with its approximation $\tilde\mu(\alpha_l)$. Hence we conclude that
\begin{align*}
M_\Theta \tilde X_t &= y_{t-1} + (CB + D)x_t - Dx_{t-1} + \sum_{l=1}^d (c_l \otimes b_l) \sum_{i=1}^{T-1} \tilde\mu(\alpha_l)(i) \, x_{t-i}\\
&= y_{t-1} + (\hat y_t - y_{t-1}) + \zeta_t \quad = \; \hat y_t + \zeta_t,
\end{align*}
letting $\{\zeta_t\}$ denote some residual vectors arising from discarding the subspace of dimension $T-k$.
Theorem~\ref{thm:main-appx-relaxation} follows by showing that these residuals are small, using Lemma~\ref{lem:pca-lemma}: it turns out that $\norm{\zeta_t}$ is exponentially small in $k/\log T$, which implies the theorem.
\end{proof}

\subsection{From approximate relaxation to low regret}
\label{subsection:analysis-low-regret}
Let $\Theta^* \in \Hcal$ denote the best LDS predictor, and let $M_{\Theta^*} \in \hat \Hcal$
be its image under the map from Theorem~\ref{thm:main-appx-relaxation}, so that total squared error of predictions $M_{\Theta^*} \tilde X_t$ is within $\eps$ from that of $\Theta^*$. Notice that the loss functions $f_t(M) \defeq \norm{y_t - M \tilde X_t}^2$ are quadratic in $M$, and thus convex. Algorithm~\ref{alg:wave-ogd} runs online gradient descent~\cite{Zinkevich03} on these loss functions, with decision set $\Mcal \defeq \{ M \in \R^{m \times k'} \;\big|\; \norm{M}_F \leq R_M \}$. Let $D_\mathrm{max} := \sup_{M,M' \in \Mcal} \norm{M - M'}_F$ be the diameter of $\Mcal$, and $G_\mathrm{max} := \sup_{M \in \Mcal, \tilde X} \norm{\nabla f_t(M)}_F$ be the largest norm of a gradient. We can invoke the classic regret bound:
\begin{lemma}[e.g. Thm.~3.1 in \cite{OCObook}]
\label{thm:ogd-textbook-bound}
Online gradient descent, using learning rate $\frac{D_\mathrm{max}}{G_\mathrm{max}\sqrt{T}}$, has regret
\[ \Regret_\mathrm{OGD}(T) \defeq \sum_{t=1}^T f_t (M_t) - \min_{M \in \Mcal} \sum_{t=1}^T f_t (M) \; \leq \; 2G_\mathrm{max} D_\mathrm{max}\sqrt{T}. \]
\end{lemma}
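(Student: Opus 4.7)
The plan is to give the classical Zinkevich-style analysis: use convexity to upper bound the instantaneous regret by a linear term, then exploit the nonexpansiveness of Euclidean projection onto the convex set $\Mcal$ to get a telescoping bound on the sum of those linear terms.

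First I would write out the OGD update rule explicitly: $M_{t+1} = \Pi_\Mcal\bigl(M_t - \eta \nabla f_t(M_t)\bigr)$, where $\eta = D_\mathrm{max}/(G_\mathrm{max}\sqrt{T})$ and $\Pi_\Mcal$ is Frobenius-norm projection onto $\Mcal$ (which matches the projection step in Algorithm~\ref{alg:wave-ogd}, since $\Mcal$ is a Frobenius ball). Let $M^\star \in \argmin_{M\in\Mcal}\sum_t f_t(M)$. By convexity of each $f_t$,
\[
f_t(M_t) - f_t(M^\star) \;\le\; \langle \nabla f_t(M_t),\, M_t - M^\star\rangle_F,
\]
so it suffices to bound $\sum_t \langle \nabla f_t(M_t), M_t - M^\star\rangle_F$.

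Next I would expand $\|M_{t+1} - M^\star\|_F^2$. Since $M^\star \in \Mcal$ and Euclidean projection onto a convex set is nonexpansive,
\[
\|M_{t+1} - M^\star\|_F^2 \;\le\; \|M_t - \eta \nabla f_t(M_t) - M^\star\|_F^2 \;=\; \|M_t - M^\star\|_F^2 - 2\eta \langle \nabla f_t(M_t), M_t - M^\star\rangle_F + \eta^2 \|\nabla f_t(M_t)\|_F^2.
\]
Solving for the inner product and summing from $t = 1$ to $T$ gives a telescoping sum,
\[
\sum_{t=1}^T \langle \nabla f_t(M_t), M_t - M^\star\rangle_F \;\le\; \frac{\|M_1 - M^\star\|_F^2 - \|M_{T+1} - M^\star\|_F^2}{2\eta} + \frac{\eta}{2}\sum_{t=1}^T \|\nabla f_t(M_t)\|_F^2.
\]

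Finally I would plug in the definitions of $D_\mathrm{max}$ and $G_\mathrm{max}$: $\|M_1 - M^\star\|_F^2 \le D_\mathrm{max}^2$, drop the nonpositive term $-\|M_{T+1}-M^\star\|_F^2$, and use $\|\nabla f_t(M_t)\|_F \le G_\mathrm{max}$ to get $\Regret_\mathrm{OGD}(T) \le \frac{D_\mathrm{max}^2}{2\eta} + \frac{\eta T G_\mathrm{max}^2}{2}$. Substituting the stated $\eta = D_\mathrm{max}/(G_\mathrm{max}\sqrt{T})$ balances the two terms, each becoming $\tfrac{1}{2} D_\mathrm{max} G_\mathrm{max}\sqrt{T}$, for a total of at most $2 D_\mathrm{max} G_\mathrm{max}\sqrt{T}$ (the $2$ is slack, and is stated this way in the lemma). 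There is no real obstacle here: the only step worth flagging is that the inequality $\|M_{t+1}-M^\star\|_F^2 \le \|M_t - \eta \nabla f_t(M_t) - M^\star\|_F^2$ requires $\Mcal$ to be convex and $M^\star\in\Mcal$, both of which hold since $\Mcal$ is a Frobenius ball.
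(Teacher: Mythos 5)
Your proof is correct and is exactly the standard Zinkevich argument (convexity, nonexpansive projection, telescoping, balanced learning rate) that the paper invokes by citation to \cite{OCObook} rather than reproving; the only cosmetic point is that with the fixed step size the two balanced terms sum to $D_\mathrm{max}G_\mathrm{max}\sqrt{T}$, so the stated constant $2$ is indeed slack, as you note. Nothing further is needed.
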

To finish, it remains to show that $D_\mathrm{max}$ and $G_\mathrm{max}$ are small. In particular, since the gradients contain convolutions of the input by $\ell_2$ (not $\ell_1$) unit vectors, special care must be taken to ensure that these do not grow too quickly. These bounds are shown in Section~\ref{appendix-subsection:diameter-gradient}, giving the correct
regret of Algorithm~\ref{alg:wave-ogd} in comparison with the comparator $M^* \in \hat \Hcal$. By Theorem~\ref{thm:main-appx-relaxation}, $M^*$ competes arbitrarily closely with the best LDS in hindsight, concluding the theorem.

Finally, we discuss why it is possible to relax the earlier assumption $h_0 = 0$ on the initial hidden state. Intuitively, as more of the ground truth responses $\{y_t\}$ are revealed, the largest possible effect of the initial state decays. Concretely, in Section~\ref{appendix-subsection:hidden-state}, we prove that a comparator who chooses a nonzero $h_0$
can only increase the regret by an additive $\tilde O(\log^2 T)$ in the online setting.


\section{Experiments}
In this section, to highlight the appeal of our provable method, we exhibit two minimalistic cases where
traditional methods for system identification fail, while ours successfully learns the system.
Finally, we note empirically that our method seems not to degrade in practice on certain well-behaved nonlinear systems.
In each case, we use $k = 25$ filters, and a regularized follow-the-leader variant of Algorithm~\ref{alg:wave-ogd} (see Appendix~\ref{appendix-section:extensions}.2).

\begin{figure}
  \centering
  \begin{subfigure}{\textwidth}\centering
  \includegraphics[width=\textwidth]{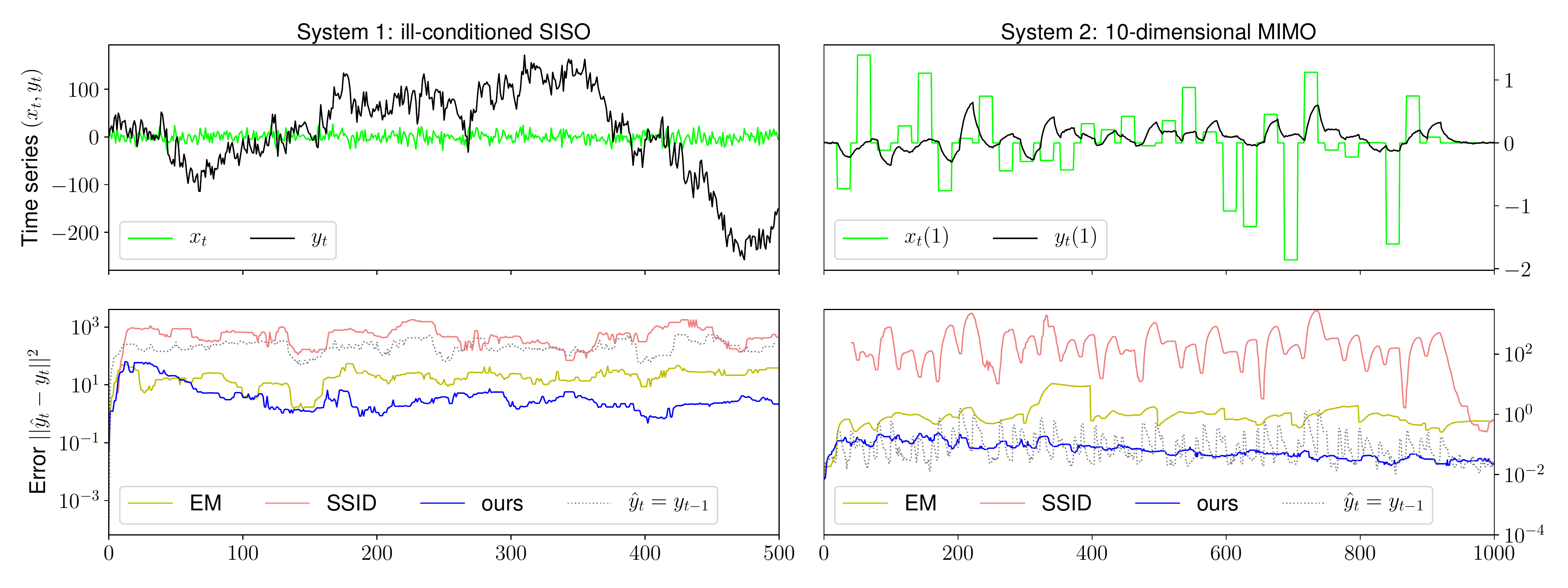}
  \caption{Two synthetic systems. For clarity, error plots are smoothed by a median filter. \emph{Left:} Noisy SISO system with a high condition number; EM and SSID finds a bad local optimum.
  \emph{Right:} High-dimensional MIMO system; other methods fail to learn any reasonable model of the dynamics.}
  \end{subfigure}

  \begin{subfigure}{\textwidth}\centering
  \includegraphics[width=\textwidth]{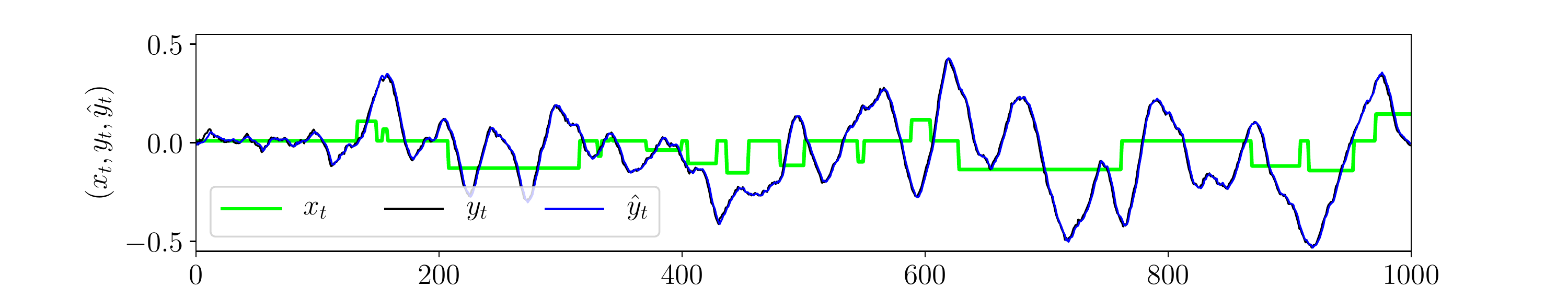}
  \caption{Forced pendulum, a physical simulation our method learns in practice, despite a lack of theory.}
  \end{subfigure}

  \caption{\small{ Visualizations of Algorithm~\ref{alg:wave-ogd}. All plots: \textcolor{blue}{blue} = ours, \textcolor[rgb]{0.7,0.7,0}{yellow} = EM, \textcolor{red}{red} = SSID, \textbf{black} = true responses, \textcolor{green}{green} = inputs, dotted lines = ``guess the previous output'' baseline. Horizontal axis is time. }}
\end{figure}

\subsection{Synthetic systems: two hard cases for EM and SSID}
We construct two difficult systems, on which we run either EM or subspace identification\footnote{Specifically, we use ``Deterministic Algorithm 1'' from page 52 of \cite{van2012subspace}.} (SSID), followed by Kalman filtering to obtain predictions.
Note that our method runs significantly ($>\kern-0.3em 1000$ times) faster than this traditional pipeline.

In the first example (Figure 2(a), left), we have a SISO system ($n=m=1$) and $d=2$; all $x_t$, $\xi_t$, and $\eta_t$ are i.i.d. Gaussians, and $B^\top = C = [1\;\;1], D = 0$. Most importantly, $A = \mathsf{diag}\pa{ [0.999,0.5] }$ is ill-conditioned, so that there are long-term dependences between input and output. Observe that although EM and SSID both find reasonable guesses for the system's dynamics, they turns out to be local optima. Our method learns to predict as well as the \emph{best possible} LDS.

The second example (Figure 2(a), right) is a MIMO system (with $n=m=d=10$), also with Gaussian noise. The transition matrix $A = \mathsf{diag}\pa{ [0, 0.1, 0.2, \ldots, 0.9] }$ has a diverse spectrum, the observation matrix $C$ has i.i.d. Gaussian entries, and $B = I_n, D = 0$. The inputs $x_t$ are random block impulses. This system identification problem is high-dimensional and non-convex; it is thus no surprise that EM and SSID  consistently fail to converge.

\subsection{The forced pendulum: a nonlinear, non-symmetric system}
We remark that although our algorithm has provable regret guarantees only for LDSs with symmetric transition matrices, it appears in experiments to succeed in learning some non-symmetric (even nonlinear) systems in practice, much like the unscented Kalman filter~\cite{wan2000unscented}. In Figure~2(b), we provide a typical learning trajectory for a forced pendulum, under Gaussian noise and random block impulses. Physical systems like this are widely considered in control and robotics, suggesting possible real-world applicability for our method.


\section{Conclusion}
We have proposed a novel approach for provably and efficiently learning linear dynamical systems. Our online \emph{wave-filtering} algorithm attains near-optimal regret in theory; and experimentally outperforms traditional system identification in both prediction quality and running time. Furthermore, we have introduced a ``spectral filtering'' technique for convex relaxation, which uses convolutions by eigenvectors of a Hankel matrix. We hope that this theoretical tool will be useful in tackling more general cases, as well as other non-convex learning problems.

\section*{Acknowledgments}
We thank Holden Lee and Yi Zhang for helpful discussions.
We especially grateful to Holden for a thorough reading of our manuscript, and for pointing out a way to tighten the result in Lemma~C.1.

\nocite{*}
\bibliography{main}
\bibliographystyle{alpha}

\newpage

\appendix

\section*{Guide to the Appendix}

\begin{itemize}
\item In Appendix~\ref{appendix-section:batch}, we present two formulations of the batch learning equivalent of the online algorithm, and derive Theorem~\ref{thm:main-batch}, a companion sample complexity bound.
\item In Appendix~\ref{appendix-section:extensions}, we discuss some variants of our online algorithm, and offer some tips for implementation. We also provide discussion on the connection
of our filters to eigenfunctions of a certain differential operator.
\item In Appendix~\ref{appendix-section:approximate-relaxation}, we prove the key approximate convex relaxation result (Theorem~\ref{thm:main-appx-relaxation}).
\item In Appendix~\ref{appendix-section:analysis-low-regret}, we complete the details for the proof sketch provided in Section~\ref{subsection:analysis-low-regret}, concluding the main theorem, the regret bound for the online algorithm. Importantly, we address the subtle issue of deriving upper bounds for the gradient and diameter of the decision set.
\item In Appendix~\ref{appendix-section:hankel-properties}, we derive explicit non-asymptotic bounds for quantities of interest pertaining to the Hankel matrix $Z$, notably spectral decay. Key results are adapted from \cite{beckermann2016singular}.
\item In Appendix~\ref{appendix-section:moment-curve}, we verify some easy-to-prove properties of the important vector $\mu(\alpha)$,
for sake of completeness.
\end{itemize}

\section{Batch variants of the algorithm}

\label{appendix-section:batch}

The online prediction setting is sensitive to permutation of the time series:
that is, the same LDS does not in general map 
$\{ x_{\sigma(1)}, \ldots, x_{\sigma(T)} \}$
to
$\{ y_{\sigma(1)}, \ldots, y_{\sigma(T)} \}$.
As such, one must take care when defining the batch case:
the output time series (and thus, loss functions) are correlated,
so it is not meaningful to assume that they are i.i.d. samples from a distribution.
Thus, our online regret bound, which concerns a single episode, does not translate directly. However, our convex relaxation technique still allows us to do efficient improper learning
with least-squares regression, giving interesting and novel statistical guarantees.
In this section, we provide two possible formulations of the batch setting,
along with accompanying theorems.

In both cases, it is most natural to fix an \emph{episode length} $T$,
and consider a rollout of the system $\{ (x_t, y_t) \}_{t=1}^T$ to be a single example.
For short, let $X_i \in \R^{Tn}$ denote the concatenated vector of inputs for a single example,
and $Y_i \in \R^{Tm}$ the concatenated responses.
The batch formulation is to learn the dynamics of the system using $N$ samples $\{(X_i, Y_i)\}$.
Recall that the samples satisfy $\norm{x_t}_2 \leq R_x$ and $\norm{y_{t}-y_{t-1}}_2 \leq L_y$.

Unlike in the online setting, it will be less confusing in the batch setting to measure
the \emph{mean} squared error of predictions, rather than the total squared error.
Thus, in this section, $\ell_{X,Y}(h)$ will always refer to mean squared error.
As well, to follow statistical learning conventions (for ease of reading), we use $h$ to denote
a hypothesis (an LDS) instead of $\Theta$; this is distinguished from the hidden state $h_t$.

\subsection{Learning the derivative: the direct analogue}

Throughout this subsection, assume that $h_0 = 0$.

As noted, the sequential prediction algorithm can be restricted so as to never
make updates to the submatrix $M^{(y)}$, keeping it to be the identity matrix.
Notice that all other features in $\tilde X$ consist of inputs $x_t$ and their convolutions.
In other words, we can take the view that the matrix $M_t$
can be used to predict the \emph{differences} $y_t - y_{t-1}$ between successive
responses, as a function of the entire (aligned) input time series $(x_t, x_{t-1}, \ldots, x_{t_T})$.

Thus, we can formulate a direct analogue for the online algorithm: learn the mapping from an input time series $X_i \in \R^{Tn}$ to the \emph{differences} $Y_i' \in \R^{Tm}$, the concatenation of all $y_t - y_{t-1}$. For this, we can use Theorem~\ref{thm:main-appx-relaxation} (the approximation result) directly, and obtain an improper agnostic learning guarantee.

Specifically, let $\Hcal$ be a subset of the hypothesis class of LDS parameters $\Theta = (A,B,C,D,h_0 = 0)$, subject to $\norm{B}_F, \norm{C}_F, \norm{D}_F \leq R_\Theta$, and choose any approximation tolerance $\eps > 0$.\footnote{The distinction between measuring total vs. mean squared error is hidden in the constant in front of the $\log T$
when choosing the number of filters $k$.}
Then, Theorem~\ref{thm:main-appx-relaxation} states that choosing $\hat \Hcal$ with $k = \Omega\pa{ \log T \log (R_\Theta R_x L_y n T / \eps) }$ ensures the $\eps$-approximate relaxation property. In the language of the batch setting: for each $h \in \Hcal$ which predicts on the sample $(X, Y')$ with a mean squared error $\ell_X(h)$, there is some $\hat h \in \hat\Hcal$ so that
\[ \ell_{X,Y} (h) \leq \ell_{X,Y} (\hat h) + \eps. \]

The choice of batch algorithm is clear, in order to mimic Algorithm~\ref{thm:main-online}:
run least-squares regression on $\tilde X$ and $Y$, where $\tilde X$ is the same featurization
of the inputs as used in the online algorithm. We describe this procedure fully in Algorithm~\ref{alg:wave-batch}.

\setcounter{algorithm}{1}
\begin{algorithm}
\caption{Offline wave-filtering algorithm for learning the derivative of an LDS}
\label{alg:wave-batch}
\begin{algorithmic}[1]
\STATE Input: $S = \{ (X_i,Y'_i) \}$, a set of $N$ training samples, each of length $T$; filter parameter $k$.
\STATE Compute $\{(\sigma_j, \phi_j)\}_{j=1}^k$, the top $k$ eigenpairs of $Z_T$.
\STATE Initialize matrices $\mathbf{X} \in \R^{(nk+2n) \times NT}, \mathbf{Y'} \in \R^{m \times NT}$.
\FOR{each sample $(X, Y')$}
\FOR{$t = 1,\ldots,T$}
\STATE Compute $\tilde X_t \in \R^{nk+2n}$, with first $nk$ entries $\tilde X_{(i,j)} := \sigma_j^{1/4} \sum_{u=1}^{T-1} \phi_j(u) x_{t-u}(i)$, followed by the $2n$ entries of $x_{t-1}$, $x_t$.
\STATE Append $(\tilde X_t, Y'_t)$ as new columns to the matrices $\mathbf{X}, \mathbf{Y'}$.
\ENDFOR
\ENDFOR
\RETURN least-squares solution $(\mathbf{X} \mathbf{X}^\top)^{\dagger} \mathbf{X}^\top \mathbf{Y'}$.
\end{algorithmic}
\end{algorithm}

\subsubsection{Generalization bound}
By definition, Algorithm~\ref{alg:wave-batch} minimizes the empirical MSE loss
on the samples; as such, we can derive a PAC-learning bound for regression.
We begin with some definitions and assumptions, so that we can state the theorem.

As in the statement of the online algorithm, as a soft dimensionality restriction, we constrain the comparator class $\Hcal$ to contain LDSs with parameters $\Theta = (A,B,C,D,h_0 = 0)$ such that $0 \preccurlyeq A \preccurlyeq I$ and $\norm{B}_F, \norm{C}_F, \norm{D}_F, \norm{h_0} \leq R_\Theta$.
For an empirical sample set $S$, let $\ell_S(h) = \frac{1}{|S|} \sum_{(X,Y) \in S} \ell_{X,Y}(h)$. Similarly, for a distribution $\Dcal$, let $\ell_\Dcal(h) = \E_{(X,Y) \sim \Dcal} [\ell_{X,Y}(h)]$.

Then, we are able to obtain a sample complexity bound:
\begin{customthm}{\ref{thm:main-batch}}[Generalization of the batch algorithm]
Choose any $\eps > 0$.
Let $S = \{ (X_i, Y'_i) \}_{i=1}^N$ be a set of i.i.d. training samples from a distribution $\Dcal$. Let $\hat h \defeq \argmin_{h \in \hat\Hcal} \ell_S(h)$ be the output of Algorithm~\ref{alg:wave-batch}, with a choice of $k = \Theta( \log T \, \log(R_\Theta R_x L_y n T / \eps) )$.
Let $h^* \defeq \argmin_{h^* \in \Hcal} \ell_\Dcal(h)$ be the true loss minimizer. Then,
with probability at least $1 - \delta$, it holds that
\[ \ell_\Dcal( \hat h ) - \min_{h \in \Hcal} \ell_\Dcal( h )
\leq \eps + \frac{ O\pa{ R_\Theta^4 R_x^2 L_y \, \log^2(R_\Theta R_x L_y n / \eps) \, n \log^6 T + \sqrt{\log 1/\delta} } }{\sqrt{N}}.
\]
\end{customthm}
\begin{proof}
Lemma~\ref{lem:wheres-the-matrix} shows that we can restrict $\hat \Hcal$ by a Frobenius norm bound:
\[ \norm{ M }_F \leq O\pa{ R_\Theta^2 \sqrt{k} }. \]
Thus, the empirical Rademacher complexity of $\hat \Hcal$ on $N$ samples, with this restriction, thus satisfies
\[ \Rad_N(\hat \Hcal) \leq O\pa{ \frac{ R_\Theta^2 R_x \sqrt{k} }{\sqrt{N}} }. \]

Also, no single prediction error (and thus neither the empirical nor population loss)
will exceed the upper bound
\[\ell_\mathrm{max} \defeq \Theta(R_\Theta^4 R_x^2 L_y^2 k).\]

Finally, the loss is $G_\mathrm{max}$-Lipschitz in the matrix $h$,
where $G_\mathrm{max}$ is the same upper bound for the gradient as
mentioned in Section~\ref{subsection:analysis-low-regret}.
Lemma~\ref{lem:grad-small}, states that this is bounded by
$O \pa{ R_\Theta^2 R_x^2 L_y \cdot nk^{3/2} \log^2 T }$.

With all of these facts in hand, a standard Rademacher complexity-dependent generalization bound holds in the improper hypothesis class $\hat\Hcal$ (see, e.g. \cite{bartlett2002rademacher}):
\begin{lemma}[Generalization via Rademacher complexity]
\label{lem:rademacher}
With probability at least $1 - \delta$, it holds that
\begin{align*}
\ell_\Dcal( \hat h ) - \ell_\Dcal( \hat h^* ) &\leq
G_\mathrm{max} \Rad_N(\hat\Hcal)
+ \ell_\mathrm{max} \sqrt{ \frac{8 \ln 2/\delta}{N} }
\end{align*}
\end{lemma}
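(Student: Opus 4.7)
The proof plan follows the classical ERM generalization template: decompose the excess risk into a uniform deviation, concentrate the deviation via bounded differences, and bound its expectation via symmetrization and Talagrand's contraction lemma. First I would use the identity
\[
\ell_\Dcal(\hat h) - \ell_\Dcal(\hat h^*) = \bigl[\ell_\Dcal(\hat h) - \ell_S(\hat h)\bigr] + \bigl[\ell_S(\hat h) - \ell_S(\hat h^*)\bigr] + \bigl[\ell_S(\hat h^*) - \ell_\Dcal(\hat h^*)\bigr].
\]
The middle bracket is nonpositive because $\hat h$ is the empirical risk minimizer over $\hat\Hcal$ and the comparator $\hat h^* \defeq \argmin_{h \in \hat\Hcal} \ell_\Dcal(h)$ also lies in $\hat\Hcal$. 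The two surviving brackets are each bounded by the uniform deviation $\Phi(S) \defeq \sup_{h \in \hat\Hcal} |\ell_\Dcal(h) - \ell_S(h)|$, so it suffices to show a high-probability bound of the desired form on $\Phi(S)$ (with the overall factor of $2$ absorbed into the constants).

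Next I would apply McDiarmid's inequality to $\Phi(S)$. Because each per-sample loss $\ell_{X_i,Y_i}(h)$ lies in $[0,\ell_\mathrm{max}]$, swapping any single one of the $N$ i.i.d.\ samples alters $\ell_S(h)$ by at most $\ell_\mathrm{max}/N$ uniformly in $h$, and hence alters $\Phi$ by at most $\ell_\mathrm{max}/N$. The bounded-differences inequality then gives, with probability at least $1-\delta$,
\[
\Phi(S) \;\leq\; \E[\Phi(S)] \;+\; \ell_\mathrm{max}\sqrt{\tfrac{\ln(2/\delta)}{2N}}.
\]
I would then control $\E[\Phi(S)]$ by standard ghost-sample symmetrization, obtaining $\E[\Phi(S)] \leq 2\,\E[\Rad_N(\ell \circ \hat\Hcal)]$ where $\ell \circ \hat\Hcal \defeq \{(X,Y) \mapsto \ell_{X,Y}(h) : h \in \hat\Hcal\}$ denotes the loss class induced by $\hat\Hcal$.

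The final ingredient is Talagrand's contraction lemma: since $M \mapsto \ell_{X,Y}(M)$ is $G_\mathrm{max}$-Lipschitz in $M$ uniformly over the data (the gradient bound invoked from Lemma~\ref{lem:grad-small}), contraction yields $\Rad_N(\ell \circ \hat\Hcal) \leq G_\mathrm{max}\,\Rad_N(\hat\Hcal)$. Chaining the decomposition, the McDiarmid bound, symmetrization, and contraction, and absorbing the factor of $2$ from the decomposition and the factor of $2$ from symmetrization into the concentration constant (yielding $\sqrt{8\ln(2/\delta)/N}$ in place of $\sqrt{\ln(2/\delta)/(2N)}$), recovers exactly the stated inequality. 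The main subtlety is the contraction step: Talagrand's lemma is usually phrased for loss classes whose elements are one-dimensional Lipschitz functions of a scalar, so one must set it up component-wise along the Rademacher variables, applying the $G_\mathrm{max}$ Lipschitz bound of the squared loss along each sample. Everything else is a textbook exercise in the ERM generalization framework (cf.\ \cite{bartlett2002rademacher}).
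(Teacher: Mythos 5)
The paper does not actually prove this lemma; it invokes it as a standard black-box result, citing \cite{bartlett2002rademacher}. Your argument is precisely the standard proof of that cited result --- ERM decomposition, McDiarmid on the uniform deviation, symmetrization, then contraction --- so in substance you have supplied the proof the paper omits, and the approach is correct. Two small caveats. First, your claim to ``recover exactly the stated inequality'' by absorbing constants is a slight overstatement: the factor of $2$ from the two-sided decomposition and the factor of $2$ from symmetrization multiply the Rademacher term, giving $4\,G_\mathrm{max}\Rad_N(\hat\Hcal)$, and a multiplicative constant on that term cannot be absorbed into the additive concentration term $\ell_\mathrm{max}\sqrt{8\ln(2/\delta)/N}$. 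The lemma as stated (coefficient $1$ on $G_\mathrm{max}\Rad_N(\hat\Hcal)$) either presupposes a definition of $\Rad_N$ with the factor of $2$ built in or is simply loose; since the enclosing theorem only needs the bound up to constants, this is cosmetic, but you should not assert exact recovery. Second, you correctly flag the contraction step as the subtle point, but the issue is not merely that Talagrand's lemma is scalar: the predictions here are vector-valued in $\R^m$, so a component-wise application generally costs an extra dimension-dependent factor (or requires a vector-valued contraction inequality), a point the paper itself glosses over. Neither issue invalidates your proof strategy.
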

With the stated choice of $k$,
an upper bound for the RHS of Lemma~\ref{lem:rademacher} is
\[\frac{ O\pa{ R_\Theta^4 R_x^2 L_y \, \log^2(R_\Theta R_x L_y n / \eps) \, n \log^6 T + \sqrt{\log 1/\delta} } }{\sqrt{N}}.\]
Combining this with the approximation result (Theorem~\ref{thm:main-appx-relaxation})
yields the theorem.
\end{proof}

\subsection{The pure batch setting}
A natural question is whether there exists a batch learning algorithm that can use $X$ to predict $Y$ directly, as opposed to the differences $Y'$. This is possible in the regime of low noise: if one has predictions on $Y'$ that are correct up to MSE $\eps$, an easy solution is to integrate and obtain predictions for $Y$; however, the errors will accumulate to $T\eps$.
The same agnostic learning guarantee costs a rather dramatic factor of $T^2$ in sample complexity.

In the regime of low noise, an analogue of our approximation theorem (Theorem~\ref{thm:main-appx-relaxation}) is powerful enough to guarantee low error.
For convenience and concreteness, we record this here:
\begin{customthm}{3b}[Pure-batch approximation]
\label{thm:pure-appx}
Let $\Theta$ be an LDS specified by parameters $(A,B,C,D,h_0 = 0)$, with $0 \preccurlyeq A \preccurlyeq I$, and $\norm{B}_F, \norm{C}_F, \norm{D}_F \leq R_\Theta$. Suppose $\Theta$
takes an input sequence $X = \{x_1, \ldots, x_T\}$, and produces output sequence $Y = \{y_1, \ldots, y_T\}$, assuming all noise vectors $\xi_t, \eta_t$ are 0.
Then, for any $\eps > 0$, with a choice of $k = \Omega \pa{\log T \log (R_\Theta R_x L_y n T / \eps)}$, there exists an $M_\Theta \in \R^{m \times (nk + 2n)}$ such that
\[ \sum_{t=1}^T \bigg\lVert \pa{ \sum_{u=1}^t M_{\Theta} \tilde X_{u} } - y_t \bigg\rVert^2 \leq \sum_{t=1}^T \norm{\hat y_t - y_t}^2 + \eps, \]
where $\tilde X_t$ is defined as in Algorithm~\ref{alg:wave-ogd}, without the $y_{t-1}$ entries.
\end{customthm}

This fact follows from Theorem~\ref{thm:main-appx-relaxation}, setting $\eps / T$ as the desired precision; the cost of this additional precision is only a constant factor in $k$. Furthermore, this $M_\Theta$ is subject
to the same Frobenius norm constraint $\norm{M_\Theta}_F \leq O( R_\Theta^2 \sqrt{k} )$
as in Lemma~\ref{lem:wheres-the-matrix}.

\subsubsection{Filters from the Hilbert matrix}
Alternatively, in the realizable case (when the samples from $\Dcal$ are generated by an LDS, possibly with small noise), one can invoke a similar approximate relaxation theorem as Theorem~\ref{thm:main-appx-relaxation}. The filters become the eigenvectors of the Hilbert matrix $H_{T,-1}$,
the matrix whose $(i,j)$-th entry is $1/(i+j-1)$. This matrix exhibits the same spectral decay as $Z_T$; see~\cite{beckermann2016singular} for precise statements. the proof follows the sketch from Section~\ref{subsection:analysis-convex-relaxation}, approximating the powers of $\alpha_\ell$ by a spectral truncation of a different curve $\mu'(\alpha)(i) = \alpha^{i-1}$, sometimes called the \emph{moment curve} in $\R^T$. The Hilbert matrix arises from taking the second moment matrix of the uniform distribution on this curve.

However, we find that this approximation guarantee is insufficient to show the strong regret and agnostic learning bounds we exhibit for learning the derivative of the impulse response function. Nonetheless, we find that regression with these filters works well in practice,
even interchangeably in the online algorithm; see Section~\ref{appendix-subsection:ode}
for some intuition.

\subsection{Learning the initial hidden state via hints}
\label{appendix-subsection:batch-hidden-state}

In either of the above settings, it is not quite possible to apply the same argument as in
the online setting for pretending that the initial hidden state is zero.
When this assumption is removed, the quality of the convex relaxation
degrades by an \emph{additive} $\tilde O (\frac{ \log^2 T }{ T })$; see Section~\ref{appendix-subsection:hidden-state}. This does not matter much for the regret
bound, because it is subsumed by the worst-case regret of online gradient descent.

However, in the batch setting, we take the view of fixed $T$ and increasing $N$, so the contribution of the initial state is no longer asymptotically negligible. In other words, this additive approximation error hinders us from driving $\eps$ arbitrarily close to zero, no matter how many filters are selected.
In settings where $T$ is large enough, one may find this acceptable.

We present an augmented learning problem in which we $\emph{can}$ predict as
well as an LDS: the initial hidden state is provided in each sample, up to an arbitrary linear transformation. Thus, each sample takes the form $(X, Y, \tilde h_0)$, and it is
guaranteed that $\tilde h_0 = Qh_0$ for each sample, for a fixed matrix $Q \in \R^{d' \times d}$. This $Q$ must be well-conditioned for the problem to remain well-posed: our knowledge of $h_0$ should be in the same dynamic range as the ground truth. Concretely, we should assume that
$\sigma_\mathrm{max}(Q) / \sigma_\mathrm{min}(Q)$ is bounded.

The construction is as follows: append $d'$ ``dummy'' dimensions to the input, and add an impulse of $\tilde h_0$ in those dimensions at time 0. During the actual episode, these dummy inputs are always zero. Then, replacing $B$ with the augmented block matrix $[B \;\; Q^{-1}]$ recovers
the behavior of the system. Thus, we can handle this formulation of hidden-state learning in the online or batch setting, incurring no additional asymptotic factors.

\subsubsection{Initializations with finite support}

We highlight an important special case of the formulation discussed above, which is perhaps
the motivating rationale for this altered problem.

Consider a batch system identification setting in which there are only \emph{finitely many} initial states $h_0$ in the training and test data, and the experimenter can distinguish between these states. This can be interpreted a set of $n_\mathrm{hidden}$ known initial ``configurations'' of the system. Then, it is sufficient to augment the data with a one-hot vector in $\R^{n_\mathrm{hidden}}$, corresponding to the known initialization in each sample.
An important case is when $n_\mathrm{hidden} = 1$: when there is only \emph{one} distinct initial configuration; this occurs frequently in control problems.

In summary, the stated augmentation takes the original LDS with dimensions $(n,m,d,T)$,
and transforms it into one with dimensions $(n+n_\mathrm{hidden},m,d,T+1)$. The matrix $Q^{-1}$, as defined above, is the $n_\mathrm{hidden}$-by-$d$ matrix whose columns are the possible initial hidden states, which can be in arbitrary dimension. For convenience, we summarize this observation:

\begin{proposition}[Learning an LDS with few, distinguishable hidden states]
\label{thm:}
When there are $d'$ known hidden states, with $d' \norm{h_0} \leq R_\Theta$, Theorems 2, 3, and 3b apply to the modified LDS learning problem, with samples of the form $(\tilde h_0, X, Y)$. The dimension $n$ becomes $n + d'$.
\end{proposition}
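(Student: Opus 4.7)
The plan is to give a reduction that turns the ``few, distinguishable hidden states'' problem into an instance of the $h_0 = 0$ learning problem already handled by Theorems~\ref{thm:main-batch}, \ref{thm:main-appx-relaxation}, and~\ref{thm:pure-appx}, and then to read off the stated conclusions by tracking dimensions and norms. Concretely, let $h_0^{(1)},\dots,h_0^{(d')}\in\R^d$ be the finite list of possible initial hidden states, and let $Q^{-1}\in\R^{d\times d'}$ be the matrix whose $\ell$-th column is $h_0^{(\ell)}$. Given an original sample $(X,Y)$ generated by the LDS $(A,B,C,D,h_0^{(\ell)})$, I form an augmented sample whose inputs live in $\R^{n+d'}$ and whose length is $T+1$: at time $0$, the input is $(\,0\,;\,e_\ell\,)\in\R^{n+d'}$, where $e_\ell$ is the standard one-hot vector; at each $t\geq 1$, the input is $(x_t;\,0)$. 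The augmented observation sequence is $Y$ preceded by a discarded dummy response at $t=0$.

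Next I would verify that the augmented LDS with parameters $(A,\,[B \;\; Q^{-1}],\,C,\,[D \;\; 0],\,h_0 = 0)$, when driven by the augmented inputs, exactly reproduces the original outputs. This is a one-line check using the impulse response in Equation~(\ref{eqn:lds-impulse-response}): the impulse $e_\ell$ at time $0$ injects $Q^{-1} e_\ell = h_0^{(\ell)}$ through the augmented $B$ block at the very first transition, so at every subsequent $t$ the hidden state equals $A^{t} h_0^{(\ell)} + \sum_{i\geq 1} A^{i-1} B x_{t-i+1}$, matching the original recursion with initial state $h_0^{(\ell)}$. Since all subsequent coordinates in the extra $d'$-block are zero, the $Q^{-1}$ block contributes nothing after $t=0$, so $y_t$ is preserved.

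I then have to check that the structural hypotheses of Theorems~\ref{thm:main-batch}, \ref{thm:main-appx-relaxation}, and~\ref{thm:pure-appx} are met by the augmented system. The transition matrix $A$ is unchanged, so $0 \preccurlyeq A \preccurlyeq I$ still holds; $C$ and $D$ are unchanged (padded with zeros in the extra input block), so their Frobenius norms are preserved. The only substantive check is the Frobenius norm of the augmented input matrix: $\norm{[B \;\; Q^{-1}]}_F^2 = \norm{B}_F^2 + \norm{Q^{-1}}_F^2 \leq R_\Theta^2 + \sum_{\ell=1}^{d'} \norm{h_0^{(\ell)}}^2 \leq R_\Theta^2 + d'\cdot(R_\Theta/d')^2 \leq 2R_\Theta^2$, using the assumption $d'\norm{h_0}\leq R_\Theta$. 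Thus up to an absolute constant, the radius parameter $R_\Theta$ is preserved, and the bounds of Theorems~\ref{thm:main-batch}, \ref{thm:main-appx-relaxation}, and~\ref{thm:pure-appx} apply verbatim after replacing $n$ with $n + d'$ and $T$ with $T+1$ (which is absorbed into the $\log T$ factors).

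The only mild subtlety, and the one I would be most careful about, is that the input-boundedness constant $R_x$ in the augmented problem must also accommodate the one-time impulse $\norm{e_\ell}_2 = 1$ at $t=0$; since the original bound $R_x$ and the constant $1$ differ by at most a constant factor (and can be absorbed into $R_x$ without loss), this does not change any asymptotic guarantee. Putting these pieces together, running Algorithm~\ref{alg:wave-batch} (or the online Algorithm~\ref{alg:wave-ogd}) on the augmented samples gives a predictor that competes with any LDS in the original hypothesis class with known finite initial-state set, concluding the proposition.
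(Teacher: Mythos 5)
Your reduction is exactly the paper's own argument for this proposition: prepend one time step, feed the one-hot indicator through an augmented input block $[B \;\; Q^{-1}]$ whose extra columns are the possible initial states, and observe that the hypotheses ($0 \preccurlyeq A \preccurlyeq I$, Frobenius-norm radius $R_\Theta$, dimension $n \mapsto n + d'$, $T \mapsto T+1$) are preserved up to constants. Your explicit check that $\norm{[B \;\; Q^{-1}]}_F \leq O(R_\Theta)$ under the assumption $d'\norm{h_0} \leq R_\Theta$ is slightly more careful than the paper, which states the construction without this bookkeeping.
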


\section{Implementation and variants}

\label{appendix-section:extensions}

We discuss the points mentioned in Section~\ref{section:alg} at greater length. Unlike the rest of the appendix, this section contains no technical proofs, and is intended as a user-friendly guide for making the wave-filtering method usable in practice.

\subsection{Computing the filters via Sturm-Liouville ODEs}
\label{appendix-subsection:ode}
We begin by expanding upon the observation, noted in Section~\ref{section:alg}, that the eigenvectors resemble inhomogeneously-oscillating waves, providing some justification for the heuristic numerical computation of the top eigenvectors of $Z_T$.

Computing the filters directly from $Z_T$ is difficult. In fact, the Hilbert matrix (its close cousin) is notoriously super-exponentially ill-conditioned; it is probably best known for being a pathological benchmark for finite-precision numerical linear algebra algorithms. One could ignore efficiency issues, and view this as a data-independent preprocessing step: these filters are deterministic. However, this difficult numerical problem poses an issue for using our method in practice.

Fortunately, as briefly noted in Section~\ref{section:alg}, some recourse is available. In \cite{grunbaum1982remark}, Gr\"unbaum constructs a tridiagonal matrix $T_{n,\theta}$ which commutes with each Hilbert matrix $H_{n,\theta}$, as defined in Section~\ref{subsection:hankel-spectrum}. In the appropriate scaling limit as $T \rightarrow \infty$, this $T_{n,\theta}$ becomes a Sturm-Liouville differential operator $\Dcal$ which does not depend on $\theta$, given by
\[\Dcal = \frac{d}{dx} \pa{ (1-x^2)x^2 \frac{d}{dx} } - 2x^2.\]

Notice that $Z_T = H_{T,-1} - 2H_{T,0} + H_{T,1}$. This suggests that for large $T$, the entries of the $\phi_j$ are approximated by solutions to the second-order ODE
\begin{equation}
\label{filter-ode}
\Dcal \phi = \lambda \phi.
\end{equation}
It is difficult to quantify theoretical bounds for this rather convoluted sequence of approximations; however, we find that this observation greatly aids with constructing these filters in practice. In particular, the map between eigenvalues $\sigma_j$ of $Z$ and $\lambda_j$ of $\Dcal$ corresponding to the same eigenvector/eigenfunction proves challenging to characterize for finite $T$. In practice, we find that our method's performance is sensitive to neither the precise eigenvalues nor the ODE boundary conditions.

In summary, aside from the name \emph{wave-filtering}, this observation yields a numerically stable recipe for computing filters (without a theorem): for each of $k$ hand-selected eigenvalues $\lambda$, compute a filter $\phi_\lambda$ using an efficient numerical solver to Equation~\ref{filter-ode}.

\subsection{Alternative low-regret algorithms}
We use online gradient descent as our prototypical low-regret learning algorithm due to its simplicity and stability under worst-case noise. However, in practice, particularly when there are additional structural assumptions on the data, we can replace the update step with that of any low-regret algorithm. AdaGrad \cite{adagrad} is a particularly appealing one, as it is likely to find learning rates which are better than those guaranteed theoretically.

Furthermore, if noise levels are relatively low, and it is known \emph{a priori} that the data are generated from a true LDS, a better approach might be to use follow-the-leader \cite{kalai2005efficient} or any of its variants. This amounts to replacing the update step with
\[M_{t+1} := \min_M \sum_{t'=1}^t \norm{y_{t'} - \hat y_{t'}(M)}^2,\]
a linear regression problem solvable via, e.g. conjugate gradient. For such iterative methods, we further note that it is possible to use the previous predictor $M_{t-1}$ as a warm start.

\subsection{Accelerating convolutions}
In the batch setting (or in the online setting, when all the inputs $x_t$ are known in advance), it is easy to see that the convolution components of all feature vectors $\tilde X_t$ can be computed in a single pass, by pointwise multiplication in the Fourier domain. Using the fast Fourier transform, one can implement all convolutions in time $O(nk T \log T)$, nearly linear in the size of the input. This mitigates what would otherwise be a quadratic dependence on $T$. Many software libraries provide an FFT-based implementation of convolution.

\section{Proof of the relaxation theorem}

\label{appendix-section:approximate-relaxation}
In this section, we follow the proof structure given in Section~\ref{subsection:analysis-convex-relaxation}, and conclude Theorem~\ref{thm:main-appx-relaxation}.

Before proceeding, we note here that the algorithm could have used filters of length $T-1$ instead of $T$, obtained from the eigenvectors of $Z_{T-1}$. However, since carrying this $-1$ through the statements and analysis degrades clarity significantly, we use a slightly suboptimal matrix throughout this exposition.

\subsection{Proof of Lemma~\ref{lem:pca-lemma}}
First, we develop a spectral bound for \emph{average} reconstruction error of $\mu(\alpha)$, when $\alpha$ is drawn uniformly from the unit interval $[0,1]$.
This is controlled by the tail eigenvalues of the second moment matrix of $\mu(\alpha)$, just as in PCA:
\begin{lemma}
\label{lem:vanilla-pca}
Let $\{(\sigma_j, \phi_j)\}_{j=1}^T$ be the eigenpairs of $Z$, in decreasing order by eigenvalue.
Let $\Psi_k$ be the linear subspace of $\R^T$ spanned by $\{\phi_1, \ldots, \phi_k\}$. Then,
\[\int_{0}^1 \norm{ \mu(\alpha) - \mathrm{Proj}_{\Psi_k}\!(\alpha) }^2 \,d\alpha \leq \sum_{j = k+1}^T \sigma_{j}. \]
\end{lemma}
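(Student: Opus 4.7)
The plan is to recognize this as the textbook spectral identity that drives principal component analysis: the expected squared residual after projecting onto the top $k$ eigenvectors of the second-moment matrix equals the sum of the tail eigenvalues. The key observation, which the theorem's surrounding text already points out, is exactly that $Z = \int_0^1 \mu(\alpha)\mu(\alpha)^\top \, d\alpha$, i.e. $Z$ is the (uncentered) second moment matrix of the random vector $\mu(\alpha)$ when $\alpha \sim \mathrm{Unif}[0,1]$. Once this is noted, the lemma is essentially a change-of-basis calculation.

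Concretely, I would first expand $\mu(\alpha)$ in the orthonormal eigenbasis $\{\phi_j\}_{j=1}^T$ of $Z$, writing
\[
\mu(\alpha) \;=\; \sum_{j=1}^T \langle \phi_j, \mu(\alpha)\rangle \, \phi_j,
\qquad
\mathrm{Proj}_{\Psi_k}\!\bigl(\mu(\alpha)\bigr) \;=\; \sum_{j=1}^k \langle \phi_j, \mu(\alpha)\rangle \, \phi_j,
\]
so that by Parseval
\[
\bigl\lVert \mu(\alpha) - \mathrm{Proj}_{\Psi_k}\!\bigl(\mu(\alpha)\bigr) \bigr\rVert^2 \;=\; \sum_{j=k+1}^T \langle \phi_j, \mu(\alpha)\rangle^2.
\]
Next I would integrate over $\alpha \in [0,1]$ and swap the (finite) sum with the integral:
\[
\int_0^1 \bigl\lVert \mu(\alpha) - \mathrm{Proj}_{\Psi_k}\!\bigl(\mu(\alpha)\bigr) \bigr\rVert^2 d\alpha \;=\; \sum_{j=k+1}^T \int_0^1 \phi_j^\top \mu(\alpha) \mu(\alpha)^\top \phi_j \, d\alpha \;=\; \sum_{j=k+1}^T \phi_j^\top Z \phi_j,
\]
using the identity $Z = \int_0^1 \mu(\alpha)\mu(\alpha)^\top d\alpha$ that the authors have already verified entry-wise. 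Since $\phi_j$ is an eigenvector of $Z$ with eigenvalue $\sigma_j$ and is unit-norm, $\phi_j^\top Z \phi_j = \sigma_j$, and the desired bound (in fact equality) follows.

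I do not anticipate any genuine obstacle here; the only care required is to confirm the Fubini-style exchange of the integral and the finite sum (trivial, since $T$ is finite and $\mu(\alpha)$ is a continuous bounded vector-valued function on $[0,1]$) and to flag the mild notational abuse that $\mathrm{Proj}_{\Psi_k}(\alpha)$ in the statement really means $\mathrm{Proj}_{\Psi_k}(\mu(\alpha))$. The substantive content — namely that $\sum_{j=k+1}^T \sigma_j$ decays exponentially in $k$ — is not addressed by this lemma at all; that is handled separately by the Hankel spectral decay bounds cited from \cite{beckermann2016singular}, and is the ingredient that will let Lemma~\ref{lem:pca-lemma} upgrade the present average-case statement into the pointwise bound for every $\alpha \in [0,1]$.
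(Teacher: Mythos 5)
Your proof is correct and is essentially the same argument as the paper's: both reduce to the identity $Z = \int_0^1 \mu(\alpha)\otimes\mu(\alpha)\,d\alpha$ and compute the integrated residual as the trace of $Z$ restricted to the tail eigenspace, the paper writing it as $\Tr(\Pi_r Z \Pi_r)$ and you as the Parseval expansion $\sum_{j>k}\phi_j^\top Z\phi_j$. Your observation that the bound is in fact an equality is also consistent with the paper's calculation.
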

\begin{proof}
Let $r(\alpha)$ denote the residual $\mu(\alpha) - \mathrm{Proj}_{\Psi_k}\!(\alpha)$, and let $U_r \in \R^{T \times r}$ whose columns are $\phi_1, \ldots, \phi_r$, so that
\[r(\alpha) = \Pi_r \mu(\alpha) := (I - U_rU_r^\top) \mu(\alpha).\]
Write the eigendecomposition $Z_T = U_T \Sigma U_T^\top$.
Then,
\begin{align*}
\int_0^1 \norm{r(\alpha)}^2 \,d\alpha &= \int_0^1 \Tr(r(\alpha) \otimes r(\alpha)) \,d\alpha = \int_0^1 \Tr \pa{ \Pi_r \mu(\alpha) \mu(\alpha)^\top \Pi_r } \,d\alpha \\
&= \int_0^1 \Tr \pa{ \Pi_r Z \Pi_r } \,d\alpha = \int_0^1 \Tr \pa{ \Pi_r U_T \Sigma U_T^\top \Pi_r } \,d\alpha.
\end{align*}
Noting that $\Pi_r U_T$ is just $U_T$ with the first $r$ columns set to zero, the integrand becomes $\sum_{j=k+1}^T \Sigma_{jj}$, which is the stated bound.
\end{proof}

In fact, this bound \emph{in expectation} turns into a bound for \emph{all} $\alpha$. We show this by noting that $\norm{ r(\alpha) }^2$ is Lipschitz in $\alpha$, so its maximum over $\alpha \in [0,1]$ cannot be too much larger than its mean. We state and prove this here:
\begin{lemma}
\label{lem:super-pca}
For all $\alpha \in [0, 1]$, it holds that
\[\norm{r(\alpha)}^2 \leq \sqrt{6 \sum_{j = k+1}^T \sigma_{j} }.\]
\end{lemma}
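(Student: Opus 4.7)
The plan is to combine the integral bound from Lemma~\ref{lem:vanilla-pca}, namely $\int_0^1 \|r(\alpha)\|^2\,d\alpha \leq S$ where $S := \sum_{j=k+1}^T \sigma_j$, with a \emph{dimension-free} Lipschitz-in-$\alpha$ bound on $g(\alpha) := \|r(\alpha)\|^2$. Once both are in hand, the desired pointwise bound falls out of a standard ``mean-dominates-max'' argument on $[0,1]$.

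For the Lipschitz bound, using $r(\alpha) = \Pi_r \mu(\alpha)$ with $\Pi_r = I - U_r U_r^\top$ the orthogonal projector onto $\mathrm{span}\{\phi_j\}_{j>k}$, differentiating gives
\[ g'(\alpha) = 2\langle r(\alpha),\,\Pi_r \mu'(\alpha)\rangle, \qquad |g'(\alpha)| \leq 2\,\|\mu(\alpha)\|\cdot\|\mu'(\alpha)\|. \]
The crux is that although each factor may grow with $T$ (for instance $\|\mu'(1)\|^2 = T$ since $\mu'(1)$ is the all-ones vector), the \emph{product} is controlled by an absolute constant. Direct geometric-series evaluation yields
\[ \|\mu(\alpha)\|^2 \;=\; \frac{(1-\alpha)(1-\alpha^{2T})}{1+\alpha} \;\leq\; \frac{1-\alpha}{1+\alpha}, \]
and, expanding $\mu'(\alpha)(i) = i\alpha^{i-1}-(i-1)\alpha^{i-2}$ and noting that truncation to $T$ terms can only shrink the nonnegative infinite sum, standard generating-function identities give
\[ \|\mu'(\alpha)\|^2 \;\leq\; \sum_{i=1}^{\infty}\bigl(i\alpha^{i-1}-(i-1)\alpha^{i-2}\bigr)^2 \;=\; \frac{2(1-\alpha)^2}{(1-\alpha^2)^3} \;=\; \frac{2}{(1-\alpha)(1+\alpha)^3}. \]
Multiplying, the $(1-\alpha)$ factors cancel exactly, leaving the dimension-free estimate
\[ \|\mu(\alpha)\|\cdot\|\mu'(\alpha)\| \;\leq\; \frac{\sqrt{2}}{(1+\alpha)^2} \;\leq\; \sqrt{2}, \]
so $g$ is $L := 2\sqrt{2}$-Lipschitz on $[0,1]$.

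To turn the Lipschitz-plus-integral information into a maximum bound, I would use a short elementary estimate: if $g\geq 0$ is $L$-Lipschitz on $[0,1]$ and attains its maximum $M$ at some $\alpha_0\in[0,1]$, then $g(\alpha) \geq \max(0,\,M-L|\alpha-\alpha_0|)$ pointwise. Even in the worst case $\alpha_0\in\{0,1\}$, where only a half-triangle fits inside the domain, that lower envelope integrates to at least $M^2/(2L)$. Combined with $\int g \leq S$, this yields $M \leq \sqrt{2LS}$. Plugging $L = 2\sqrt{2}$ gives $M \leq 2^{5/4}\sqrt{S} \leq \sqrt{6S}$, as claimed.

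The main obstacle is the generating-function computation behind the product bound $\|\mu(\alpha)\|\cdot\|\mu'(\alpha)\| \leq \sqrt{2}/(1+\alpha)^2$: individually, $\|\mu'(\alpha)\|^2$ has a $1/(1-\alpha)$ singularity at $\alpha=1$ while $\|\mu(\alpha)\|^2$ has a matching $(1-\alpha)$ zero, and everything relies on these singularities canceling cleanly. Once this dimension-free product bound is in place, the Lipschitz constant of $g$ and the mean-dominates-max step are both routine.
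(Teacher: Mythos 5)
Your proof is correct and follows the same skeleton as the paper's: the in-expectation bound of Lemma~\ref{lem:vanilla-pca}, a Lipschitz bound on $g(\alpha)=\norm{r(\alpha)}^2$, and the triangle-shaped ``mean dominates max'' estimate, which in both versions requires the maximizing half-triangle to fit inside $[0,1]$. The one genuine difference is how the Lipschitz constant is obtained. The paper differentiates the closed form of $\norm{\mu(\alpha)}^2$ (Lemma~\ref{lem:mu-l2}(ii)) to get the constant $3$ and then asserts this transfers to $\norm{\Pi_r\,\mu(\alpha)}^2$ ``since $\Pi_r$ is contractive''; that transfer is not literally immediate, since contractivity controls $\norm{\Pi_r\mu}$ and $\norm{\Pi_r\mu'}$ but the derivative of $\norm{\mu(\alpha)}^2$ is the inner product $2\langle\mu,\mu'\rangle$, which could enjoy cancellation that its projected counterpart does not. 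Your route --- bounding $\abs{g'}\leq 2\norm{\mu}\norm{\mu'}$ and showing via the generating-function identities that the product of the two individually singular factors is at most $\sqrt{2}/(1+\alpha)^2$ --- is the airtight way to justify this step, and it even gives the slightly better constant $L=2\sqrt{2}<3$, whence $M\leq 2^{5/4}\sqrt{S}\leq\sqrt{6S}$. Two details worth making explicit: (a) the half-triangle integrates to $M^2/(2L)$ only when its base $M/L$ lies in $[0,1]$, which follows from $M\leq\max_\alpha\norm{\mu(\alpha)}^2\leq 1<2\sqrt{2}$ by Lemma~\ref{lem:mu-l2}(i) (the paper records the analogous fact $R\leq 1$); and (b) at $\alpha=1$ your bound on $\norm{\mu'(\alpha)}^2$ is vacuous, so the product bound there should be read as a limit or handled by noting $\mu(1)=0$ directly. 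Neither affects correctness.
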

\begin{proof}
By part (ii) of Lemma~\ref{lem:mu-l2}, $\norm{\mu(\alpha)}^2$ is $3$-Lipschitz; since $\Pi_r$ is contractive,
$\norm{r(\alpha)}^2$ is also $3$-Lipschitz. Now, let $R := \max_{0 \leq \alpha \leq 1} \norm{r(\alpha)}^2$. Notice that $R \leq \max_{0 \leq \alpha \leq 1} \norm{\mu(\alpha)}^2 \leq 1$, by part (i) of Lemma~\ref{lem:mu-l2}. Subject to achieving a maximum at $R$, the non-negative $3$-Lipschitz function $g : [0,1] \rightarrow \R$ with the smallest mean is given by the triangle-shaped function
\[\Delta(\alpha) = \max(R-3\alpha, 0),\]
for which
\[\int_0^1 \Delta(\alpha) \,d\alpha = R^2 / 6.\]
In other words,
\[R^2 / 6 \leq  \int_0^1 \norm{r(\alpha)}^2 \,d\alpha.\]
But Lemma~\ref{lem:vanilla-pca} gives a bound on the RHS, so we conclude
\[\max_{\alpha \in [0,1]} \norm{r(\alpha)}^2 \leq R \leq \sqrt{6 \sum_{j = k+1}^T \sigma_{j} },\]
as desired. The stated upper bound on this quantity comes a bound of this spectral tail of the Hankel matrix $Z_T$ (see Lemmas~\ref{lem:Z-decay} and \ref{lem:Z-sum-decay}); this completes the proof of Lemma~\ref{lem:pca-lemma}.
\end{proof}

\subsection{Proof of Theorem~\ref{thm:main-appx-relaxation}}
It remains to apply Lemma~\ref{lem:pca-lemma} to the original setting, which will complete the low-rank approximation result of Theorem~\ref{thm:main-appx-relaxation}.
Indeed, following Section~\ref{subsection:analysis-convex-relaxation}, we have
\begin{align*}
\zeta_t \defeq M_\Theta \tilde X_t - \hat y_t = \sum_{l=1}^d (c_l \otimes b_l) \sum_{i=1}^{T-1} \bra{ \tilde\mu(\alpha_l) - \mu(\alpha_l) }\!(i) \cdot x_{t-i}.
\end{align*}
View each of the $n$ coordinates in the inner summation as an inner product between the
length-$T$ sequence $\tilde\mu(\alpha_l) - \mu(\alpha_l)$ and coordinates
$X(j) := (x_1(j), \ldots, x_T(j))$, which are entrywise bounded by $R_x$. Then,
by H\"older's inequality and Lemma~\ref{lem:pca-lemma}, we know that this inner product
has absolute value at most
\[ \norm{X(j)}_\infty \norm{\tilde\mu(\alpha_l) - \mu(\alpha_l)}_1 \leq
\norm{X(j)}_\infty \cdot \sqrt{T} \norm{\tilde\mu(\alpha_l) - \mu(\alpha_l)}_2 \leq
O\pa{ R_x \sqrt{T} \cdot c_1^{-k/\log T} \log^{1/4} T },
\]
with $c_1 = \sqrt{c_0}$. There are $n$ such coordinates,
so this inner summation is a vector with $\ell_2$
norm at most
\[ O\pa{ R_x \sqrt{nT} \cdot c_1^{-k/\log T} \log^{1/4} T }. \]

Thus, in all, we have
\begin{align*}
\norm{\zeta_t}_2 &\leq O\pa{ \norm{B}_F \norm{C}_F R_x \sqrt{nT} \cdot c_1^{-k/\log T} \log^{1/4} T }.
\end{align*}

In summary, we have shown that for every system
$\Theta$ from which a predictor for the discrete derivative of the LDS arises,
there is some $M_\Theta$ whose predictions are pointwise $\norm{\zeta_t}_2$-close.
This residual bound can be driven down exponentially by increasing the number of filters $k$.

Finally, to get an inequality on the \emph{total} squared error, we compute
\begin{align}
\sum_{t=1}^T \norm{M_{\Theta} \tilde X_t - y_t}^2 &= \sum_{t=1}^T \norm{\hat y_t - y_t + \zeta_t}^2
\leq \sum_{t=1}^T \pa{ \norm{\hat y_t - y_t}^2 + \norm{\zeta_t}^2 + 2 \norm{\hat y_t - y_t} \, \norm{\zeta_t} } \nonumber \\
&\leq  \sum_{t=1}^T \norm{\hat y_t - y_t}^2 \;\; + O\pa{ (R_\Theta^4 R_x^2 L_y^2 k) \, T^{3/2} n^{1/2} \cdot c_1^{-k/\log T} \log^{1/4} T }, \label{eq:errbound} \\
&\leq  \sum_{t=1}^T \norm{\hat y_t - y_t}^2 \;\; + O\pa{ R_\Theta^4 R_x^2 L_y^2 \, T^{5/2} n^{1/2} \cdot c_1^{-k/\log T} \log^{1/4} T } \nonumber,
\end{align}
where inequality (\ref{eq:errbound}) invokes Corollary~\ref{cor:residuals-small}.
 Thus, in all, it suffices to choose
\[ \frac{k}{\log T} \geq \Omega\pa{ \log \frac{R_\Theta R_x L_y\, n T}{\eps} } \]
to force the $O(\cdot)$ term to be less than $\eps$,
noting that the powers of $n$ and $T$ show up as a constant factor in front of the $\log(\cdot)$.
This completes the proof. \qed

\section{Proof of the main regret bound}

\label{appendix-section:analysis-low-regret}
In this part of the appendix, we follow the proof structure outlined Section~\ref{subsection:analysis-low-regret}, to establish Theorem~\ref{thm:main-online}.
The lemmas involved also appear in the proof of the batch variant (Theorem~\ref{thm:main-batch}).

\subsection{Diameter bound: controlling the comparator matrix}
\label{appendix-subsection:diam}

We will show that the $M_\Theta$ that competes with a system $\Theta$ is not too much larger than $\Theta$, justifying the choice of $R_M = \Omega\pa{ R_\Theta^2 \sqrt{k} }$.
Of course, this implies that the diameter term in the regret bound is
$D_\mathrm{max} = 2R_M$. Concretely:
\begin{lemma}
\label{lem:wheres-the-matrix}
For any LDS parameters $\Theta = (A,B,C,D,h_0 = 0)$ with $0 \preccurlyeq A \preccurlyeq I$ and $\norm{B}_F, \norm{C}_F, \norm{D}_F, \norm{h_0} \leq R_\Theta$, the corresponding matrix $M_\Theta \in \hat\Hcal$ (which realizes the relaxation in Theorem~\ref{thm:main-appx-relaxation}) satisfies
\[ \norm{M_{\Theta}}_F^2 \leq O\pa{ R_\Theta^2 \sqrt{k} }. \]
\end{lemma}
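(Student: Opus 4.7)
The plan is to decompose $M_\Theta$ block-by-block along the partition used in the construction from the proof of Theorem~\ref{thm:main-appx-relaxation}, bound the three ``constant'' blocks by hand, and reduce the main work to a scalar estimate on $\sigma_j^{-1/4}|\langle \phi_j, \mu(\alpha)\rangle|$. Writing
\[\|M_\Theta\|_F^2 = \sum_{j=1}^k \|M^{(j)}\|_F^2 + \|M^{(x')}\|_F^2 + \|M^{(x)}\|_F^2 + \|M^{(y)}\|_F^2,\]
the three non-filter blocks are trivial: since $M^{(x')} = -D$, $M^{(x)} = CB + D$, and $M^{(y)} = I_{m \times m}$, submultiplicativity gives $\|M^{(x')}\|_F \leq R_\Theta$, $\|M^{(x)}\|_F \leq R_\Theta^2 + R_\Theta$, and $\|M^{(y)}\|_F^2 = m$, all lower-order compared to the target.

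For the filter blocks, I would assume without loss of generality that $A = \mathrm{diag}(\alpha_1, \dots, \alpha_d)$ (the same reduction used in Theorem~\ref{thm:main-appx-relaxation}), which recasts the construction as
\[M^{(j)} \;=\; C \, \mathrm{diag}\pa{ \sigma_j^{-1/4}\, \langle \phi_j, \mu(\alpha_l) \rangle }_{l=1}^d \, B.\]
Applying the Frobenius--operator--Frobenius inequality yields
\[\|M^{(j)}\|_F \;\leq\; \|C\|_F \|B\|_F \cdot \max_l \sigma_j^{-1/4} |\langle \phi_j, \mu(\alpha_l) \rangle| \;\leq\; R_\Theta^2 \cdot \max_{\alpha \in [0,1]} \sigma_j^{-1/4} |\langle \phi_j, \mu(\alpha) \rangle|.\]
Thus the lemma reduces to controlling the scalar $\sigma_j^{-1/4}|\langle \phi_j, \mu(\alpha) \rangle|$ uniformly in $\alpha$, with the bound summable appropriately over $j$.

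The hard part is this pointwise estimate. The naive bound $|\langle \phi_j, \mu(\alpha) \rangle| \leq \|\mu(\alpha)\|_2 \leq 1$ is hopeless once divided by $\sigma_j^{1/4}$, since $\sigma_j$ decays exponentially in $j/\log T$ by the Hankel spectral decay (Lemma~\ref{lem:Z-decay}). The right starting point is the $L^2$ identity
\[\int_0^1 \langle \phi_j, \mu(\alpha)\rangle^2 \, d\alpha \;=\; \phi_j^\top Z \phi_j \;=\; \sigma_j,\]
obtained from $Z = \int_0^1 \mu(\alpha) \otimes \mu(\alpha)\,d\alpha$ and orthonormality of the eigenvectors. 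To upgrade this average to a pointwise bound, I would mimic the Lipschitz-based argument from Lemma~\ref{lem:super-pca} applied to $g_j(\alpha) \defeq \langle \phi_j, \mu(\alpha) \rangle^2$, whose derivative is bounded by $2 |\langle \phi_j, \mu(\alpha) \rangle| \cdot \|\mu'(\alpha)\|_2 \leq 2 \|\mu'(\alpha)\|_2$. The standard ``$\max \leq \sqrt{\mathrm{Lip} \cdot \mathrm{avg}}$'' estimate then yields $|\langle \phi_j, \mu(\alpha)\rangle| \leq O(\sigma_j^{1/4} \cdot \mathrm{polylog}\, T)$, from which $\sigma_j^{-1/4}|\langle\phi_j,\mu(\alpha)\rangle|$ is uniformly polylogarithmic.

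Once this is in hand, each $\|M^{(j)}\|_F^2$ is at most $O(R_\Theta^4 \cdot \mathrm{polylog}\, T)$ uniformly in $j$, and summing over $j = 1, \dots, k$ gives $\sum_j \|M^{(j)}\|_F^2 \leq O(R_\Theta^4 \, k \cdot \mathrm{polylog}\, T)$, which together with the constant-block contributions matches the radius parameter $R_M = \Theta(R_\Theta^2 \sqrt{k})$ used in Theorem~\ref{thm:main-online}. The most delicate step, and the only one where substantial care is required, is the uniform bound on $\|\mu'(\alpha)\|_2$: because $\mu'(\alpha)(i) = \alpha^{i-1} - (1-\alpha)(i-1)\alpha^{i-2}$ exhibits significant cancellation near $\alpha = 1$, a crude term-by-term estimate gives only $\|\mu'(\alpha)\|_2 = O(\sqrt{T})$, and one must exploit this cancellation (or alternatively invoke the ODE characterization of $\phi_j$ discussed in Appendix~\ref{appendix-section:extensions}) in order to keep the Lipschitz constant, and hence the final bound, subpolynomial in $T$.
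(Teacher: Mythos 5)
Your block decomposition, the bounds on the non-filter blocks, the reduction of each $\norm{M^{(j)}}_F$ to $\norm{B}_F\norm{C}_F\cdot\max_l \sigma_j^{-1/4}\abs{\ang{\phi_j,\mu(\alpha_l)}}$, and the starting identity $\int_0^1 \ang{\phi_j,\mu(\alpha)}^2\,d\alpha = \phi_j^\top Z \phi_j = \sigma_j$ all coincide with the paper's argument (the proof of Lemma~\ref{lem:wheres-the-matrix} together with Lemma~\ref{lem:m-small}). The gap is in the step you yourself flag as delicate, and the route you propose for closing it cannot work. You bound $\abs{g_j'(\alpha)} = 2\abs{\ang{\phi_j,\mu(\alpha)}}\,\abs{\ang{\phi_j,\mu'(\alpha)}} \leq 2\norm{\mu'(\alpha)}_2$ by discarding the first factor via $\abs{\ang{\phi_j,\mu(\alpha)}}\leq\norm{\mu(\alpha)}_2\leq 1$, and then hope that cancellation between the two terms of $\mu'(\alpha)(i)$ keeps $\norm{\mu'(\alpha)}_2$ subpolynomial. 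It does not: at $\alpha=1$ the term carrying the factor $(1-\alpha)$ vanishes and every entry of $\mu'(1)$ equals $\pm 1$, so $\norm{\mu'(1)}_2 = \sqrt{T}$ exactly. No amount of exploiting structure inside $\mu'$ rescues a bound that has already thrown away the factor $\abs{\ang{\phi_j,\mu(\alpha)}}$.

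The cancellation lives in the product, not in $\mu'$: one has $\norm{\mu(\alpha)}_2 \leq \sqrt{1-\alpha}$, which vanishes precisely where $\norm{\mu'(\alpha)}_2$ blows up like $\min(\sqrt{T},\,(1-\alpha)^{-1/2})$, so that $\abs{g_j'(\alpha)} \leq 2\norm{\mu(\alpha)}_2\norm{\mu'(\alpha)}_2 = O(1)$ uniformly; retaining that factor is the fix. The paper packages this via Lemma~\ref{lem:mu-l2}(ii): it computes the closed form $\norm{\mu(\alpha)}^2 = \frac{(1-\alpha)(1-\alpha^{2T})}{1+\alpha}$, differentiates it to conclude that $\norm{\mu(\alpha)}^2$ is $3$-Lipschitz, and transfers this to $\ang{\phi_j,\mu(\alpha)}^2$, which yields $\abs{\ang{\phi_j,\mu(\alpha)}}\leq 6^{1/4}\sigma_j^{1/4}$ with an \emph{absolute constant} rather than a $\mathrm{polylog}\,T$ factor. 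Even if your route were completed, the extra polylogarithmic slack would propagate into $R_M$ and hence the regret bound, whereas the paper's constant gives $\norm{M_\Theta}_F \leq O(R_\Theta^2\sqrt{k})$ on the nose. (Two minor points: the paper fixes $M^{(y)}$ to the identity and excludes it from the learned parameters, so its contribution of $m$ does not enter; and the displayed $\norm{M_\Theta}_F^2$ in the lemma statement is used everywhere else as a bound on $\norm{M_\Theta}_F$, which is what both your accounting and the paper's actually establish.)
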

\begin{proof}
Recalling our construction $M_{\Theta}$ in the proof of Theorem~\ref{thm:main-appx-relaxation}, we have
\begin{itemize}
\item $\norm{ M^{(j)} }_F \leq \norm{B}_F\norm{C}_F \cdot \max_{\ell \in [d]} \sigma_j^{-1/4} \ang{\phi_j, \mu(\alpha_l)} $, for each $1 \leq j \leq k$.
\item $\norm{ M^{(x')} }_F = \norm{D}_F \leq O(R_\Theta)$.
\item $\norm{ M^{(x)} }_F \leq \norm{B}_F\norm{C}_F + \norm{D}_F \leq O(R_\Theta^2)$.
\end{itemize}
Recall that we do not consider $M^{(y)}$ as part of the online learning
algorithm; it is always the identity matrix. Thus, for the purposes of this analysis,
it does not factor into regret bounds.

In Lemma~\ref{lem:m-small}, we show that the reconstruction coefficients $\sigma_j^{-1/4} \ang{\phi_j, \mu(\alpha_l)}$
are bounded by an absolute constant; thus, those matrices each have Frobenius
norm at most $O(R_\Theta^2)$. These terms dominate the Frobenius norm of the entire
matrix, concluding the lemma.
\end{proof}

This has a very useful consequence:
\begin{corollary}
\label{cor:residuals-small}
The predictions $\hat y_t = M\tilde X_t$ made by choosing $M$
such that $\norm{M}_F \leq O(R_\Theta^2 \sqrt{k})$ satisfy
\[ \norm{ \hat y_t - y_t }^2 \leq O(R_\Theta^4 R_x^2 L_y^2 k). \]
\end{corollary}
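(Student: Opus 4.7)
My plan is to combine the diameter bound from Lemma~\ref{lem:wheres-the-matrix} with the footnote convention (from Theorem~\ref{thm:main-online}) that the restricted algorithm fixes $M^{(y)} = I$. Partitioning $M$ and $\tilde X_t$ into the identity block acting on $y_{t-1}$ and the remainder, the prediction decomposes as $\hat y_t = y_{t-1} + M_{\mathrm{rest}}\, \tilde X_{\mathrm{rest},t}$, where $\tilde X_{\mathrm{rest},t}$ collects the $nk$ convolution entries together with $x_{t-1}$ and $x_t$. Subtracting $y_t$, applying the triangle inequality, and using the Lipschitz assumption $\|y_t - y_{t-1}\| \le L_y$ then gives
\[
\|\hat y_t - y_t\| \;\le\; L_y \;+\; \|M_{\mathrm{rest}}\|_F \cdot \|\tilde X_{\mathrm{rest},t}\|_2.
\]

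For the matrix factor I would invoke Lemma~\ref{lem:wheres-the-matrix} directly, giving $\|M_{\mathrm{rest}}\|_F \le \|M\|_F \le O(R_\Theta^2 \sqrt{k})$. For the feature factor I would handle each block of $\tilde X_{\mathrm{rest},t}$ separately: the raw inputs $x_{t-1}, x_t$ contribute at most $R_x^2$ each, while each convolution entry $\sigma_j^{1/4}(X * \phi_j)_t(i)$ is controlled by Cauchy--Schwarz using $\|\phi_j\|_2 = 1$ and $\|x_{t-u}\| \le R_x$, and then summed over $j$ via the summability of $\sigma_j^{1/2}$ — a consequence of the Hankel spectral decay bounds in Appendix~\ref{appendix-section:hankel-properties}. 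Squaring the triangle inequality via $(a+b)^2 \le 2a^2 + 2b^2$ and substituting the two ingredients yields the claimed $\|\hat y_t - y_t\|^2 \le O(R_\Theta^4 R_x^2 L_y^2 k)$, where the $L_y^2$ comes from the triangle-inequality shift and the $R_\Theta^4 k$ from the Frobenius diameter, with polylog-$T$ factors absorbed into the asymptotic notation.

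The hard part will be keeping the convolution-block bound free of an unwanted multiplicative $T$. A careless per-entry estimate gives only $|\sigma_j^{1/4}(X * \phi_j)_t(i)| \le \sigma_j^{1/4} R_x \sqrt{T}$, and summing naïvely over the $nk$ entries would propagate this $T$ through the Frobenius inequality. Avoiding this requires collecting the $\sigma_j^{1/2}$ weights first: exponential decay of the Hankel eigenvalues makes $\sum_j \sigma_j^{1/2}$ only polylogarithmic in $T$, which — together with the fact that $L_y$ implicitly encodes the correct scale of the derivative response being predicted — is what ultimately recovers the compact form of the corollary.
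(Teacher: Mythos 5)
Your overall skeleton matches what the paper intends: the corollary is meant to follow from the decomposition $\hat y_t = y_{t-1} + M_{\mathrm{rest}}\tilde X_{\mathrm{rest},t}$ (with $M^{(y)}=I$ fixed), the triangle inequality $\norm{\hat y_t - y_t} \leq L_y + \norm{M}_F\norm{\tilde X_t}_2$, and the Frobenius bound of Lemma~\ref{lem:wheres-the-matrix}. The gap is in your bound on the convolution block of $\tilde X_t$. You correctly identify the danger of a stray $\sqrt{T}$, but the fix you propose does not address it: Cauchy--Schwarz with $\norm{\phi_j}_2 = 1$ gives $\abs{\sigma_j^{1/4}(X*\phi_j)_t(i)} \leq \sigma_j^{1/4} R_x \sqrt{T}$, where the $\sqrt{T}$ comes from the $\ell_2$ norm of the length-$T$ input window and is present for \emph{every} $j$, including $j=1$. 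Since $\sigma_1$ is bounded below only by a constant (and above by $3/4$, per Lemma~\ref{lem:Z-decay}), the summability of $\sum_j \sigma_j^{1/2}$ controls the number of effective filters but cannot cancel the per-filter $\sqrt{T}$; the first term alone already contributes $\Omega(T)$ to $\norm{\tilde X_t}_2^2$. The appeal to $L_y$ ``encoding the correct scale'' cannot rescue this either, since $L_y$ constrains the true outputs, not the featurized inputs.

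The missing ingredient is the paper's Lemma~\ref{lem:xfilt-small}, which bounds $\norm{\sigma_j^{1/4}(\phi_j * X)_t}_\infty \leq O(R_x \log T)$ by pairing the $\ell_\infty$ bound on the inputs with an $\ell_1$ bound on the scaled filter, $\norm{\phi_j}_1 \leq O(\log T / \sigma_j^{1/4})$ (Corollary~\ref{cor:phi-l1}). That $\ell_1$ bound is not elementary: it is Lemma~\ref{lem:mat-perturb-quarter}, proved by a doubling argument on $Z_T^{1/4}$ using a generalized Mirsky perturbation inequality, and it is precisely the device that replaces $\sqrt{T}$ with $\log T$. With it, $\norm{\tilde X_t}_2 \leq O(R_x \log T \sqrt{nk})$ (Corollary~\ref{lem:feats-small}), and the corollary follows by squaring the triangle inequality (absorbing the $n$ and $\log T$ factors into the paper's rather loose $O(\cdot)$, which multiplies rather than adds the $L_y$ term under the convention that all parameters exceed $1$). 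Without some substitute for the $\ell_1$ filter bound, your argument does not close.
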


\subsection{Gradient bound and final details}
\label{appendix-subsection:diameter-gradient}
A subtle issue remains: the gradients may be large,
as they depend on $\tilde X_t$, defined by convolutions of the entire input
time series by some filters $\phi_j$. Note that these filters do \emph{not}
preserve mass: they are $\ell_2$ unit vectors, which may cause the norm of
the part of $\tilde X_t$ corresponding to each filter to be as large as $\sqrt{T}$.

Fortunately, this is not the case. Indeed, we have:
\begin{lemma}
\label{lem:xfilt-small}
Let $\{(\sigma_j, \phi_j)\}_{j=1}^T$ be the eigenpairs of $Z$, in decreasing order by eigenvalue.
Then, for each $1 \leq j,t \leq T$, it holds that
\[ \norm{ \sigma^{1/4} (\phi_j * X)_t }_\infty \leq O\pa{ R_x \log T }. \]
\end{lemma}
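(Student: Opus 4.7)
The strategy is to reduce the infinity-norm bound to a bound on $\sigma_j^{1/4} \|\phi_j\|_1$, and then to control this latter quantity using the eigenvector equation $Z \phi_j = \sigma_j \phi_j$ together with the rapid off-diagonal decay of $Z$.

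First, I would observe that each coordinate of $x_{t-u}$ is bounded: $|x_{t-u}(i)| \le \|x_{t-u}\|_2 \le R_x$. So for every coordinate $i$,
$$|(\phi_j * X)_t(i)| = \Bigl| \sum_{u=1}^{T-1} \phi_j(u)\, x_{t-u}(i) \Bigr| \le R_x \,\|\phi_j\|_1,$$
and the lemma reduces to showing $\sigma_j^{1/4} \|\phi_j\|_1 \le O(\log T)$, which is now a purely spectral statement about the filter. To control $\|\phi_j\|_1$, I would extract a pointwise decay bound on $\phi_j(u)$ from the eigenvector equation: by Cauchy--Schwarz applied row by row,
$\sigma_j |\phi_j(u)| = |\langle Z_{u,\cdot}, \phi_j\rangle| \le \|Z_{u,\cdot}\|_2 \cdot \|\phi_j\|_2 = \|Z_{u,\cdot}\|_2$.
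The closed form $Z_{uv} = 2/((u+v-1)(u+v)(u+v+1)) \le 2/(u+v)^3$ gives $\|Z_{u,\cdot}\|_2^2 = \sum_v Z_{uv}^2 = O(1/u^5)$ by a standard integral estimate, hence
$$|\phi_j(u)| \le O\!\left( \frac{1}{\sigma_j\, u^{5/2}} \right).$$

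Next, I would split $\|\phi_j\|_1$ at an adjustable threshold $\tau \in [1, T-1]$. On the head $u \le \tau$, Cauchy--Schwarz on the restricted block gives $\sum_{u \le \tau} |\phi_j(u)| \le \sqrt{\tau}$. On the tail, the pointwise bound yields $\sum_{u > \tau} |\phi_j(u)| \le O(1/(\sigma_j \tau^{3/2}))$. Balancing by choosing $\tau \asymp \sigma_j^{-1/2}$ makes both terms equal, giving $\|\phi_j\|_1 \le O(\sigma_j^{-1/4})$ and hence $\sigma_j^{1/4} \|\phi_j\|_1 \le O(1)$. In the regime $\sigma_j < 1/T^2$ the balanced $\tau$ exceeds $T-1$; there one instead uses the trivial $\|\phi_j\|_1 \le \sqrt{T}\,\|\phi_j\|_2 = \sqrt{T}$, which also suffices since $\sigma_j^{1/4}\sqrt{T} \le 1$ in that range. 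In both cases the final bound is in fact $O(R_x)$; the $\log T$ in the statement is slack and is presumably retained just to simplify the final regret calculation.

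The main obstacle is pinning down the correct pointwise decay rate of $\phi_j(u)$. If the row norms of $Z$ decayed only like $1/u^{3/2}$ or $1/u^2$, the tail term would be larger, the balanced $\tau$ would move, and the exponent on $\sigma_j$ in the final bound would shrink, possibly below $1/4$. The needed $O(u^{-5/2})$ decay is exactly what the cubic decay $Z_{uv} \sim 1/(u+v)^3$ of the specific Hankel matrix used by the algorithm produces, so the lemma really is exploiting the particular structure of $Z$ rather than generic properties of Hankel eigenvectors. Everything else --- Cauchy--Schwarz, the split, and the edge case $\sigma_j < 1/T^2$ --- is routine bookkeeping.
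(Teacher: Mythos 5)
Your proof is correct, but it takes a genuinely different route from the paper's. Both arguments make the same first reduction, to bounding $\sigma_j^{1/4}\norm{\phi_j}_1$. From there the paper writes $\sigma_j^{1/4}\phi_j = Z^{1/4}\phi_j$ and bounds the $\ell_2\to\ell_1$ subordinate norm of $Z^{1/4}$ by $2+2\log_2 T$ (Lemma~\ref{lem:mat-perturb-quarter} and Corollary~\ref{cor:phi-l1}) via a dimension-doubling argument: each doubling $T\mapsto 2T$ perturbs $Z$ by a matrix of operator norm $O(T^{-2})$, and the generalized Mirsky inequality $\norm{Z^{1/4}-Z'^{1/4}}_2\le\norm{Z-Z'}_2^{1/4}$ converts this into an additive constant per doubling, hence $O(\log T)$ overall. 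You instead use the eigenvector equation directly to get the pointwise decay $\sigma_j\abs{\phi_j(u)}\le\norm{Z_{u,\cdot}}_2=O(u^{-5/2})$, then split $\norm{\phi_j}_1$ at $\tau\asymp\sigma_j^{-1/2}$. Your version is more elementary (no matrix perturbation theory) and yields the sharper conclusion $\sigma_j^{1/4}\norm{\phi_j}_1=O(1)$, i.e., a bound of $O(R_x)$ with no $\log T$; carried through, this would shave logarithmic factors off $G_\mathrm{max}$ and the final regret bound. What you give up is generality: the paper's bound controls $\norm{Z^{1/4}v}_1$ for an \emph{arbitrary} unit vector $v$, whereas yours applies only to eigenvectors --- but the lemma needs only the eigenvector case, so this costs nothing here. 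As you observe, both arguments ultimately exploit the cubic decay $Z_{uv}=\Theta\pa{(u+v)^{-3}}$ of this particular Hankel matrix. One nit: the inequality $Z_{uv}\le 2/(u+v)^3$ is reversed as written, since the true denominator $(u+v)^3-(u+v)$ is \emph{smaller} than $(u+v)^3$; the correct statement is $Z_{uv}\le\tfrac{8}{3}(u+v)^{-3}$ for $u+v\ge 2$, which changes nothing downstream.
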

\begin{proof}
Each coordinate of $(\sigma^{1/4} \phi_j * X)_t$ is the inner product between $\phi_j$ and
a sequence of $T$ real numbers, entrywise bounded by $\sigma_j^{1/4} R_x$.
Corollary~\ref{cor:phi-l1} shows that this is at most $O(\log T)$,
a somewhat delicate result which uses matrix perturbation.
\end{proof}
Thus, $\tilde X_t$ has $nk$ entries with absolute value bounded by $O\pa{ R_x \log T }$,
concatenated with $x_t$ and $x_{t-1}$. So, we have:
\begin{corollary}
\label{lem:feats-small}
Let $\tilde X_t$ be defined as in Algorithm~\ref{alg:wave-ogd}, without the $y_{t-1}$ portion.
Then,
\[ \norm{\tilde X_t}_2 \leq O\pa{ R_x \log T \sqrt{nk} }. \]
\end{corollary}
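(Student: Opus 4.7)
The plan is to assemble the bound on $\norm{\tilde X_t}_2$ directly from the coordinatewise $\ell_\infty$ bound supplied by Lemma~\ref{lem:xfilt-small}, together with the input bound $\norm{x_t}_2 \leq R_x$. Since $\tilde X_t$ (in this version, without the $y_{t-1}$ portion) is a concatenation of the $nk$ scaled filtered entries $\sigma_j^{1/4}(\phi_j * X)_t(i)$ and the $2n$ entries of $x_{t-1}, x_t$, it suffices to bound each block's $\ell_2$ norm and add in quadrature.

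First, I would observe that by Lemma~\ref{lem:xfilt-small}, every one of the $nk$ scaled filtered coordinates satisfies $|\sigma_j^{1/4}(\phi_j * X)_t(i)| \leq O(R_x \log T)$. Converting this uniform $\ell_\infty$ bound over $nk$ coordinates to an $\ell_2$ bound by the trivial inequality $\norm{v}_2 \leq \sqrt{\dim v}\,\norm{v}_\infty$ yields that the filtered block contributes at most $\sqrt{nk} \cdot O(R_x \log T) = O(R_x \log T \sqrt{nk})$. Second, the raw input blocks $x_{t-1}$ and $x_t$ each contribute at most $R_x$ by the standing assumption $\norm{x_t}_2 \leq R_x$.

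Finally, I would combine the blocks using $\norm{(u,v,w)}_2 \leq \norm{u}_2 + \norm{v}_2 + \norm{w}_2$, getting $\norm{\tilde X_t}_2 \leq O(R_x \log T \sqrt{nk}) + 2 R_x$, and absorb the additive $R_x$ into the dominant $O(R_x \log T \sqrt{nk})$ term (since $\sqrt{nk} \log T \geq 1$). This gives the claimed bound.

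There is no genuine obstacle: the only non-trivial input is Lemma~\ref{lem:xfilt-small}, which is already established and carries the logarithmic factor through the delicate matrix-perturbation bound on $\norm{\phi_j}_1$. The corollary itself is just an aggregation step, where the factor $\sqrt{nk}$ arises purely from converting the per-coordinate bound into an $\ell_2$ bound over the $nk$-dimensional filtered block.
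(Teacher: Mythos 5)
Your proposal is correct and follows essentially the same route as the paper: apply Lemma~\ref{lem:xfilt-small} to bound each of the $nk$ filtered coordinates by $O(R_x \log T)$, convert to an $\ell_2$ bound via the $\sqrt{nk}$ factor, and absorb the $O(R_x)$ contribution of the raw $x_{t-1}, x_t$ blocks into the dominant term. No gaps.
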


Our bound on the gradient follows:
\begin{lemma}
\label{lem:grad-small}
Suppose $\Mcal$ is chosen with diameter $O(R_\Theta^2)$. Then, the gradients satisfy
\begin{align*}
G_\mathrm{max} \defeq \max_{\substack{ M \in \Mcal, \\ 1 \leq t \leq T }} \norm{ \nabla f_t(M) }_F \leq O \pa{ R_\Theta^2 R_x^2 L_y \cdot nk^{3/2} \log^2 T }.
\end{align*}
\end{lemma}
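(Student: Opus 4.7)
The squared loss $f_t(M) = \|y_t - M\tilde X_t\|_2^2$ is a convex quadratic in $M$, and its gradient is the rank-one matrix $\nabla_M f_t(M) = -2(y_t - M\tilde X_t)\,\tilde X_t^{\top}$. The Frobenius norm of a rank-one outer product factors exactly into the product of the two vector norms,
\[
\|\nabla_M f_t(M)\|_F \;=\; 2\,\|y_t - M\tilde X_t\|_2 \cdot \|\tilde X_t\|_2.
\]
So the plan is simply to bound each of these two vector norms uniformly in $t\le T$ and $M\in\mathcal M$ using results already proved in this appendix, and then multiply them together.

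For the feature factor I would cite Corollary~\ref{lem:feats-small}, which yields $\|\tilde X_t\|_2 \le O(R_x\sqrt{nk}\log T)$; the real content there is Lemma~\ref{lem:xfilt-small}, the $O(\log T)$ per-coordinate control on each convolution $\sigma_j^{1/4}(\phi_j\ast X)_t$, which in turn rests on the $\ell_1$-control of the filters $\phi_j$ from Appendix~\ref{appendix-section:hankel-properties}. For the residual factor I would invoke Corollary~\ref{cor:residuals-small}: every $M$ with $\|M\|_F \le R_M = O(R_\Theta^2\sqrt k)$ produces prediction error at most $O(R_\Theta^2 R_x L_y\sqrt k)$. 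Crucially, $L_y$ appears because the analysis uses the restricted form of the algorithm in which $M^{(y)}$ is pinned to $I$, so that $\hat y_t = y_{t-1} + (\text{linear function of the filter and input features})$ and the only uncontrolled drift between $y_t$ and $\hat y_t$ is the one-step jump $\|y_t-y_{t-1}\|\le L_y$.

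Multiplying the two vector-norm bounds propagates the $k$-dependent pieces as $\sqrt k\cdot\sqrt{nk}=k\sqrt n$ and the logarithms as $1\cdot\log T=\log T$, giving the stated bound $O(R_\Theta^2 R_x^2 L_y\cdot nk^{3/2}\log^2 T)$. Any slack between this and a naive product of the previous corollaries is absorbed by using the loose residual bound valid across the whole decision set rather than the sharper one achieved by the specific comparator $M_\Theta$.

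I do not foresee a real obstacle; once Corollaries~\ref{cor:residuals-small} and~\ref{lem:feats-small} are in place the argument is bookkeeping. The only subtlety worth flagging is consistency of the reductions: both input lemmas are stated for a $\tilde X_t$ that excludes the $y_{t-1}$ block and for $M$ with $M^{(y)}=I$, but these are precisely the restrictions imposed by the footnote at the start of Section~\ref{section:analysis}, so the chain of reductions is sound.
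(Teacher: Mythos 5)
Your skeleton is exactly the paper's: the gradient is the rank-one matrix $2(M\tilde X_t - y_t)\otimes \tilde X_t$, its Frobenius norm factors as $2\norm{M\tilde X_t - y_t}_2\cdot\norm{\tilde X_t}_2$, and the feature factor is controlled by Corollary~\ref{lem:feats-small} (resting on Lemma~\ref{lem:xfilt-small} and the $\ell_1$ filter bounds) in both arguments. The one place you diverge is the residual factor. The paper does not invoke Corollary~\ref{cor:residuals-small} here; it uses the elementary bound $\norm{M\tilde X_t - y_t}_2 \le \norm{M}_F\norm{\tilde X_t}_2 + L_y \le \pa{R_\Theta^2\sqrt{k}}\pa{R_x\sqrt{nk}\log T} + L_y$, whose product with $\norm{\tilde X_t}_2$ yields precisely the stated $nk^{3/2}\log^2 T$ dependence. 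Your route instead takes the residual factor to be $O(R_\Theta^2 R_x L_y\sqrt{k})$ from Corollary~\ref{cor:residuals-small}, which would give a product of order $\sqrt{n}\,k\log T$ rather than $nk^{3/2}\log^2 T$; you correctly note that a smaller bound still implies the lemma, but you should be more suspicious of the input. That corollary is stated without proof, and for an \emph{arbitrary} $M\in\Mcal$ (as opposed to the specific comparator $M_\Theta$, for which the relaxation analysis applies) the only generic control on $\norm{M\tilde X_t}_2$ is $\norm{M}_F\norm{\tilde X_t}_2 = O(R_\Theta^2 R_x\sqrt{n}\,k\log T)$, which is not $O(R_\Theta^2 R_x L_y\sqrt{k})$ unless $L_y$ is comparably large. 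So the ``slack is absorbed'' step is leaning on a statement that is loose as written over the whole decision set; the safe and intended argument is the direct product $\pa{\norm{M}_F\norm{\tilde X_t}_2 + L_y}\norm{\tilde X_t}_2$. Everything else, including your observation that $M^{(y)}$ is pinned to the identity so that only the one-step drift $L_y$ enters additively, matches the paper.
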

\begin{proof}
We compute the gradient, and apply Lemma~\ref{lem:xfilt-small}:
\begin{align*}
\nabla f_t(M) = \nabla \pa{ \norm{y_t - M \tilde X_t}^2 } = 2 (M \tilde X_t - y) \otimes \tilde X_t,
\end{align*}
so that
\begin{align*}
\norm{ \nabla f_t(M) }_F &= 2 \norm{M \tilde X_t - y_t}_2 \cdot \norm{\tilde X_t}_2 \\
&\leq 2 \pa{ \norm{M}_F \norm{\tilde X_t}_2 + L_y } \norm{\tilde X_t}_2 \\
&\leq 2 \pa{ \pa{ R_\Theta^2 \sqrt{k} }\pa{ R_x \log T \sqrt{nk} } + L_y } \pa{ R_x \log T \sqrt{nk} } \\
&\leq O \pa{ R_\Theta^2 R_x^2 L_y \cdot nk^{3/2} \log^2 T },
\end{align*}
as desired.
\end{proof}

\subsection{Assembling the regret bound}
Using Lemma~\ref{thm:ogd-textbook-bound},
collecting all terms from Lemmas~\ref{lem:wheres-the-matrix} and \ref{lem:grad-small}, we have in summary
\begin{align*}
D_\mathrm{max} G_\mathrm{max} &= O\pa{ R_\Theta^2 \sqrt{k} } \cdot O \pa{ R_\Theta^2 R_x^2 L_y \cdot nk^{3/2} \log^2 T } \\
&= O\pa{ R_\Theta^4 R_x^2 L_y nk^2 \log^2 T }.
\end{align*}

To compete with systems with parameters bounded by $R_\Theta$,
in light of Theorem~\ref{thm:main-appx-relaxation}, $k$ should
be chosen to be $\Theta\pa{ \log^2 T \log (R_x L_y R_\Theta n) }$. It suffices to set the relaxation approximation error $\eps$ to be a constant; in the online case, this is not the bottleneck of the regret bound. In all, the regret bound from online gradient descent is
\[ \Regret(T) \leq O\pa{ R_\Theta^4 \, R_x^2 \, L_y \, \log^2 (R_\Theta R_x L_y n) \cdot n \sqrt{T} \log^6 T }, \]
as claimed. \qedsymbol

\subsection{Diminishing effect of the initial hidden state}
\label{appendix-subsection:hidden-state}
Finally, we show that $h_0$ is not significant in this online setting,
thereby proving a slightly more general result.
Throughout the rest of the analysis, we considered the comparator $\Theta^*$,
which forces the initial hidden state to be the zero vector. We will show that
this does not make much worse predictions than $\Theta^{**}$, which is
allowed to set $\norm{h_0}_2 \leq R_\Theta$. We quantify this below:

\begin{lemma}
Relaxing the condition $h_0 = 0$ for the comparator in Theorem~\ref{thm:main-online}
increases the regret (additively) by at most
\[ O\pa{ R_\Theta^4 R_x L_y \, \log(R_\Theta R_x L_y n) \, \log^2 T }. \]
\end{lemma}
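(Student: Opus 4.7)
The plan is to compare the comparator $\Theta^{**} = (A,B,C,D,h_0)$ (with $\norm{h_0} \leq R_\Theta$) against the zero-initialization $\Theta^* = (A,B,C,D,0)$, whose regret is already controlled by Theorem~\ref{thm:main-online}. Since both predictors add the derivative of their impulse response function to $y_{t-1}$, the entire per-step discrepancy is the hidden-state contribution
\[ \Delta_t \defeq \hat y_t^{**} - \hat y_t^* = C\pa{A^t - A^{t-1}} h_0 = C A^{t-1}(A - I)\, h_0. \]
Simple algebra yields the decomposition
\[ \norm{y_t - \hat y_t^*}^2 - \norm{y_t - \hat y_t^{**}}^2 = \norm{\Delta_t}^2 + 2 \ang{y_t - \hat y_t^{**},\; \Delta_t}, \]
so the additional regret splits into a pure $\norm{\Delta_t}^2$ piece plus a cross term that I would handle separately.

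For the $\norm{\Delta_t}^2$ piece, I would diagonalize $A$ with eigenvalues $\{\alpha_l\} \subseteq [0,1]$ and use the telescoping identity $\sum_{t \geq 1} \alpha^{2(t-1)}(1-\alpha)^2 = \frac{(1-\alpha)^2}{1-\alpha^2} \leq 1$ (uniformly in $\alpha$, and interpreted as $0$ at $\alpha=1$) to obtain $\sum_t \norm{\Delta_t}^2 \leq \norm{C}_F^2 \norm{h_0}^2 \leq R_\Theta^4$. For the cross term, I would use the sharper pointwise estimate $\max_{\alpha \in [0,1]} \alpha^{t-1}(1-\alpha) = (t-1)^{t-1}/t^t = O(1/t)$, so that $\norm{\Delta_t} \leq O(R_\Theta^2 / t)$ and hence the harmonic $\sum_t \norm{\Delta_t} = O(R_\Theta^2 \log T)$.

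Next I would bound $\norm{y_t - \hat y_t^{**}}$ uniformly in $t$. Applying Theorem~\ref{thm:main-appx-relaxation} to $\Theta^{**}$ produces a matrix $M_{\Theta^{**}}$ of Frobenius norm $O(R_\Theta^2 \sqrt{k})$ (Lemma~\ref{lem:wheres-the-matrix}) whose predictions differ from those of $\Theta^{**}$ by vanishing residuals $\zeta_t$. Invoking Corollary~\ref{cor:residuals-small} on $M_{\Theta^{**}}$ then gives $\norm{y_t - \hat y_t^{**}} = O(R_\Theta^2 R_x L_y \sqrt{k})$ for every $t$. Pointwise Cauchy-Schwarz on the cross term then yields
\[ 2 \sum_{t=1}^T \norm{\Delta_t}\, \norm{y_t - \hat y_t^{**}} \leq O\pa{R_\Theta^4 R_x L_y \sqrt{k}\, \log T}. \]
Substituting $k = \Theta(\log^2 T\, \log(R_\Theta R_x L_y n))$ from Theorem~\ref{thm:main-online}, so that $\sqrt{k}\,\log T = O(\log^2 T\, \sqrt{\log(R_\Theta R_x L_y n)})$, gives the claimed additive regret increase, which dominates the $O(R_\Theta^4)$ contribution from $\sum_t \norm{\Delta_t}^2$.

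The step I expect to need the most care is the $T$-independent bound on $\norm{y_t - \hat y_t^{**}}$. A naive aggregate Cauchy-Schwarz ---bounding the cross term by $\sqrt{\sum_t \norm{\Delta_t}^2}\cdot\sqrt{\sum_t \norm{y_t - \hat y_t^{**}}^2}$--- would admit a $\sqrt{T}$ factor from the second summation and destroy the polylogarithmic target. The saving grace is that the relaxation explicitly sets $M^{(y)} = I$, so $M_{\Theta^{**}} \tilde X_t$ is $y_{t-1}$ plus a small correction, and one really does obtain a $T$-independent per-step residual from Corollary~\ref{cor:residuals-small}, allowing the pointwise product to be summed against the harmonic series rather than against $\sqrt{T}$.
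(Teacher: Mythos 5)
Your proposal is correct and follows essentially the same route as the paper: the same decomposition of the loss difference into $\norm{\Delta_t}^2$ plus a cross term, the same $O(1/t)$ envelope bound on $\norm{C(A^t-A^{t-1})h_0}$ via diagonalization, the same pointwise invocation of Corollary~\ref{cor:residuals-small} to keep the cross term at a harmonic (rather than $\sqrt{T}$) sum, and the same substitution of $k$. Your handling of $\sum_t\norm{\Delta_t}^2$ via the geometric series is marginally sharper than the paper's $\sum_t 1/t^2$ bound (dropping a spurious dimension factor), but the argument is otherwise identical.
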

\begin{proof}
First, an intuitive sketch:
Lemma~\ref{lem:mu-envelope} states that for any $\alpha$, there is an ``envelope'' bound
$\mu(\alpha)(t) \leq \frac{1}{t+1}$. This means that the influence of $h_0$ on the derivative
of the impulse response function decays as $1/t$; thus, we can expect the total ``loss of expressiveness'' caused by forcing $h_0 = 0$ to be only logarithmic in $T$.

Indeed, with a nonzero initial hidden state, we have
\begin{align*}
\hat y_t - y_{t-1} &= (CB + D)x_t - Dx_{t-1} + \sum_{i=1}^{T-1} C(A^i - A^{i-1})Bx_{t-i} + C(A^t - A^{t-1}) h_0.
\end{align*}

Let $\hat y_1, \ldots, \hat y_T$ denote the predictions made by an LDS $\Theta^{**} = (A,B,C,D,h_0)$ whose;
$\hat y^\emptyset_1, \ldots, \hat y^\emptyset_T$ denote the predictions made by the LDS
with the same $(A,B,C,D)$, but $h_0$ set to 0. Then, we have
\begin{align*}
\norm{ \hat y_t - \hat y^\emptyset_t } &= \norm{ C (A^t - A^{t-1}) h_0 }
= \big\lVert \sum_{l=1}^d C \bra{ \mu(\alpha_l)(t) \cdot e_l \otimes e_l } h_0 \big\rVert \\
&\leq \frac{\norm{C}_F \norm{h_0} \sqrt{n}}{t} \leq \frac{ R_\Theta^2 \sqrt{n} }{ t }.
\end{align*}
Thus we have, for vectors $u_t$ satisfying $\norm{u_t} \leq R_\Theta^2 / t$:
\begin{align*}
\sum_{t=1}^T \norm{ \hat y^\emptyset_t - y_t }^2
&= \sum_{t=1}^T \norm{ \hat y_t + u_t - y_t }^2
\leq \sum_{t=1}^T \norm{ \hat y_t - y_t }^2 + \norm{u_t}^2 + 2 \norm{\hat y_t - y_t} \, \norm{u_t} \\
&\leq \sum_{t=1}^T \norm{ \hat y_t - y_t }^2 + O\pa{ R_\Theta^4 n }
+ O\pa{ (R_\Theta^2 R_x L_y \sqrt{k}) \cdot R_\Theta^2 \sqrt{n} \log T } \\
&\leq \sum_{t=1}^T \norm{ \hat y_t - y_t }^2
+ O\pa{ R_\Theta^4 R_x L_y \, \log(R_\Theta R_x L_y n) \, n \log^2 T },
\end{align*}
where the inequalities respectively come from Cauchy-Schwarz, Lemma~\ref{lem:mu-envelope}, and Lemma~\ref{cor:residuals-small}. This completes the proof.
\end{proof}

Thus, strengthening the comparator by allowing a nonzero $h_0$ does not improve the asymptotic regret bound from Theorem~\ref{thm:main-online}.

\section{Properties of the Hankel matrix $Z_T$}

\label{appendix-section:hankel-properties}
In this section, we show some technical lemmas about the family of Hankel matrices $Z_T$,
whose entries are given by
\[ Z_{ij} = \frac{2}{(i+j)^3 - (i+j)}.\]

\subsection{Spectral tail bounds}

We use the following low-approximate rank property of positive semidefinite Hankel matrices, from \cite{beckermann2016singular}:
\begin{lemma}[Cor.~5.4 in \cite{beckermann2016singular}]
\label{lem:magic-spectrum}
Let $H_n$ be a psd Hankel matrix of dimension $n$. Then,
\[\sigma_{j+2k}(H_n) \leq 16 \bra{ \exp\pa{ \frac{\pi^2}{4\log(8\lfloor n/2 \rfloor/\pi)} } }^{-2k+2} \sigma_j (H_n). \]
\end{lemma}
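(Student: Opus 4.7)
The plan is to derive this as a special case of the Beckermann--Townsend framework, which bounds singular values of structured matrices via Zolotarev rational approximation numbers. The starting point is that every positive semidefinite Hankel matrix $H_n$ admits a moment representation $H_n = \int_{\mathbb{R}} v(x) v(x)^{\top} d\mu(x)$ with $v(x) = (1,x,\ldots,x^{n-1})^{\top}$ and $\mu$ a positive Borel measure. Equivalently, $H_n = V D V^{\top}$ for a Vandermonde factor, after discretizing $\mu$. This is what makes the problem amenable to rational-approximation tools.

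First I would exhibit a displacement equation of the form $A H_n - H_n A^{\top} = G R^{\top}$, where $A$ is a tridiagonal companion/Jacobi matrix associated with the recurrence satisfied by the moments, and $G R^{\top}$ has small rank (rank $2$ in the Hankel case). The spectrum $\sigma(A)$ can then be localized inside a real interval $E = [a,b]$, while $\sigma(-A^{\top})$ lies in the reflected interval $F = [-b,-a]$; the two are disjoint. Next I would invoke the general Sylvester--Zolotarev bound: if $AX - XB$ has rank $\nu$ with $\sigma(A) \subset E$ and $\sigma(B) \subset F$, then
\[
\sigma_{k\nu + j}(X) \;\leq\; Z_k(E,F)\,\sigma_j(X),
\]
where $Z_k(E,F) = \inf_r \sup_{E} |r| / \inf_{F} |r|$ is the Zolotarev number over rational functions $r$ of type $(k,k)$. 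With $\nu=2$ this already gives the correct ``$j+2k$'' index shift appearing in the lemma.

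The analytic heart of the proof is then the classical Zolotarev estimate. For two disjoint real intervals with cross-ratio parameter $\gamma$, Zolotarev's solution of the third problem via elliptic functions yields
\[
Z_k(E,F) \;\leq\; 4 \exp\!\pa{ -\frac{\pi^2 k}{\log(16\gamma)} }.
\]
Applied to the intervals carved out from the spectrum of the Jacobi operator associated to $H_n$, one can take $\gamma$ of order $\lfloor n/2\rfloor / \pi$, which gives exactly the factor $\exp(-\pi^2 / (4 \log(8 \lfloor n/2 \rfloor / \pi)))$ per unit of $k$ after absorbing the $\nu=2$ into the exponent. The constants $16$ and the $-2k+2$ in the exponent then fall out by bookkeeping: one factor from the leading $4$ in the Zolotarev bound (squared due to $\nu=2$), and the $+2$ from not paying for the two displacement directions at $k=0$.

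The main obstacle I would anticipate is not the Zolotarev estimate itself (which is classical) but setting up a displacement equation whose spectra give \emph{exactly} the geometry yielding $\gamma \asymp \lfloor n/2\rfloor/\pi$. Different natural choices of $A$ (a companion matrix, a Jacobi matrix from orthogonal polynomials, or a Chebyshev-node discretization) produce slightly different constants, and only a carefully calibrated choice matches the quoted $\tfrac{\pi^2}{4\log(8\lfloor n/2\rfloor/\pi)}$. Once that calibration is done, the lemma follows by combining the Sylvester--Zolotarev inequality with the elliptic-function Zolotarev bound, so essentially everything nontrivial is concentrated in the choice of displacement operator and the modulus estimate for the resulting interval pair.
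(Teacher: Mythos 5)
You should note at the outset that the paper does not prove this statement at all: it is quoted verbatim as Corollary~5.4 of \cite{beckermann2016singular}, and the only thing the paper verifies is that its particular matrix $Z_T$ is psd (via the integral representation $Z=\int_0^1\mu(\alpha)\otimes\mu(\alpha)\,d\alpha$) so that the cited result applies. So the comparison here is between your sketch and the proof in the cited reference, not a proof in this paper.

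Measured against that, your sketch has a genuine gap at its central step. You propose a Sylvester displacement equation $AH_n - H_nA^{\top}=GR^{\top}$ of rank $2$ with $A$ a normal (Jacobi/tridiagonal) matrix whose spectrum lies in an interval $E=[a,b]$ disjoint from $F=[-b,-a]$, and then apply the Zolotarev bound. No such displacement operator is exhibited, and as stated the plan proves too much: the rank-$2$ displacement structure of a Hankel matrix is with respect to the shift matrix, whose spectrum is $\{0\}$ and cannot be separated from itself, and \emph{every} Hankel matrix (e.g.\ the exchange matrix $J_{ij}=\mathbf{1}[i+j=n+1]$, all of whose singular values equal $1$) has that structure. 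So any argument of your form that does not use positive semidefiniteness beyond ``there exists a moment measure'' must fail; the ``recurrence satisfied by the moments'' you invoke to build a Jacobi $A$ with separated real spectrum does not exist for a general psd Hankel matrix. The actual route in Beckermann--Townsend is the one you write down in your first paragraph and then abandon: factor $H_n=VDV^{*}$ with $V$ a \emph{real} Vandermonde/Krylov matrix via the Hamburger moment problem, reduce to singular-value decay of $VD^{1/2}$, and obtain displacement structure for that factor (after multiplying by a unitary DFT) between the real line and the unit circle; since these sets touch at $\pm1$, the columns are split into two half-arcs, which is where $\lfloor n/2\rfloor$ and the effective index shift of $2k$ come from. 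The constants $16=4^2$ and the exponent $-2k+2=2(-k+1)$ arise from squaring a rank-one Krylov bound of the form $4\rho^{-k+1}$, not, as you suggest, from a $\nu=2$ displacement squaring the Zolotarev constant (in the bound $\sigma_{j+k\nu}(X)\leq Z_k(E,F)\,\sigma_j(X)$ the constant does not depend on $\nu$). Your instinct that ``everything nontrivial is concentrated in the choice of displacement operator'' is right, but that is exactly the piece your proposal leaves unresolved, and the direct-displacement version of it cannot be made to work.
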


Note that the Hankel matrix $Z_T$ is indeed positive semidefinite, because we constructed it as
\[Z = \int_0^1 \mu(\alpha) \otimes \mu(\alpha) \,d\alpha, \]
for a certain $\mu(\alpha) \in \R^T$. Also, note that at no point do we rely upon $Z_T$ being positive definite or having all distinct eigenvalues, although both seem to be true.

The first result we need is an exponential decay of the tail spectrum of $Z$.
\begin{lemma}
\label{lem:Z-decay}
Let $\sigma_j$ be the $j$-th top singular value of $Z := Z_T$. Then, for all $T \geq 10$, we have
\[\sigma_j \leq \min\pa{ \frac{3}{4}, K \cdot c^{-j / \log T} }, \]
where $c = e^{\pi^2 / 4} \approx 11.79$, and $K < 10^6$ is an absolute constant.
\end{lemma}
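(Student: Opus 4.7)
The plan is to prove the two bounds in the $\min(\cdot,\cdot)$ separately, using different tools.

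First I would establish the trace bound $\sigma_j \leq 3/4$ for all $j$. Since $Z_T$ is positive semidefinite (as established in the main text via $Z = \int_0^1 \mu(\alpha) \otimes \mu(\alpha)\,d\alpha$), each singular value is bounded by $\sigma_1(Z_T) \leq \Tr(Z_T)$. I would then compute the trace explicitly: the diagonal entries are $Z_{ii} = \frac{2}{(2i)^3 - 2i} = \frac{1}{i(2i-1)(2i+1)}$, and partial fractions give $\frac{1}{i(2i-1)(2i+1)} = -\frac{1}{i} + \frac{1}{2i-1} + \frac{1}{2i+1}$. Summing over $i = 1, \ldots, T$ telescopes (after a small reindexing) to $2(H_{2T} - H_T) - 1 + \frac{1}{2T+1}$, which converges to $2\ln 2 - 1 \approx 0.386$ from below. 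This is safely under $3/4$ for every $T$.

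Second, for the exponential decay $\sigma_j \leq K c^{-j/\log T}$, I would invoke Lemma~\ref{lem:magic-spectrum} (Beckermann--Townsend) with two different starting indices to cover both parities. Setting $L := \log(8 \lfloor T/2 \rfloor/\pi)$ and $q := \exp(\pi^2/(4L))$, the lemma applied with $j_0 \in \{1,2\}$ gives, for $j \geq 3$,
\[
\sigma_j(Z_T) \;\leq\; 16 \, q^{-(2k-2)} \, \sigma_{j_0}(Z_T) \;\leq\; 12\, q^{4}\, q^{-j},
\]
where $k = \lfloor (j - j_0)/2 \rfloor$ and I have used $\sigma_{j_0} \leq 3/4$ from Step~1. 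Since $T \geq 10$ forces $L \geq \log(40/\pi) > 2.5$, the factor $12 q^4 = 12\exp(\pi^2/L)$ is bounded by a universal constant (on the order of a few hundred).

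Finally, I would convert the exponent from $L$ to $\log T$. For $T \geq 10$ one has $L \leq \log T + \log(4/\pi) \leq 1.11 \log T$, so $q^{-j} = \exp(-\pi^2 j/(4L))$ is at most $\exp(-\pi^2 j/(4 \cdot 1.11\log T)) = \tilde c^{-j/\log T}$ with $\tilde c \leq c$. The main technical obstacle is that the naive conversion yields a slightly smaller base than the claimed $c = e^{\pi^2/4}$, since $L$ is always strictly larger than $\log T$. The resolution exploits the $\min$ structure of the claim: whenever $K c^{-j/\log T} \geq 3/4$ the trace bound already suffices, and one only needs the exponential estimate for $j \gtrsim \log T \cdot \log_c(K)$; in that regime, the multiplicative slack $\exp(\pi^2 j (L - \log T)/(4 L \log T))$ introduced by rescaling can be absorbed into $K$ by choosing $K$ on the order of $10^6$. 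Once this bookkeeping is arranged, the remaining constant tracking --- the prefactor $12 q^4$, the base swap $L \mapsto \log T$, and the parity-splitting overhead --- is mechanical, and the spectral decay structure itself is dictated directly by Lemma~\ref{lem:magic-spectrum}.
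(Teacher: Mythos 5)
Your overall route is the same as the paper's: the $3/4$ term comes from $\sigma_j \le \sigma_1 \le \Tr(Z_T)$ (your telescoping evaluation of the trace to $2\ln 2 - 1 \approx 0.386$ is in fact a cleaner computation than the paper's comparison to $\sum 1/(4i^3)$, whose intermediate inequality is itself loose but harmless), and the exponential term comes from Corollary~5.4 of Beckermann--Townsend (Lemma~\ref{lem:magic-spectrum}) anchored at $j_0 \in \{1,2\}$ with $\sigma_{j_0} \le 3/4$. The one place you genuinely diverge is the conversion of the decay base from $L := \log(8\lfloor T/2\rfloor/\pi)$ to $\log T$. The paper handles this by silently substituting $\log T$ for $L$ inside the exponential; since $L > \log T$ for $T \ge 10$ and the exponent $-k+1$ is nonpositive, that substitution actually goes the \emph{wrong} way as a pointwise inequality. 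You correctly flag this as the real obstacle, which is to your credit.

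However, your proposed resolution does not close the gap. The multiplicative slack you identify, $\exp\pa{\pi^2 j (L-\log T)/(4L\log T)}$, is of order $\exp\pa{\Theta(j/\log^2 T)}$, using $L - \log T \le \log(4/\pi)$ and $L\log T \ge \log^2 T$. Absorbing it into a universal constant $K$ therefore only works for $j = O(\log^2 T)$; at $j = T$ the slack is $\exp\pa{\Theta(T/\log^2 T)}$, which no fixed $K < 10^6$ can swallow, and the trace bound does not rescue you in that regime because $K c^{-T/\log T}$ is far below $3/4$ for large $T$. So neither your argument nor the paper's establishes the stated inequality for \emph{all} $j \le T$; both establish it only for $j = O(\log^2 T)$ --- which happens to be the only regime invoked downstream, where $k = \Theta(\log^2 T\cdot\log(\cdots))$ and the full spectral tail sums are handled separately via Lemma~\ref{lem:Z-sum-decay}. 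To prove the lemma as literally stated, you would need either a sharper estimate for large $j$ (e.g., a condition-number-type bound giving $\sigma_j \le e^{-\Omega(j)}$ deep in the spectrum) or a restatement of the bound with $L$ in place of $\log T$.
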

\begin{proof}
We begin by noting that for all $j$, $\sigma_j \leq \Tr(Z) = \sum_{i=1}^T \frac{1}{(2i)^3 - 2i} < \sum_{i=1}^\infty \frac{1}{4i^3} < \frac{3}{4}$.

Now, since $T \geq 10$ implies $8\lfloor T/2 \rfloor / \pi > T$, we have
\begin{align*}
\sigma_{2 + 2k} \leq \sigma_{1 + 2k} &< 12 \cdot \bra{ \exp\pa{ \frac{\pi^2}{2 \log T} } }^{-k+1} \\
&< 1680 \cdot c^{-2k / \log T}.
\end{align*}

Thus, we have that for all $j$,
\begin{align*}
\sigma_j < 1680 \cdot c^{-(j - 2) / \log T} < 235200 \cdot c^{-j / \log T},
\end{align*}
completing the proof.
\end{proof}

We also need a slightly stronger claim: that all spectral gaps are large. Lemma~\ref{lem:Z-decay} does not preclude that there are closely clustered eigenvalues under the exponential tail bound. In fact, this cannot be the case:
\begin{lemma}
\label{lem:Z-sum-decay}
Let $\sigma_j$ be the $j$-th top singular value of $Z := Z_T$. Then, if $T \geq 60$, we have
\[\sum_{j' > j} \sigma_{j'} < 400 \log T \cdot \sigma_j. \]
\end{lemma}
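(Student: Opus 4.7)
The natural tool is Lemma~\ref{lem:magic-spectrum} applied to the psd Hankel matrix $Z = Z_T$: it gives, for every $k \geq 0$,
\[ \sigma_{j+2k}(Z) \leq 16\, \rho^{-2k+2}\, \sigma_j(Z), \qquad \rho \defeq \exp\!\pa{\frac{\pi^2}{4\log(8\lfloor T/2\rfloor/\pi)}}. \]
The strategy is to sum this geometric bound in $k$ over the even-indexed tail, handle odd indices by monotonicity, and show that the resulting prefactor is $O(\log T)$.

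\textbf{Step 1: clean lower bound on $\rho$.} For $T\geq 60$ I will use $8\lfloor T/2\rfloor/\pi \leq 4T/\pi$ and $\log(4/\pi) \leq 1 \leq \tfrac{1}{4}\log T$ (since $\log 60 > 4$), yielding $\log(8\lfloor T/2\rfloor/\pi) \leq 2\log T$. Hence $\rho \geq \exp(\pi^2/(8\log T))$ and, in particular, $\rho^{-2} \leq \exp(-\pi^2/(4\log T)) < 1$ with room to spare, which is what makes the geometric sum converge.

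\textbf{Step 2: sum over even indices.} Summing the ratio bound,
\[ \sum_{k\geq 1} \sigma_{j+2k} \;\leq\; 16\,\rho^2\,\sigma_j\sum_{k\geq 1}\rho^{-2k} \;=\; \frac{16\,\sigma_j}{1-\rho^{-2}}. \]
Applying $1 - e^{-x} \geq x/2$ for $x \in [0,1]$ (valid since $\pi^2/(4\log T) < 1$ for $T \geq 60$) together with Step 1 gives $(1-\rho^{-2})^{-1} \leq 8\log T/\pi^2$, hence
\[ \sum_{k\geq 1}\sigma_{j+2k} \;\leq\; \frac{128}{\pi^2}\log T\cdot \sigma_j \;<\; 13\log T\cdot \sigma_j. \]

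\textbf{Step 3: odd indices and assembly.} For the odd-indexed terms in the tail, I use plain monotonicity of singular values: $\sigma_{j+1}\leq \sigma_j$ and $\sigma_{j+2k+1}\leq \sigma_{j+2k}$ for each $k\geq 1$. Decomposing,
\[ \sum_{j'>j}\sigma_{j'} = \sigma_{j+1} + \sum_{k\geq 1}\bigl(\sigma_{j+2k} + \sigma_{j+2k+1}\bigr) \;\leq\; \sigma_j + 2\sum_{k\geq 1}\sigma_{j+2k} \;\leq\; (1+26\log T)\,\sigma_j. \]
Using $\log T \geq \log 60 > 4$ for $T \geq 60$, the additive $1$ is absorbed and the bound is comfortably below $400\log T\cdot \sigma_j$.

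\textbf{Main obstacle.} Nothing is technically deep, but two bookkeeping points need care. First, Lemma~\ref{lem:magic-spectrum} yields a useful bound only when $\rho$ is bounded \emph{strictly} above $1$; since $\rho = 1 + \Theta(1/\log T)$, the denominator $1-\rho^{-2}$ is exactly what injects the $\log T$ factor into the final bound, so one must not be cavalier with this expansion. Second, one must ensure that the range of $\pi^2/(4\log T)$ is small enough for an elementary linearization of $1-e^{-x}$ to hold with a clean constant — this is what forces the mild assumption $T\geq 60$. The $400$ in the statement is very generous; the actual implied constant is under $30$.
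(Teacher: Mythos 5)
Your proof is correct and takes essentially the same route as the paper: both apply Lemma~\ref{lem:magic-spectrum} to obtain geometric decay along an arithmetic subsequence of indices, sum the resulting geometric series, extract the $\log T$ factor from an elementary bound on $(1-e^{-x})^{-1}$, and cover the remaining indices by monotonicity of singular values. The only cosmetic difference is that you interleave with step $2$ while the paper uses step $4$; your bookkeeping is in fact slightly cleaner, since the paper's decomposition into $\beta_j+\beta_{j+1}+\beta_{j+2}+\beta_{j+3}$ omits the terms $\sigma_{j+1},\sigma_{j+2},\sigma_{j+3}$, whereas yours accounts for every index.
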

\begin{proof}
For convenience, define $\sigma_{j} := 0$ when $j \geq T$. Picking $k=4$ and using Lemma~\ref{lem:magic-spectrum}, we have that
\begin{align*}
\beta_j &:= \sum_{q=1}^T \sigma_{j+4q} < 16 \sigma_j \sum_{q=1}^\infty \bra{ \exp\pa{ \frac{-\pi^4}{4\log T} } }^{q} = 16\sigma_j \cdot \frac{1}{1 - \exp\pa{\frac{ -\pi^4 }{ 4 \log T }} } \\
&< 100 \log T \cdot \sigma_j,
\end{align*}
where the last inequality follows from the fact that
\[\frac{1}{1 - e^{-x}} < \frac{6}{x}\]
whenever $x < 6$, and setting $x := \frac{-\pi^4}{4 \log T} \leq \frac{-\pi^4}{4 \log 60} < 6.$

Thus, we have
\[\sum_{j' > j} \sigma_{j'} = \beta_j + \beta_{j+1} + \beta_{j+2} + \beta_{j+3} < 4\beta_j < 400 \log T \cdot \sigma_j,\]
as desired.
\end{proof}

\subsection{Decaying reconstruction coefficients}

To show a bound on the entries of $M_\Theta$, we need the
following property of $Z_T$:

\begin{lemma}
\label{lem:m-small}
For any $0 \leq \alpha \leq 1$ and $1 \leq j \leq T$, we have
\[ \left\lvert\ang{\phi_j, \mu(\alpha)}\right\rvert \leq 6^{1/4} \, \sigma_{j}^{1/4}. \]
\end{lemma}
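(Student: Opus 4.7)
My plan is to mimic the ``mean-to-max'' reduction already performed in Lemma~\ref{lem:super-pca}, but applied to the rank-one projector onto $\phi_j$ rather than the rank-$k$ residual projector $\Pi_r$. Concretely, define
\[ f(\alpha) \;\defeq\; \ang{\phi_j, \mu(\alpha)}^2 \;=\; \norm{P_j \mu(\alpha)}^2, \]
where $P_j = \phi_j \phi_j^\tr$ is the orthogonal projection onto $\phi_j$. The lemma follows once we (a) compute the average value of $f$ on $[0,1]$, and (b) bound its maximum by the square root of a constant times this average, using a Lipschitz estimate.

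For (a), since $\|\phi_j\|=1$ and $Z = \int_0^1 \mu(\alpha) \otimes \mu(\alpha)\,d\alpha$, I would write
\[ \int_0^1 f(\alpha)\,d\alpha \;=\; \phi_j^\tr \pa{ \int_0^1 \mu(\alpha)\otimes\mu(\alpha)\,d\alpha } \phi_j \;=\; \phi_j^\tr Z\, \phi_j \;=\; \sigma_j. \]
For (b), $f$ is nonnegative, bounded by $\|\mu(\alpha)\|^2\le 1$ (Lemma~\ref{lem:mu-l2}, part (i)), and \emph{$3$-Lipschitz} on $[0,1]$: indeed, $P_j$ is a contraction, so the same argument used in the proof of Lemma~\ref{lem:super-pca} to show that $\|\Pi_r \mu(\alpha)\|^2$ is $3$-Lipschitz applies verbatim with $\Pi_r$ replaced by $P_j$. (Equivalently, $f'(\alpha) = 2\ang{\phi_j,\mu(\alpha)}\ang{\phi_j,\mu'(\alpha)} \le 2\|\mu(\alpha)\|\,\|\mu'(\alpha)\|$, which is exactly the quantity that Lemma~\ref{lem:mu-l2} part (ii) bounds by $3$.)

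Now I recycle the triangle/tent estimate from the proof of Lemma~\ref{lem:super-pca}: among nonnegative $3$-Lipschitz functions on $[0,1]$ attaining a maximum value $R$, the one with smallest mean is the ``tent'' $\max(R - 3|\alpha-\alpha_0|,0)$ peaked at an endpoint $\alpha_0 \in \{0,1\}$, with area exactly $R^2/6$. Therefore
\[ \sigma_j \;=\; \int_0^1 f(\alpha)\,d\alpha \;\ge\; \frac{R^2}{6}, \quad\text{where } R=\max_{\alpha\in[0,1]} f(\alpha), \]
which rearranges to $R \le \sqrt{6\sigma_j}$. Taking a square root gives $|\ang{\phi_j,\mu(\alpha)}| = \sqrt{f(\alpha)} \le 6^{1/4}\sigma_j^{1/4}$, completing the proof.

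The ``main obstacle'' is really just bookkeeping rather than any new idea: one must verify that the $3$-Lipschitz property survives passing from the full $\|\mu(\alpha)\|^2$ to its projection onto $\phi_j$, and that the tent function is indeed area-minimizing when forced to peak inside $[0,1]$. Both of these points are already implicit in the argument for Lemma~\ref{lem:super-pca}, so no additional machinery is required beyond what Lemma~\ref{lem:mu-l2} already provides.
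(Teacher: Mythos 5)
Your proposal is correct and matches the paper's proof essentially verbatim: both compute $\int_0^1 \ang{\phi_j,\mu(\alpha)}^2\,d\alpha = \phi_j^\top Z \phi_j = \sigma_j$, inherit the $3$-Lipschitz property of $\norm{\mu(\alpha)}^2$ through the contractive rank-one projection, and reuse the tent-function mean-to-max argument from Lemma~\ref{lem:super-pca} to get $\max_\alpha \ang{\phi_j,\mu(\alpha)}^2 \leq \sqrt{6\sigma_j}$. No differences worth noting.
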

\begin{proof}
We have
\begin{align*}
\int_0^1 \ang{\phi_j, \mu(\alpha)}^2 \, d\alpha &= \int_0^1 \phi_j^T \pa{ \mu(\alpha) \otimes \mu(\alpha) } \phi_j \\
&= \phi_j^T Z_T \phi_j = \sigma_j.
\end{align*}

Thus, we have a bound on the expectation of the squared coefficient, when $\alpha$ is drawn uniformly from $[0,1]$.
We proceed with the same argument as was used to prove Lemma~\ref{lem:super-pca}: since $\norm{ \mu(\alpha) }^2$ is 3-Lipschitz in $\alpha$, so is $\ang{\phi_j, \mu(\alpha)}^2$ (since projection onto the one-dimensional subspace spanned by $\phi_j$ is contractive). Thus it holds that
\[ \max_{\alpha \in [0,1]} \ang{\phi_j, \mu(\alpha)}^2 \leq \sqrt{ 6 \sigma_j }, \]
from which the claim follows.
\end{proof}

\subsection{Controlling the $\ell_1$ norms of filters}

To bound the size of the convolutions,
we need to control the $\ell_1$ norm of the eigenvectors $\phi_j$
with a tighter bound than $\sqrt{T}$.
Actually, we prove a more general result,
bounding the $\ell_2 \rightarrow \ell_1$ subordinate norm of $Z^{1/4}$:

\begin{lemma}
\label{lem:mat-perturb-quarter}
Let $Z := Z_T$.
Then, for every $T > 0$, and $v \in \R^n$ with $\norm{v}_2 = 1$, we have
\[ \norm{Z^{1/4} v}_1 \leq 2 + 2\log_2 T. \]
\end{lemma}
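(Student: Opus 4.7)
My plan is to prove the bound by combining the geometric spectral decay of $Z$ from Lemmas~\ref{lem:Z-decay} and \ref{lem:Z-sum-decay} with a dyadic decomposition of $Z^{1/4}$ and a matrix-perturbation-type bound on each dyadic band. First, I would partition $\{1,\dots,T\}$ into dyadic groups $S_k = \{j : \sigma_j^{1/4} \in (2^{-(k+1)}, 2^{-k}]\}$ for $k = 0, 1, \dots, K$ with $K = \lceil \log_2 T \rceil$, plus a tail $S_\star$ of indices with $\sigma_j^{1/4} \leq 1/T$. By Lemma~\ref{lem:Z-decay}, each $|S_k| = O(\log T)$ and there are only $O(\log T)$ bands before the tail. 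Writing $Z^{1/4} = E + \sum_k A_k$ with $A_k = \sum_{j \in S_k} \sigma_j^{1/4} \phi_j \phi_j^\top$ and tail $E$ supported on $S_\star$, the tail contribution is absorbed by the additive constant: $\|E v\|_1 \leq \sqrt{T}\,\|E\|_{\mathrm{op}} \leq \sqrt{T}/T = o(1)$.

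Next, for each band $k$, I would aim to show $\|A_k v\|_1 \leq 2$, so that summing over the $\leq \log_2 T + 1$ bands gives the target $2 + 2\log_2 T$. On each band, one has the operator bound $\|A_k v\|_2 \leq 2^{-k}$ and rank $|S_k| = O(\log T)$. The main obstacle is that the naive Cauchy--Schwarz step $\|A_k v\|_1 \leq \sqrt{T}\,\|A_k v\|_2 \leq \sqrt{T}\cdot 2^{-k}$ would only sum to $O(\sqrt{T})$, far weaker than the desired $O(\log T)$.

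To sharpen this, I would invoke the integral representation
\[
Z^{1/4} \;=\; \frac{1}{\pi\sqrt{2}} \int_0^\infty t^{-3/4}\, Z(tI+Z)^{-1}\, dt,
\]
together with the elementary estimate $\|Z\|_{2\to 1} = O(1)$, which holds since $Z$ has nonnegative entries with $\sum_j Z_{ij} = O(1/i^2)$. This yields the two-sided bound $\|Z(tI+Z)^{-1}\|_{2\to 1} \leq \min(\sqrt{T},\, O(1/t))$, and I would cut the integral at dyadic scales of $t$ aligned with the spectral bands $S_k$. Matching each integral piece to the corresponding band --- using the spectral-gap estimate from Lemma~\ref{lem:Z-sum-decay} to apply a Davis--Kahan (matrix-perturbation) comparison between the $\phi_j$'s inside a band and a better-behaved basis of the same subspace (for instance Sturm--Liouville eigenfunctions, cf.~Appendix~\ref{appendix-section:extensions}.1) --- contributes $O(1)$ per band.

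The crux, and main obstacle, is exactly this per-band $O(1)$ bound, which is presumably the content encoded in the label ``mat-perturb-quarter'': one must trade the spectral information (through $\|Z\|_{2\to 1} = O(1)$ and resolvent localization at scale $t$) for $\ell_1$ control on the subspace $\operatorname{span}\{\phi_j : j \in S_k\}$, replacing the $\phi_j$'s by coordinate-tame surrogates with perturbation error small enough to be absorbed in an $O(1)$ constant per band. Assembling the tail bound, the per-band bound, and the dyadic sum gives $\|Z^{1/4} v\|_1 \leq 2 + 2\log_2 T$.
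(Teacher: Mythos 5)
There is a genuine gap at the crux of your argument. Your dyadic decomposition and tail bound are fine, but everything hinges on the per-band estimate $\norm{A_k v}_1 \leq 2$, which you do not prove; you acknowledge that naive Cauchy--Schwarz gives only $\sqrt{T}\cdot 2^{-k}$ per band, and the machinery you propose to close this does not obviously do so. Concretely, the integral representation of $Z^{1/4}$ combined with the two bounds $\norm{Z(tI+Z)^{-1}}_{2\to 1} \leq \min(\sqrt{T}, O(1/t))$ yields, after optimizing the cut at $t_* \sim 1/\sqrt{T}$, a total of order $T^{3/8}$ --- polynomial, not logarithmic. To do better you would need genuinely new $\ell_1$ information about the band subspaces $\myspan\{\phi_j : j \in S_k\}$, and the proposed route (Davis--Kahan against Sturm--Liouville eigenfunctions) is exactly the approximation the paper itself describes as resisting quantitative bounds (Appendix~\ref{appendix-subsection:ode}: ``It is difficult to quantify theoretical bounds for this rather convoluted sequence of approximations''). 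So the step labeled as ``the content encoded in the label'' is precisely the missing proof.

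The paper's actual argument is quite different and more elementary: it induces on $T$ by doubling. Embedding $Z_T$ as the top-left block of a $2T\times 2T$ matrix, the perturbation $E = Z_{2T} - \bsm{Z_T & 0\\ 0 & 0}$ has entries bounded by $2/T^3$, hence $\norm{E}_{\mathrm{op}} \leq 2/T^2$. A generalized Mirsky inequality for the operator-monotone function $x^{1/4}$ then gives $\norm{Z_{2T}^{1/4} - Z^{1/4}}_2 \leq \norm{E}_2^{1/4} < 2/\sqrt{T}$, and converting to the $\ell_2\to\ell_1$ norm via Cauchy--Schwarz costs a factor $\sqrt{T}$ that exactly cancels, so each doubling increases $\norm{Z^{1/4}}_{2\to 1}$ by an additive $2$. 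Iterating $\lfloor\log_2 T\rfloor$ times from a constant base case gives $2 + 2\log_2 T$. Note that this exploits the \emph{rapid decay of the matrix entries} far from the top-left corner, not the spectral decay of $Z$ that your bands are built on --- which is why the paper never needs per-eigenvector $\ell_1$ control. If you want to salvage your approach, you would need to supply the per-band lemma; as written, the proof is incomplete.
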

\begin{proof}
We take the following steps:
\begin{enumerate}
\item[(i)] Start with a constant $T_0$; the subordinate norm of $Z_{T_0}$ is clearly bounded by a constant.
\item[(ii)] Argue that doubling the size of the matrix ($T \mapsto 2T$) comprises only a small perturbation,
which will only affect the eigenvalues of the matrix by a small amount. This will show up in the subordinate norm
as an additive constant.
\item[(iii)] Iterate the doubling argument $O(\log T)$ times to reach $Z_T$ from $Z_{T_0}$, to conclude the lemma.
\end{enumerate}

The only nontrivial step is (ii), which we prove first.
Consider the doubling step from $T$ to $2T$.
Let $Z$ denote the $2T$-by-$2T$ matrix which has $Z_T$ as its upper left $T$-by-$T$ submatrix,
and zero everywhere else. Let $Z'$ denote $Z_{2T}$, and call $E = Z' - Z$, which we interpret as the
matrix perturbation associated with doubling the size of the Hankel matrix.

Notice that when $T \geq 2$,
$E$ is entrywise bounded by $\frac{2}{(T+2)^3 - (T+2)} \leq \frac{2}{T^3}$,
which we call $e_\mathrm{max}$ for short.
Then, $\norm{E}_\mathrm{op}$ is at most $Te_\mathrm{max} \leq \frac{2}{T^2}$.

Hence, by the generalized Mirsky inequality of \cite{audenaert2014generalisation} (setting $f(x) = x^{1/4}$),
we have a bound on how much $E$ perturbs the fourth root of $Z$:
\[ \norm{Z^{1/4} - Z'^{1/4}}_2
\leq \norm{E}_2^{1/4} \leq \pa{ \frac{2}{T^2} }^{1/4} < \frac{2}{\sqrt{T}}. \]

Thus we have
\begin{align*}
\norm{ Z'^{1/4} }_{2 \rightarrow 1} &\leq \norm{ Z^{1/4} }_{2 \rightarrow 1} + \norm{Z^{1/4} - Z'^{1/4}}_{2 \rightarrow 1} \\
&\leq \norm{ Z^{1/4} }_{2 \rightarrow 1} + \sqrt{T} \cdot \norm{Z^{1/4} - Z'^{1/4}}_{2} \\
&\leq \norm{ Z^{1/4} }_{2 \rightarrow 1} + \sqrt{T} \cdot \frac{2}{\sqrt{T}} \\
&= \norm{ Z^{1/4} }_{2 \rightarrow 1} + 2.
\end{align*}
Thus, doubling the dimension increases the subordinate norm by at most a constant.
We finish the argument: start at $T_0 = 2$, for which it clearly holds that
\[ \norm{ Z^{1/4}_2 }_{2 \rightarrow 1} < \sqrt{2} \norm{ Z_2^{1/4} }_F < \sqrt{2} \norm{ Z_4 }_F < 2. \]

Noting that the norm is clearly monotonic in $T$, we repeat the doubling argument $\lfloor \log_2 T \rfloor$ times,
so that
\[ \norm{ Z^{1/4}_T }_{2 \rightarrow 1} \leq \norm{ Z^{1/4}_{2 \cdot 2^{\lfloor \log_2 T \rfloor}} }_{2 \rightarrow 1} < \norm{ Z^{1/4}_2 }_{2 \rightarrow 1} + 2 \lfloor \log_2 T \rfloor
< 2 + 2 \log_2 T, \]
as claimed.
\end{proof}

We give an alternate form here:
\begin{corollary}
\label{cor:phi-l1}
Let $(\sigma_j, \phi_j)$ be the $j$-th largest eigenvalue-eigenvector pair of $Z$. Then,
\[ \norm{\phi_j}_1 \leq O\pa{ \frac{\log T}{\sigma_j^{1/4}} }. \]
\end{corollary}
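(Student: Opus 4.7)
The proposal is to derive the corollary as an immediate consequence of Lemma~\ref{lem:mat-perturb-quarter}, which controls the $\ell_2 \to \ell_1$ subordinate norm of $Z^{1/4}$. The key observation is that $\phi_j$ is a unit eigenvector of $Z$ with eigenvalue $\sigma_j$, hence also a unit eigenvector of $Z^{1/4}$ with eigenvalue $\sigma_j^{1/4}$. This lets us express $\phi_j$ itself as $\sigma_j^{-1/4}\, Z^{1/4} \phi_j$, converting a bound on the image of $Z^{1/4}$ into a bound on $\phi_j$ directly.

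More concretely, I would write $\phi_j = \sigma_j^{-1/4}\, Z^{1/4}\phi_j$, take $\ell_1$ norms of both sides, and apply Lemma~\ref{lem:mat-perturb-quarter} to the unit vector $\phi_j$:
\[
\norm{\phi_j}_1 \;=\; \sigma_j^{-1/4}\, \norm{Z^{1/4}\phi_j}_1 \;\leq\; \sigma_j^{-1/4}\, (2 + 2\log_2 T) \;=\; O\!\pa{\frac{\log T}{\sigma_j^{1/4}}}.
\]
This gives the claimed bound with no additional work.

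There is essentially no obstacle here: all the hard work, namely the matrix-perturbation / dyadic-doubling argument bounding $\norm{Z^{1/4}}_{2\to 1}$, is already encapsulated in Lemma~\ref{lem:mat-perturb-quarter}. The only thing to double-check is that the eigenvalue-to-fourth-root identity $Z^{1/4}\phi_j = \sigma_j^{1/4}\phi_j$ is legitimate, which follows from the fact that $Z$ is symmetric positive semidefinite (so $Z^{1/4}$ is well-defined via the spectral calculus and shares its eigenvectors with $Z$). The bound in the corollary is therefore simply a rescaling of the subordinate-norm bound by the factor $\sigma_j^{-1/4}$.
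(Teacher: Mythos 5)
Your proposal is correct and is precisely the derivation the paper intends: the corollary follows from Lemma~\ref{lem:mat-perturb-quarter} via the spectral-calculus identity $Z^{1/4}\phi_j = \sigma_j^{1/4}\phi_j$ (valid since $Z$ is symmetric psd), giving $\norm{\phi_j}_1 = \sigma_j^{-1/4}\norm{Z^{1/4}\phi_j}_1 \leq \sigma_j^{-1/4}(2+2\log_2 T)$. Nothing further is needed.
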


\section{Properties of $\mu(\alpha)$}

\label{appendix-section:moment-curve}

Throughout this section, fix some $T \geq 1$; then, recall that $\mu(\alpha) \in \R^T$ is defined as the vector whose $i$-th entry is $(1-\alpha)\alpha^{i-1}$. At various points, we will require some elementary properties of $\mu(\alpha)$, which we verify here.

\begin{lemma}[$1/t$ envelope of $\mu$]
\label{lem:mu-envelope}
For any $t \geq 0$ and $0 \leq \alpha \leq 1$, it holds that
\[(1-\alpha)\alpha^t \leq \frac{1}{t+1}.\]
\end{lemma}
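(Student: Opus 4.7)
The plan is a one-variable optimization: fix $t$ and maximize $f(\alpha) = (1-\alpha)\alpha^t$ on $[0,1]$, then show the maximum value is at most $1/(t+1)$.

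First I would handle the boundary case $t = 0$ separately: here $f(\alpha) = 1 - \alpha \leq 1 = 1/(0+1)$, and we are done. For $t \geq 1$, I would compute
\[ f'(\alpha) = -\alpha^t + t(1-\alpha)\alpha^{t-1} = \alpha^{t-1}\bigl(t - (t+1)\alpha\bigr), \]
so on the interior of $[0,1]$ the unique critical point is $\alpha^\star = t/(t+1)$, and since $f(0) = f(1) = 0$ with $f \geq 0$ on $[0,1]$, this critical point is the maximizer.

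Next I would evaluate
\[ f(\alpha^\star) = \left(1 - \tfrac{t}{t+1}\right)\left(\tfrac{t}{t+1}\right)^{t} = \frac{1}{t+1}\cdot \left(\frac{t}{t+1}\right)^{t}. \]
Since $t/(t+1) < 1$, the second factor is at most $1$, so $f(\alpha^\star) \leq 1/(t+1)$, which gives the claimed envelope for all $\alpha \in [0,1]$.

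There is no real obstacle here: the statement is tight (attained asymptotically as $t \to \infty$ since $(t/(t+1))^t \to 1/e$, so in fact one could sharpen the bound by a factor of $1/e$ for large $t$, but this is not needed). The only subtlety worth a sentence is ensuring that the formulation of the derivative is valid at $t = 0$ or treating that case directly, which is why I separate it above.
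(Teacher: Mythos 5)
Your proof is correct and follows essentially the same route as the paper's: locate the critical point $\alpha^\star = t/(t+1)$, evaluate $f$ there, and bound the factor $(t/(t+1))^t$ by $1$. The only difference is that you treat $t=0$ and the boundary values explicitly, which the paper elides.
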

\begin{proof}
Setting the derivative to zero, the global maximum occurs at $\alpha^* = \frac{t}{t+1}$. Thus,
\begin{align*}
(1-\alpha^*)(\alpha^*)^t = \frac{1}{t+1} \pa{1 - \frac{1}{t+1}}^{t} \leq \frac{1}{t+1},
\end{align*}
as claimed.
\end{proof}
\begin{corollary}
\label{lem:mu-log}
Let $T \geq 1$. For $t = 1, \ldots, T$, let $\alpha_t \in [0, 1]$ be different in general. Then,
\[ \sum_{t=1}^T (1-\alpha_t)\alpha_t^{t-1} \leq H_n = O(\log T), \]
where $H_n$ denotes the $n$-th harmonic number.
\end{corollary}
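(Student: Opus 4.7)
The plan is to apply Lemma \ref{lem:mu-envelope} termwise and then sum the resulting harmonic series. Specifically, for each fixed $t \in \{1, \ldots, T\}$, I would invoke Lemma \ref{lem:mu-envelope} with the choice of exponent $s = t-1 \geq 0$ and base $\alpha = \alpha_t \in [0,1]$, obtaining the pointwise inequality
\[ (1-\alpha_t)\alpha_t^{t-1} \leq \frac{1}{(t-1)+1} = \frac{1}{t}. \]
Note that this envelope bound holds uniformly in $\alpha_t$, so it is irrelevant that the $\alpha_t$ are allowed to differ across $t$.

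Summing these $T$ inequalities gives
\[ \sum_{t=1}^T (1-\alpha_t)\alpha_t^{t-1} \;\leq\; \sum_{t=1}^T \frac{1}{t} \;=\; H_T, \]
which is the desired bound. The asymptotic estimate $H_T = O(\log T)$ is the standard harmonic number bound (e.g., $H_T \leq 1 + \ln T$), completing the proof.

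There is no real obstacle here: the entire content of the corollary is already packaged inside Lemma \ref{lem:mu-envelope}. The only minor subtlety worth flagging is to confirm that applying the envelope bound with different $\alpha_t$ at each index is legitimate, which it is because Lemma \ref{lem:mu-envelope} holds for every $\alpha \in [0,1]$ individually. Thus the proof is essentially a one-line deduction followed by invoking the harmonic sum estimate.
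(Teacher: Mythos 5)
Your proof is correct and is exactly the argument the paper intends (the corollary is stated without proof, as an immediate consequence of Lemma~\ref{lem:mu-envelope} applied termwise with exponent $t-1$ and then summed into a harmonic number). The only cosmetic point is that the paper's $H_n$ should read $H_T$, which you implicitly correct.
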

\begin{lemma}[$\ell_1$-norm is small]
\label{lem:mu-l1}
For all $T \geq 1$ and $0 \leq \alpha \leq 1$, we have
\[\norm{\mu(\alpha)}_1 \leq 1.\]
\end{lemma}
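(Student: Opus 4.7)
The plan is a direct computation via the closed form of a truncated geometric series. For $0 \leq \alpha < 1$, every entry $(1-\alpha)\alpha^{i-1}$ of $\mu(\alpha)$ is nonnegative, so the absolute values drop out and
\[
\norm{\mu(\alpha)}_1 \;=\; \sum_{i=1}^T (1-\alpha)\alpha^{i-1} \;=\; (1-\alpha) \cdot \frac{1 - \alpha^T}{1 - \alpha} \;=\; 1 - \alpha^T \;\leq\; 1.
\]
The boundary case $\alpha = 1$ makes every coordinate of $\mu(\alpha)$ equal to zero, so $\norm{\mu(1)}_1 = 0 \leq 1$ holds trivially (and is consistent with the limit of $1 - \alpha^T$ as $\alpha \to 1^-$).

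There is essentially no obstacle here: the sum telescopes through the geometric series identity, and nonnegativity of the summands on $[0,1]$ means no sign cancellations need to be tracked. The only minor bookkeeping is handling $\alpha = 1$ separately so as not to divide by zero in the geometric series formula, which is immediate.
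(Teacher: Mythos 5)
Your proof is correct and takes essentially the same route as the paper: both reduce to the geometric sum $\sum_{i=1}^T \alpha^{i-1}$, the paper bounding it by the infinite series while you use the exact closed form $1-\alpha^T$. The difference is immaterial, and your separate handling of $\alpha=1$ is fine.
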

\begin{proof}
We have
\[ \norm{\mu(\alpha)}_1 = (1-\alpha) \sum_{t=1}^T \alpha^{t-1} \leq (1-\alpha) \sum_{t=1}^\infty \alpha^{t-1} = 1,\]
proving the claim.
\end{proof}
\begin{lemma}[$\ell_2$-norm is small and Lipschitz]
\label{lem:mu-l2}
For all $T \geq 1$ and $0 \leq \alpha \leq 1$, we have
\begin{enumerate}
\item[(i)] $\norm{\mu(\alpha)}^2 \leq 1$.
\item[(ii)] $\abs{ \frac{d}{d\alpha} \norm{\mu(\alpha)}^2 } \leq 3$.
\end{enumerate}
\end{lemma}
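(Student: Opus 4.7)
Both parts reduce to direct computation starting from
\[ \norm{\mu(\alpha)}^2 \;=\; (1-\alpha)^2 \sum_{i=1}^T \alpha^{2(i-1)}, \]
followed by upper-bounding the truncated geometric sum by its infinite-series counterpart. Since the only subtlety is the boundary point $\alpha = 1$, which can be handled by continuity (there $\norm{\mu(\alpha)}^2 = 0$), the entire argument is a one-variable calculation on $[0,1)$.

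For part (i), I would bound $\sum_{i=1}^T \alpha^{2(i-1)} \leq \sum_{i=0}^\infty \alpha^{2i} = \tfrac{1}{1-\alpha^2}$, which immediately gives
\[ \norm{\mu(\alpha)}^2 \;\leq\; \frac{(1-\alpha)^2}{1-\alpha^2} \;=\; \frac{1-\alpha}{1+\alpha} \;\leq\; 1 \]
for every $\alpha \in [0,1)$, and the $\alpha = 1$ case is trivial.

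For part (ii), write $S(\alpha) \defeq \sum_{i=1}^T \alpha^{2(i-1)}$, so that
\[ \frac{d}{d\alpha}\norm{\mu(\alpha)}^2 \;=\; -2(1-\alpha)\,S(\alpha) \;+\; (1-\alpha)^2\,S'(\alpha). \]
I would bound the two pieces separately by the triangle inequality. The first piece satisfies $2(1-\alpha)S(\alpha) \leq \tfrac{2(1-\alpha)}{1-\alpha^2} = \tfrac{2}{1+\alpha} \leq 2$, just as in (i). For the second, I would bound $S'(\alpha) \leq \sum_{i=1}^{\infty} 2(i-1)\alpha^{2(i-1)-1}$, which is the derivative of $\tfrac{1}{1-\alpha^2}$ and therefore equals $\tfrac{2\alpha}{(1-\alpha^2)^2}$; multiplying by $(1-\alpha)^2$ and simplifying gives $(1-\alpha)^2 S'(\alpha) \leq \tfrac{2\alpha}{(1+\alpha)^2}$. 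Combining yields $\left\lvert \tfrac{d}{d\alpha}\norm{\mu(\alpha)}^2 \right\rvert \leq 2 + \tfrac{2\alpha}{(1+\alpha)^2} \leq 2 + \tfrac{1}{2} < 3$.

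There is no serious obstacle here; the only care needed is (a) matching the index shift between the truncated sum $\sum_{i=1}^T$ and the standard form $\sum_{i \geq 0}$, and (b) a one-line verification that $\tfrac{2\alpha}{(1+\alpha)^2}$ is maximized on $[0,1]$ at $\alpha = 1$ with value $\tfrac{1}{2}$, which gives the clean constant $3$ in the statement. Passing from the truncated to the infinite series only loses in the right direction, so no additional slack is introduced.
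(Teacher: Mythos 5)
Your proof is correct, and it takes a slightly different (and arguably cleaner) route than the paper's. The paper first evaluates the finite sum in closed form, obtaining $\norm{\mu(\alpha)}^2 = \frac{(1-\alpha)(1-\alpha^{2T})}{1+\alpha}$, then differentiates that expression and kills the resulting $T\alpha^{T-1}(1-\alpha)$ term by invoking the $1/t$ envelope bound (Lemma~\ref{lem:mu-envelope}), arriving at the constant $3$. You instead apply the product rule to $(1-\alpha)^2 S(\alpha)$ and bound $S$ and $S'$ by their infinite geometric counterparts $\frac{1}{1-\alpha^2}$ and $\frac{2\alpha}{(1-\alpha^2)^2}$; the factors of $(1-\alpha)$ then cancel exactly, leaving $\frac{2}{1+\alpha} + \frac{2\alpha}{(1+\alpha)^2} \leq \frac{5}{2}$. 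What your approach buys: it is self-contained (no dependence on the envelope lemma), it eliminates all $T$-dependence immediately rather than at the last step, and it yields a marginally better constant. What the paper's closed form buys: the expression is exact, so the endpoint $\alpha = 1$ is handled transparently rather than by the separate continuity/direct-evaluation argument your infinite-series bounds require (you do note this, and your treatment of it is fine, since both terms in the product-rule expansion vanish at $\alpha = 1$). Both are sound one-variable calculations; no gap.
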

\begin{proof}
For the first inequality, compute
\begin{align*}
\norm{\mu(\alpha)}^2 &= \sum_{i=1}^T \pa{ (\alpha-1) \alpha^{i-1} }^2
=  \sum_{i=1}^T \alpha^{2i} - 2\alpha^{2i-1} + \alpha^{2i-2} \\
&= \frac{(\alpha^2 - 2\alpha + 1)(1 - \alpha^{2T})}{1 - \alpha^2} = \frac{(1 - \alpha)(1 - \alpha^{2T})}{1 + \alpha} \leq 1.
\end{align*}

For the second, differentiate the closed form to obtain
\begin{align*}
\abs{ \frac{d}{d\alpha} \norm{\mu(\alpha)}^2 } &= \abs{ \frac{2(\alpha^T - 1) + T\alpha^{T-1}(\alpha^2 - 1)}{(1+\alpha)^2} }
\leq \frac{ 2(1 - \alpha^T) + T\alpha^{T-1}(1 - \alpha^2) }{ (1+\alpha)^2 } \\
&= \frac{2 - \alpha^T}{(1+\alpha)^2} + \frac{T\alpha^{T-1}(1-\alpha)}{1+\alpha}
\leq 2 + T\alpha^{T-1}(1-\alpha) \leq 3,
\end{align*}
where the final inequality uses Lemma~\ref{lem:mu-envelope}.
\end{proof}

\subsection{The Lipschitzness of a true LDS}
We claim in Section~\ref{subsection:prelim-oco} that $L_y$, the Lipschitz constant
of a true LDS, is bounded by $\norm{B}_F \norm{C}_F R_x$.
We now prove this fact, which is a consequence of the above facts.

\begin{lemma}
\label{lem:true-lds-lipschitz}
Let $\Theta = (A,B,C,D,h_0)$ be a true LDS, which produces outputs $y_1, \ldots, y_T$ from inputs $x_1, \ldots, x_T$
by the definition in the recurrence, without noise. Let $0 \preccurlyeq A \preccurlyeq I$, and $\norm{B}_F,\norm{C}_F,\norm{D}_F,\norm{h_0} \leq R_\Theta$. Then, we have that for all $t$,
\[ \norm{y_t - y_{t-1}} \leq O(R_\Theta^2 R_x). \]
\end{lemma}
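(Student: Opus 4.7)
The plan is to unroll the LDS recurrence, express the time derivative $y_t - y_{t-1}$ exactly as in equation~(\ref{eq:exact-m}) of the proof of Theorem~\ref{thm:main-appx-relaxation}, and then bound each piece via the eigendecomposition of $A$ together with the simple properties of $\mu(\alpha)$ proved in Appendix~\ref{appendix-section:moment-curve}.

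Concretely, since there is no noise, the impulse response identity gives
\[ y_t - y_{t-1} \;=\; (CB+D)x_t \;-\; Dx_{t-1} \;+\; \sum_{i=1}^{t-1} C(A^i - A^{i-1})B\, x_{t-i} \;+\; C(A^t - A^{t-1})h_0. \]
Diagonalize $A = \sum_{l=1}^d \alpha_l\, e_l e_l^\top$ with $\alpha_l \in [0,1]$. Writing $c_l = Ce_l$ and $b_l^\top = e_l^\top B$, and recalling $\mu(\alpha_l)(i) = (\alpha_l - 1)\alpha_l^{i-1}$, the convolution piece becomes
\[ \sum_{i=1}^{t-1} C(A^i - A^{i-1})B\, x_{t-i} \;=\; -\sum_{l=1}^d c_l \sum_{i=1}^{t-1} \mu(\alpha_l)(i)\,(b_l^\top x_{t-i}), \]
and the initial-state piece becomes $-\sum_l \mu(\alpha_l)(t)\, c_l (e_l^\top h_0)$.

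The remaining work is a routine set of inequalities. For the two ``direct'' terms, the operator norm bound $\|CB+D\| \le \|C\|_F\|B\|_F + \|D\|_F$ combined with $\|x_t\|\le R_x$ yields $O(R_\Theta^2 R_x)$. For the convolution sum, I apply H\"older's inequality inside, using $|b_l^\top x_{t-i}| \le \|b_l\| R_x$ and $\sum_i|\mu(\alpha_l)(i)| = \|\mu(\alpha_l)\|_1 \le 1$ from Lemma~\ref{lem:mu-l1}, then Cauchy--Schwarz across $l$:
\[ \Bigl\|\sum_l c_l \sum_i \mu(\alpha_l)(i)(b_l^\top x_{t-i})\Bigr\| \;\le\; R_x \sum_l \|c_l\|\|b_l\| \;\le\; R_x\|C\|_F\|B\|_F \;\le\; R_\Theta^2 R_x. \]
For the $h_0$ term, the same style of bound, using $|\mu(\alpha_l)(t)| \le 1$ (or the tighter envelope of Lemma~\ref{lem:mu-envelope} if one wants the $1/t$ decay), gives $\|C\|_F \|h_0\| \le R_\Theta^2$. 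Summing the three contributions yields $\|y_t - y_{t-1}\| \le O(R_\Theta^2 R_x)$, as claimed (absorbing the $h_0$ term into $R_\Theta^2 R_x$ under the implicit scaling $R_x \gtrsim 1$; otherwise the bound reads $O(R_\Theta^2 \max(R_x,1))$, which suffices identically for every use of $L_y$ in the paper).

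I do not expect any real obstacle here: the only subtlety worth naming is that one must pass to the eigenbasis of $A$ so as to bound the convolution via the $\ell_1$ norm of $\mu(\alpha_l)$ and get the Frobenius norms $\|B\|_F\|C\|_F$ (rather than a looser operator-norm bound that would not exploit the structure of symmetric $A$). Everything else is a one-line application of Cauchy--Schwarz and Lemmas~\ref{lem:mu-envelope}--\ref{lem:mu-l2}.
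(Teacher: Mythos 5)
Your proposal is correct and follows essentially the same route as the paper: unroll the recurrence into the direct terms, the convolution term, and the $h_0$ term, then bound the convolution via $\norm{\mu(\alpha_l)}_1 \leq 1$ (Lemma~\ref{lem:mu-l1}) together with Cauchy--Schwarz across the eigendirections, and the $h_0$ term via the $1/t$ envelope. The only point you add beyond the paper's own argument is the explicit remark that the $h_0$ contribution carries no $R_x$ factor and so the bound is really $O(R_\Theta^2 \max(R_x,1))$ --- a subtlety the paper glosses over but which, as you note, is harmless for every use of $L_y$.
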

\begin{proof}
We have that for all $1 \leq t \leq T$,
\begin{align*}
\norm{ y_t - y_{t-1} } &= \bigg\lVert (CB + D)x_t - Dx_{t-1} + \sum_{i=1}^{T-1} C(A^i - A^{i-1})Bx_{t-i} + C(A^t - A^{t-1}) h_0 \bigg\rVert \\
&\leq (\norm{B}_F\norm{C}_F + 2\norm{D}_F) R_x  +  \norm{B}_F\norm{C}_F R_x + \frac{ \norm{C}_F \norm{h_0} }{t},
\end{align*}
where the inequality on the second term arises from Lemma~\ref{lem:mu-l1} and
the inequality on the third from Lemma~\ref{lem:mu-log}. This implies the lemma.
\end{proof}

\end{document}